\documentclass{article}

\PassOptionsToPackage{colorlinks=true,linkcolor=black,citecolor=black,urlcolor=blue!50!black}{hyperref}
\usepackage[preprint]{log_2026}
\usepackage[numbers,sort&compress]{natbib}

\usepackage[utf8]{inputenc}
\usepackage[T1]{fontenc}
\usepackage{url}
\usepackage{booktabs}
\usepackage{array}
\usepackage{amsfonts}
\usepackage{amsmath}
\usepackage{amssymb}
\usepackage{amsthm}
\usepackage{mathtools}
\usepackage{microtype}
\usepackage{xcolor}
\usepackage{colortbl}
\usepackage{pifont}
\usepackage{graphicx}
\usepackage{tikz}
\usetikzlibrary{arrows.meta,positioning,fit,calc,backgrounds}

\usepackage{algorithm}
\usepackage{algpseudocode}
\usepackage{enumitem}
\usepackage{float}
\usepackage{subcaption}
\usepackage{placeins}
\newtheorem{theorem}{Theorem}
\newtheorem{proposition}[theorem]{Proposition}
\newtheorem{lemma}[theorem]{Lemma}
\newtheorem{corollary}[theorem]{Corollary}

\theoremstyle{remark}
\newtheorem{remark}{Remark}

\newcommand{\R}{\mathbb{R}}
\newcommand{\diag}{\operatorname{diag}}
\newcommand{\LN}{\operatorname{LN}}
\newcommand{\clamp}{\operatorname{clamp}}
\newcommand{\ReLU}{\operatorname{ReLU}}
\newcommand{\Lip}{\operatorname{Lip}}
\definecolor{tabrowblue}{HTML}{EEF5FC}
\definecolor{tabrowblueStrong}{HTML}{DDECF8}
\definecolor{tabrowgray}{HTML}{F4F4F4}
\definecolor{tabblue}{HTML}{1F5E9C}
\definecolor{tabred}{HTML}{C93D32}
\definecolor{tabgray}{HTML}{5F6368}

\newcommand{\tabmuted}[1]{\textcolor{tabgray}{#1}}

\newcolumntype{L}[1]{>{\raggedright\arraybackslash}p{#1}}

\title{Placing Degree Scales After LayerNorm}

\author{%
  Yash Vardhan Tomar \\
  Purdue University \\
  \texttt{tomar4@purdue.edu} \\
  \And
  Aryav Das \\
  Park Tudor High School \\
  \texttt{aryav30das@gmail.com} \\
}

\begin{document}

\maketitle

\begin{abstract}
Graph neural networks (GNNs) are widely used to learn node-selection policies on graphs, and most stack graph attention (GAT) blocks with LayerNorm. On degree-sensitive tasks, LayerNorm tends to remove the degree signal these models need to rank nodes. Much recent work addresses this by redesigning normalizers or aggregators, which changes what these components compute but does not ask where, relative to LayerNorm, a degree scale should be applied. In this paper, we show that the answer follows from a single algebraic fact about LayerNorm. When a positive per-node scale is applied before LayerNorm, LayerNorm divides it out, and it never reaches the model's output. Applied after LayerNorm, the same scale comes through and reaches the score head as magnitude. From this placement rule we derive PostDeg, a parameter-free inverse-degree scale that we add as the single change to a fixed GAT backbone. PostDeg multiplies each node's normalized representation by an inverse function of its degree, and we compare it against controls in the same position. At every evaluation size, PostDeg improves over the LayerNorm backbone on influence maximization, network dismantling, and maximum independent set, and these controls show where the improvement comes from. The same scale before LayerNorm stays at the backbone, as the absorption identity predicts, and a constant scale after LayerNorm stays there too on all but the most heavy-tailed graphs, so the effect needs both the position after LayerNorm and a degree-dependent scale. The exact form of the scale matters much less, so we recommend PostDeg, which needs no tuning.
\end{abstract}

\section{Introduction}
\label{sec:intro}

Graph neural networks (GNNs) are a standard tool for learning node-selection policies on graphs such as influence maximization (InfluMax), network dismantling (Dismantle), maximum independent set (MIS), and epidemic containment (Epidemic) \citep{khalil2017learning,kool2019attention,karalias2020erdos,cappart2023combinatorial}. Most of these models stack residual graph attention network (GAT) blocks \citep{velickovic2018gat,brody2022gatv2} with LayerNorm \citep{ba2016layernorm}. LayerNorm normalizes each node to stabilize training and is a standard part of modern deep GNNs.

On degree-sensitive node selection, these policies should rank nodes by topology, since high-degree hubs spread an epidemic faster and low-degree nodes are easier to add to an independent set. LayerNorm tends to suppress the degree signal they need. One line of GNN-normalization work has diagnosed this and answered it with better normalizers. GraphNorm \citep{cai2021graphnorm} rescales graph-wise feature statistics to speed up training. PairNorm \citep{zhao2020pairnorm} holds the total pairwise feature distance fixed against oversmoothing, and NodeNorm \citep{zhou2021nodenorm} rescales each node's features toward the same end. Another line of work builds degree into message passing directly, as in Principal Neighbourhood Aggregation (PNA) \citep{corso2020pna}, which scales each aggregator by a function of node degree.

All of these redesign \emph{what} the normalizer or aggregator computes. None asks \emph{where}, inside the residual block, a degree signal can get through normalization. Whether a positive per-node scalar survives or is absorbed depends on which side of LayerNorm it is on, and, to our knowledge, this \emph{placement} question has gone unexamined for GNNs. We found this gap surprising, since both LayerNorm and degree features are staples of modern GNNs. Recent work on large language models raises the same placement question in a different domain~\citep{wang2026scalevec}.

The answer follows from a single algebraic fact about LayerNorm. When a positive multiplier $a_i$ is applied before LayerNorm, it is divided out, $\mathrm{LN}(a_i z_i)\approx \mathrm{LN}(z_i)$ \citep{ba2016layernorm,salimans2016weightnorm,press2017weighttying}, so it does not reach a score head at the output. The same multiplier applied after LayerNorm reaches it as magnitude. To reach the score head, a degree signal must enter after LayerNorm (Figure~\ref{fig:placement-summary}). From this rule we derive PostDeg, a parameter-free inverse-degree scale that we add as the single change to a fixed GAT backbone. PostDeg improves over the LayerNorm backbone on every node-selection task and at every evaluation size, and controls in the same position show where the improvement comes from. The same scale placed before LayerNorm stays at the backbone, as the identity predicts, and two learned variants (PostDeg-L-FG, PostDeg-L-Adaptive) test whether the slot needs added capacity.

\begin{figure}[t]
\centering
\definecolor{magLo}{HTML}{264653}   
\definecolor{magMid}{HTML}{2A9D8F}  
\definecolor{magHi}{HTML}{E9C46A}   
\definecolor{magFlat}{HTML}{9BA3AE} 
\definecolor{edgegrey}{HTML}{BFC4CB}
\newcommand{\panelcoords}{%
  \coordinate (H)  at (0,0);
  \coordinate (P1) at (-1.05,1.30);
  \coordinate (P2) at (-2.35,1.05);
  \coordinate (P3) at (-2.95,-0.10);
  \coordinate (P4) at (-2.35,-1.20);
  \coordinate (P5) at (-1.05,-1.35);
  \coordinate (C1) at (1.55,1.05);
  \coordinate (C2) at (2.80,0.0);
  \coordinate (C3) at (1.55,-1.05);%
}
\newcommand{\paneledges}{%
  \draw[edgegrey,line width=1.1pt]
    (H)--(P1) (P1)--(P2) (P2)--(P3) (P3)--(P4) (P4)--(P5) (P5)--(H)
    (H)--(C1) (H)--(C2) (H)--(C3) (C1)--(C2) (C2)--(C3) (C3)--(C1);%
}
\resizebox{\linewidth}{!}{%
\begin{tikzpicture}[
  nd/.style={circle, draw=black!45, line width=0.7pt, minimum size=8.6mm, inner sep=0pt},
  ttl/.style={font=\sffamily\Large\bfseries},
  lbl/.style={font=\normalsize\bfseries}]
  \begin{scope}[shift={(0,0)}]
    \panelcoords \paneledges
    \foreach \p in {P1,P2,P3,P4,P5}{\node[nd,fill=magLo,text=white,lbl] at (\p) {2};}
    \foreach \p in {C1,C2,C3}{\node[nd,fill=magMid,text=white,lbl] at (\p) {3};}
    \node[nd,fill=magHi,text=black,lbl] at (H) {5};
    \node[ttl] at (-0.1,2.4) {(a)~Graph structure};
  \end{scope}
  \begin{scope}[shift={(7.6,0)}]
    \panelcoords \paneledges
    \foreach \p in {P1,P2,P3,P4,P5,C1,C2,C3,H}{\node[nd,fill=magFlat] at (\p) {};}
    \node[ttl] at (-0.1,2.4) {(b)~After LayerNorm};
  \end{scope}
  \begin{scope}[shift={(15.2,0)}]
    \panelcoords \paneledges
    \foreach \p in {P1,P2,P3,P4,P5}{\node[nd,fill=magHi] at (\p) {};}
    \foreach \p in {C1,C2,C3}{\node[nd,fill=magMid] at (\p) {};}
    \node[nd,fill=magLo] at (H) {};
    \node[ttl] at (-0.1,2.4) {(c)~After PostDeg};
  \end{scope}
\end{tikzpicture}%
}
\vspace{3mm}
\caption{\textbf{Why placement matters (schematic).} The same graph is shown three times, with node positions fixed and colour encoding, schematically, the representation magnitude that would reach the score head; the colours illustrate the placement argument and do not show measured norms. \textbf{(a)}~magnitude tracks degree, and the numbers on the nodes give those degrees. \textbf{(b)}~LayerNorm normalizes each node to a single magnitude, so the degree contrast the scorer needs is gone. \textbf{(c)}~multiplying by $s_i=(\widehat c_i+\varepsilon)^{-1/2}$ restores a magnitude that grows as $1/\sqrt{d_i}$, so the low-degree nodes again have the largest magnitudes.}
\label{fig:placement-summary}
\end{figure}

\paragraph{Contributions.}
\begin{itemize}[leftmargin=*,topsep=2pt,itemsep=1pt]
  \item We identify the placement rule for a positive per-node scalar in a LayerNorm block. Applied \emph{before} LayerNorm, it is divided out in the forward pass ($\mathrm{LN}(a_i z_i)\approx\mathrm{LN}(z_i)$, exact as $\varepsilon_{\rm LN}\to 0$, Proposition~\ref{prop:absorption-exact}), so a degree signal must enter \emph{after} LayerNorm to reach the score head (Section~\ref{sec:placement}).
  \item We confirm the rule with a same-position control. The same parameter-free inverse-degree operator is inert before LayerNorm and improves over the LN backbone after it, on every node-selection task and evaluation size, and a constant post-LayerNorm scale shows the effect also needs the degree contrast (Section~\ref{sec:results-main}).
  \item We introduce PostDeg, the parameter-free post-LayerNorm instance, report its per-task margins and where they are size-dependent, and show the exact form of the scale matters little (Sections~\ref{sec:results-main},~\ref{sec:results-ablation}).
\end{itemize}

\section{Related Work}
\label{sec:related}

BatchNorm, LayerNorm, InstanceNorm, GraphNorm, PairNorm, and NodeNorm answer a feature-statistic question about which activations should be centered, scaled, or constrained \citep{ioffe2015batchnorm,ba2016layernorm,ulyanov2016instance,cai2021graphnorm,zhao2020pairnorm,zhou2021nodenorm}. GraphNorm is our closest reference point in GNN normalization. It normalizes graphwise feature statistics for optimization, whereas PostDeg acts on magnitude after LayerNorm. The controlled main grid uses BatchNorm1d over node features within each processed graph, Extra LayerNorm in the post-block slot, InstanceNorm, scalar-shift GraphNorm, PairNorm, GraphScalar, PostDeg, PostDeg-L-FG, PostDeg-L-Adaptive, and the LN backbone (Table~\ref{tab:main_results}). NodeNorm appears only as a supplemental diagnostic because its comparison pipeline uses a different baseline configuration (Appendix~\ref{app:cross-backbone}). DiffGroupNorm \citep{zhou2020dgn} alters the per-block normalization slot with learned group-wise feature statistics, and we use it only for positioning in Table~\ref{tab:normalizer_design}. Closest to PostDeg in \emph{motivation} is ResNorm \citep{liang2023resnorm}, which reshapes the node-wise standard-deviation distribution during normalization specifically to help low-degree tail nodes. The distinction is positional, since ResNorm rescales \emph{within} the normalization step, whereas PostDeg multiplies the already-normalized output by a closed-form degree function in the post-LayerNorm slot. Degree-specific architectures such as DEMO-Net \citep{wu2019demonet} likewise treat degree as a modeling axis inside the layer, whereas PostDeg acts on post-normalization magnitude. Closest to the ``which normalization, and where'' framing is the learned combination of node-, adjacency-, graph-, and batch-level normalizations of~\citep{chen2020learngraphnorm}. PostDeg instead fixes one closed-form degree scale and studies only its placement relative to LayerNorm. Post-2021 graph-adaptive normalizers such as GRANOLA \citep{eliasof2024} learn a graph-conditioned normalization, which again changes what the normalizer computes. The closest work outside GNNs is \citep{wang2026scalevec}, on scale-vector placement around linear maps in language models, and PostDeg is the graph analogue with a closed-form degree scale in the post-LayerNorm slot.

Degree-aware aggregators add degree inside message passing. The graph convolutional network (GCN) uses the symmetric normalization $D^{-1/2}AD^{-1/2}$ of the adjacency matrix $A$ by the degree matrix $D$ \citep{kipf2017gcn}. GraphSAGE (SAGE) averages over neighborhoods in its mean aggregator \citep{hamilton2017graphsage}, and PNA scales each aggregator by $\delta(d_i)=\log(d_i+1)/\bar\delta$ inside the message-passing step \citep{corso2020pna}. PNA's degree scaler changes messages before they are combined and normalized, so the LayerNorm in our backbone sees a representation with degree already blended into the features. PostDeg leaves aggregation untouched and acts on per-node magnitudes after LayerNorm. The PNA test confirms this, since PostDeg + PNA is paired-equivalent to PNA without PostDeg on InfluMax and Dismantle ($|\Delta\%|<1\%$, Section~\ref{sec:results-pna} and Appendix~\ref{app:cross-backbone}, Table~\ref{tab:pna-redundancy}).

Structural encodings add degree, centrality, shortest-path, random-feature, subgraph, or transformer positional information as feature content \citep{dehmamy2019understanding,sato2020random,bouritsas2022gsn,ying2021graphormer,rampasek2022gps}. Content features such as $\log d_i$ enter before LayerNorm and are re-centered and re-scaled with the rest of the representation. PostDeg instead changes the magnitude after LayerNorm, on an already-encoded representation. Appendix~\ref{app:content-vs-magnitude} formalizes the content-versus-magnitude distinction and reports the $\log d_i$ baseline.

We ask where a positive per-node scalar survives LayerNorm. The algebraic identity in Eq.~\eqref{eq:intro-ln-absorption} is standard in the literature on LayerNorm, RMSNorm (root-mean-square normalization), and weight tying \citep{ba2016layernorm,zhang2019rmsnorm,xiong2020transformerln}. We use it as a placement rule for GNN policies trained on combinatorial optimization tasks \citep{khalil2017learning,kool2019attention,karalias2020erdos,joshi2022learning,cappart2023combinatorial}. Task-specific learned methods exist for these problems, such as ToupleGDD \citep{chen2022touplegdd} and DeepIM \citep{ling2023deepim} for influence maximization. They are complementary to our work, since we study a normalization slot inside a fixed backbone. Independent of GNNs, degree-based influence-maximization heuristics show advantages that rise with degree heterogeneity \citep{xie2023degree}, consistent with our within-task finding (\S\ref{sec:mechanism}). All main experiments use a fixed GAT backbone \citep{velickovic2018gat,brody2022gatv2} so the normalization slot is the variable under test.

\section{Method}
\label{sec:method}

The method has two parts. First, we identify the normalization slot where a positive per-node scalar survives LayerNorm. Second, we place a minimal degree scale in that slot and test whether additional capacity is necessary. Figure~\ref{fig:method-overview} shows the controlled setup. We compute graph-level statistics once, use the same post-LayerNorm slot in each GAT layer, and change only that slot while the task heads, budgets, data generator, and seeds stay fixed.

\begin{figure}[t]
\centering
\includegraphics[width=\linewidth]{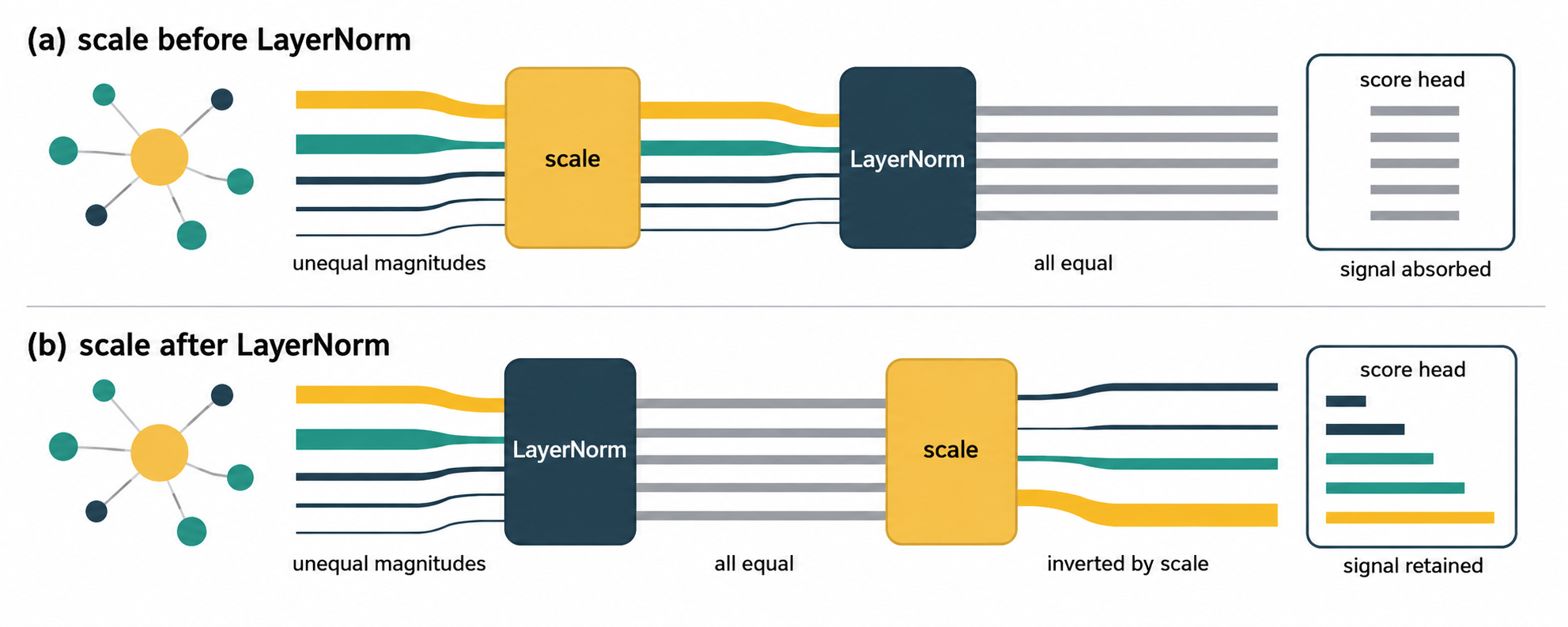}
\vspace{1.5mm}
\caption{\textbf{Method overview.} All runs share one backbone (node state $h_i$, residual GAT block, LayerNorm, score head), and the only thing that changes is where a positive per-node scalar multiplies the representation. \textbf{(a)}~Placed before LayerNorm, the scalar $a_i$ is divided out by the normalizer in the forward pass, so $\mathrm{LN}(a_i z_i)\approx\mathrm{LN}(z_i)$ (Eq.~\ref{eq:intro-ln-absorption}, exact as $\varepsilon_{\rm LN}\to 0$), and it does not reach the score head as magnitude, so the per-node magnitudes arriving at the scorer stay flat. \textbf{(b)}~Placed after LayerNorm, the scalar $s_i$ multiplies the normalized representation and reaches the head as representation magnitude, so a positive per-node $s_i$ becomes a usable degree-conditioned signal. The backbone, budgets, data generator, and seeds stay fixed across the main grid, and only this post-LayerNorm slot varies. The variants tested there (LN backbone, GraphScalar, Extra LayerNorm, PostDeg, PostDeg-L-FG, PostDeg-L-Adaptive) are defined in Section~\ref{sec:variants}.}
\label{fig:method-overview}
\end{figure}

\subsection{Why placement is post-LayerNorm}
\label{sec:placement}

With LayerNorm stabilizer $\varepsilon_{\rm LN}$, per-coordinate gain/bias $(g,b)$, elementwise product $\odot$, and $\mu(z_i),\sigma^2(z_i)$ the feature-wise mean and variance of a node representation $z_i$, the LayerNorm of a per-node-multiplied input $a_i z_i$ is
\begin{equation}
  \LN(a_i z_i) = g\odot \frac{z_i - \mu(z_i)}{\sqrt{\sigma^2(z_i) + \varepsilon_{\rm LN}/a_i^2}} + b,
  \label{eq:intro-ln-absorption}
\end{equation}
so multiplying a node representation by $a_i>0$ before LayerNorm changes only the stabilizer term, and $\LN(a_i z_i)=\LN(z_i)$ exactly in the zero-stabilizer limit \citep{ba2016layernorm}. With $\varepsilon_{\rm LN}>0$, the relative deviation between the two is at most $\varepsilon_{\rm LN}\lvert 1-a_i^{-2}\rvert/(2\sigma^2(z_i))$, negligible for any activation variance bounded away from zero (Proposition~\ref{prop:absorption-exact}, Appendix~\ref{app:variance}). Within the LayerNorm block in Eq.~\eqref{eq:intro-ln-absorption}, a positive degree multiplier survives on the post-LN side. Interestingly, this settles the placement question with arithmetic already standard for LayerNorm. The remaining question is what operator should go in that slot.

\paragraph{Placement rule.}
The absorption identity applies to positive per-node scalar multipliers such as degree, PageRank, $k$-core, clustering, or learned positive attention scores. Content features behave differently, since a concatenated $\log d_i$ feature is re-centered and re-scaled with the rest of the representation, so we treat content baselines separately in Appendix~\ref{app:content-vs-magnitude}. The rule is therefore positional, and a positive degree scalar belongs after LayerNorm if the score head should read it, whatever form the scalar takes.

\subsection{The operators we compare}
\label{sec:variants}
\paragraph{Notation.} We write $d_i$ for the degree of node $i$ and $\widehat c_i=\max(d_i,1)/\max_j\max(d_j,1)\in(0,1]$ for its clipped normalized degree. The learned variants write $c_i=d_i/\max_j d_j$ clipped to $[10^{-6},1]$, which matches $\widehat c_i$ except at isolated nodes. $\LN$ denotes the residual-block LayerNorm (LN), and $s_i>0$ the per-node scale the slot applies to the LayerNorm output. $\lambda_2$ and $\lambda_{\max}$ are the second-smallest and largest eigenvalues of the normalized Laplacian, and $\varepsilon=10^{-8}$, $\varepsilon'=10^{-4}$ are small numerical stabilizers for the degree and spectral denominators. All methods below share one architecture and one training setup and differ only in this slot.

\paragraph{PostDeg.} The simplest choice, and the one we adopt as the default, is a fixed inverse-degree scale with exponent $1/2$, the symmetric graph-convolution value, and it needs no tuning. Given the residual output of layer $\ell-1$,
\begin{equation}
  H^{(\ell)} = (\widehat c+\varepsilon)^{-1/2}\odot \LN\!\left(H^{(\ell-1)}+\mathrm{GAT}_\ell(H^{(\ell-1)},A)\right),\qquad \varepsilon=10^{-8},
  \label{eq:postdeg-formula}
\end{equation}
where $\odot$ broadcasts $s_i=(\widehat c_i+\varepsilon)^{-1/2}$ across the feature dimension and isolated nodes take degree $1$. PostDeg has no learned parameters and adds $\mathcal{O}(|V|)$ work per layer, within a $30$--$35$ min walltime envelope against the LN backbone's $30$ min (Appendix~\ref{app:reproducibility}). We kept PostDeg deliberately minimal, so that any measured effect can be credited to the slot itself.

\paragraph{Variants and controls (all in the same slot).}
\begin{itemize}[leftmargin=*,topsep=2pt,itemsep=1pt]
  \item \textbf{LN backbone:} identity in the slot (the residual block still contains its own LayerNorm).
  \item \textbf{PostDeg-L-FG / PostDeg-L-Adaptive:} learned scales $s_i=\alpha(c_i+\varepsilon)^{-(\beta+\delta\lambda_2)}(\lambda_{\max}+\varepsilon')^{-1/2}$ with learned coefficients $\alpha,\beta,\delta$, blended with the identity through a gate $m_i\in(0,1)$, so that $\widetilde h_i=\bigl(m_i\,s_i+1-m_i\bigr)\,\bar h_i$, where $\bar h_i=\LN(h_i)$ is the LayerNorm output. FG uses one scalar gate. Adaptive uses a per-node gate $m_i=\sigma(\mathrm{MLP}(c_i))$, with $\sigma$ the logistic sigmoid and $\mathrm{MLP}$ a small multi-layer perceptron. Parameter domains, clamps, and initialization are in Appendix~\ref{app:theory}, Eq.~\eqref{eq:dsn-formula}.
  \item \textbf{GraphScalar:} a single per-graph multiplier $s=(\lambda_{\max}+\varepsilon')^{-1/2}$, with no per-node dependence.
  \item \textbf{Extra LayerNorm:} a second LayerNorm in the slot (added normalization, no degree signal).
  \item \textbf{Feature-statistic baselines:} GraphNorm, PairNorm, InstanceNorm, and BatchNorm placed in the slot.
\end{itemize}
Two clarifications matter for the equivalence claim in Section~\ref{sec:results-ablation}. First, PostDeg is the \emph{limiting form} of the learned family ($\alpha{=}1$, $\beta{=}1/2$, $\delta{=}0$, gate$\to 1$), but the two are implemented as separate operators (PostDeg an ungated scale, the learned variants a gated blend), and the learned variants train freely from their own initialization. Second, the per-graph factor $(\lambda_{\max}+\varepsilon')^{-1/2}$ is applied \emph{after} LayerNorm, where for a fixed graph it rescales each node equally and cannot change node ranking, but because it varies across graphs it stays separate from the shared LayerNorm gain. The implementation clamps and the inactive $\varepsilon$ floor are in Appendix~\ref{app:reproducibility}.

\subsection{What the placement rule implies}
\label{sec:method-predictions}

Formal statements and proofs are in Appendix~\ref{app:theory}, and the experiments need only the degree-separation ratio. For two nodes with $\widehat c_i>\widehat c_j$ and exponent $\gamma>0$,
\begin{equation}
  R_{ij}:=\frac{s_j}{s_i}=\left(\frac{\widehat c_i+\varepsilon}{\widehat c_j+\varepsilon}\right)^{\!\gamma}>1,
  \label{eq:rij}
\end{equation}
so low-degree nodes get larger post-LayerNorm magnitude, the slope of $\log s_i$ against $\log d_i$ is negative, and the effect weakens as degree heterogeneity vanishes. These relations imply concrete expectations that we test. Pre-LayerNorm scalars are absorbed, and PostDeg helps only when degree-conditioned magnitude aligns with the reward and degrees are heterogeneous enough to vary. Per-graph scalars cannot recover node ranking, and a backbone that already carries degree (PNA) leaves little for PostDeg to add.

\section{Experiments}
\label{sec:experiments}

\subsection{Experimental Setup}
\label{sec:setup}
\textbf{Tasks.} We evaluate four node-selection tasks and one graph-classification null task. InfluMax runs influence maximization \citep{kempe2003maximizing} on Barab\'asi--Albert (BA) graphs \citep{barabasi1999}, Dismantle runs network dismantling \citep{braunstein2016network} on stochastic block model (SBM) graphs \citep{holland1983}, Epidemic on SBM contact graphs, MIS on SBM graphs, and DD graph classification on the TU protein benchmark \citep{morris2020tudataset}. Per-task degree coefficient of variation (CV) ranges from $0.33$ to $0.81$ (Table~\ref{tab:task-graph-stats}).

\begin{table}[h]
\centering
\caption{Per-task degree distribution summary statistics on the training-graph distribution (200 graphs per task), with the PostDeg gain over the LN backbone paired by 10 seeds at the largest evaluation size. The gain does \emph{not} track degree dispersion across tasks. MIS and DD have almost the same coefficient of variation ($0.33$) and skewness ($0.35$ versus $0.27$), yet MIS carries the largest gain and DD the only negative one, and across the three positive tasks the rank correlation between dispersion and gain is negative (Spearman $-0.5$, \S\ref{sec:mechanism}). A heavier-tail trend appears only \emph{within} MIS, by graph family. Epidemic's value is at the largest size. It is $+3.7\%$ at the smallest evaluation size and fades with size (Table~\ref{tab:size-profile}).}
\label{tab:task-graph-stats}
\footnotesize
\setlength{\tabcolsep}{6pt}
\renewcommand{\arraystretch}{1.10}
\begin{tabular}{lccccc}
\toprule
Task (graph family) & mean $d$ & CV & max $d$ & skewness & PostDeg $\Delta\%$ \\
\midrule
\rowcolor{tabrowblueStrong}
\textbf{InfluMax (BA)}   & 5.82  & 0.81 & 45  & \textbf{2.90} & $+3.5\%$ \\
\textbf{Dismantle (SBM)} & 6.32  & 0.42 & 20  & 0.46          & $+2.5\%$ \\
\textbf{MIS (SBM)}       & 9.95  & 0.33 & 27  & 0.35          & $+5.6\%$ \\
\midrule
\rowcolor{tabrowgray}
Epidemic (SBM contact)   & 6.29  & 0.41 & 20  & 0.45          & $+0.1\%$ \\
\rowcolor{tabrowgray}
DD (TU proteins)         & 4.99  & 0.33 & 15  & \tabmuted{0.27} & $-0.7\%$ \\
\bottomrule
\end{tabular}
\end{table}

\textbf{Same-slot comparison.} Each method uses the same slot $\mathrm{Norm}(\cdot):\R^{n\times d}\!\to\!\R^{n\times d}$ applied to the residual GAT output $Z^{(\ell)}=H^{(\ell-1)}+\mathrm{GAT}_{\ell}(H^{(\ell-1)},A)$. The LN backbone is the identity in this post-block slot, and the residual block still contains LayerNorm before the slot under study. GraphScalar is a graphwise multiplier $1/\sqrt{\lambda_{\max}+\varepsilon'}$ derived from the normalized Laplacian (no per-node degree dependence, so it tests whether a graph-level spectral term explains the gain). PNA and NodeNorm relax this slot and are reported separately in Appendix~\ref{app:cross-backbone}.

\textbf{Architecture.} All methods use a 3-layer GAT with 4 attention heads, hidden dimension 128, dropout 0.1, and a task-specific MLP head.

\textbf{Training.} 200 epochs with online graph generation using each task reward as the training signal (spread for InfluMax, fragmentation for Dismantle, negative infected fraction for Epidemic, and valid selected-set size for MIS). Training sizes are $n=100$ for InfluMax and $n=150$ for the others. 10 independent seeds per method and task. Selection budgets are $K=\lfloor 0.1n\rfloor$ for InfluMax, $\lfloor 0.2n\rfloor$ for Dismantle and Epidemic, and $\lfloor 0.3n\rfloor$ for MIS.

\textbf{Evaluation.} The learned score head is used greedily (Algorithm~\ref{alg:greedy}, Appendix~\ref{app:reproducibility}). Evaluation sizes extend to $n=300$ for the transfer check. We report mean $\pm$ seed standard deviation, paired Wilcoxon tests, and two one-sided tests (TOST) for equivalence in Appendix~\ref{app:experiments}. At 10 seeds, the paired Wilcoxon test detects effects of size $\sim 0.7\sigma$ at $\alpha=0.05$ (one-sided), and TOST detects equivalence within $\pm 1\%$ relative. Cross-backbone numbers on GCN, the graph isomorphism network (GIN), GraphSAGE, and PNA at training-graph size are in Appendix~\ref{app:cross-backbone}.

\subsection{The effect and its controls}
\label{sec:results-main}

PostDeg improves over the LN backbone on every node-selection task and at every evaluation size. The improvement is unanimous across the 10 seeds, except at the two middle MIS-SBM sizes, and two controls in the same position show where it comes from (Table~\ref{tab:placement-main}). The same operator applied before LayerNorm stays at the backbone with no cell significant, which confirms the forward-pass absorption of Proposition~\ref{prop:absorption-exact}. The informative control is a constant scale after LayerNorm, which keeps the slot but drops the degree contrast. It stays at the backbone on InfluMax, Dismantle, and MIS-SBM, so on those families the effect is the degree contrast, and it recovers a third to a half of the advantage on the heavy-tailed MIS-BA family, so part of that effect is uniform amplification (Appendix~\ref{app:review-controls}). The before-LayerNorm result is a forward-pass statement, and a pre-LayerNorm scalar still rescales gradients and the effective learning rate during training \citep{vanlaarhoven2017,arora2019,lyle2024}. In our view this is the central result of the paper, since it holds on every node-selection task and size and survives the same-slot controls.

\begin{table}[t]
\centering
\caption{Placement decides the effect. Relative change over the LN backbone (\%, range across evaluation sizes, 10 seeds, paired) for one parameter-free inverse-degree operator applied before LayerNorm, a constant scale applied after LayerNorm, and PostDeg (the same operator after LayerNorm). The before-LayerNorm arm stays at the backbone on every task and size, with no cell significant. PostDeg improves on every task and size, unanimous across seeds except at the two middle MIS-SBM sizes. The constant scale stays at the backbone except on the heavy-tailed MIS-BA family, so the effect needs both the post-LayerNorm slot and the degree contrast. Full per-size numbers with wins are in Appendix Table~\ref{tab:review-placement}.}
\label{tab:placement-main}
\footnotesize
\setlength{\tabcolsep}{8pt}
\renewcommand{\arraystretch}{1.15}
\begin{tabular}{lccc}
\toprule
Task (family) & before LN & constant, after LN & PostDeg (after LN) \\
\midrule
InfluMax (BA)   & $+0.1$ to $+0.2$ & $-0.2$ to $+0.7$ & $+2.2$ to $+3.3$ \\
Dismantle (SBM) & $-0.1$ to $+0.6$ & $-0.1$ to $+0.3$ & $+2.4$ to $+7.7$ \\
MIS (SBM)       & $-0.4$ to $-0.1$ & $-0.1$ to $+0.3$ & $+0.5$ to $+5.8$ \\
MIS (BA)        & $-0.2$ to $-0.1$ & $+2.7$ to $+3.6$ & $+5.6$ to $+9.7$ \\
\bottomrule
\end{tabular}
\end{table}

\begin{table}[h]
\centering
\caption{Relative improvement over the LN backbone on node-selection tasks (\%, mean\,$\pm$\,seed std, 10 seeds). The parameter-free \textbf{PostDeg} row (top) is the recommended default, and the TOST column marks methods statistically equivalent to it on these pooled evaluations. Epidemic is shown in gray. These are largest-size values, where the effect is near zero, and it reaches $+3.7\%$ at the smallest evaluation size (Table~\ref{tab:size-profile}). $W_{10}$ is paired-seed wins out of 10 (mean across the three positive tasks).}
\label{tab:ablation_improvement}
\footnotesize
\setlength{\tabcolsep}{3pt}
\renewcommand{\arraystretch}{1.15}
\begin{tabular}{lcccccc}
\toprule
Method & InfluMax & Dismantle & \tabmuted{Epidemic} & MIS & $W_{10}$ & TOST \\
\midrule
\rowcolor{tabrowblueStrong}
\textbf{PostDeg (ours)} & +3.50\,$\pm$\,0.55 & +2.53\,$\pm$\,0.61 & \tabmuted{+0.08\,$\pm$\,0.08} & +5.58\,$\pm$\,0.85 & 10/10 & \ding{52}~(ref.) \\
PostDeg-L-FG & +3.50\,$\pm$\,0.60 & +2.65\,$\pm$\,0.35 & \tabmuted{+0.17\,$\pm$\,0.15} & +6.09\,$\pm$\,1.09 & 10/10 & \ding{52} \\
PostDeg-L-Adaptive & +3.61\,$\pm$\,0.90 & +2.61\,$\pm$\,0.30 & \tabmuted{+0.15\,$\pm$\,0.13} & +5.85\,$\pm$\,1.12 & 10/10 & \ding{52} \\
\midrule
\multicolumn{7}{l}{\emph{Graphwise controls}} \\
GraphScalar & +0.31\,$\pm$\,0.74 & $-$0.90\,$\pm$\,0.34 & \tabmuted{+0.01\,$\pm$\,0.11} & $-$0.57\,$\pm$\,1.15 & 6/10 & \ding{56} \\
GraphNorm & +2.58\,$\pm$\,0.60 & $-$1.34\,$\pm$\,1.19 & \tabmuted{$-$0.36\,$\pm$\,0.31} & +4.27\,$\pm$\,1.32 & 7/10 & \ding{56} \\
\addlinespace[3pt]
\multicolumn{7}{l}{\emph{Capacity-only control}} \\
Extra LayerNorm & +0.06\,$\pm$\,0.58 & $-$0.07\,$\pm$\,0.28 & \tabmuted{+0.02\,$\pm$\,0.11} & $-$0.07\,$\pm$\,1.07 & 4/10 & \ding{56} \\
\addlinespace[3pt]
\multicolumn{7}{l}{\emph{Feature-statistic baselines}} \\
PairNorm & +0.80\,$\pm$\,0.83 & $-$1.86\,$\pm$\,1.22 & \tabmuted{$-$0.58\,$\pm$\,0.27} & +2.02\,$\pm$\,1.08 & 6/10 & \ding{56} \\
InstanceNorm & +2.58\,$\pm$\,0.91 & $-$3.57\,$\pm$\,1.66 & \tabmuted{$-$0.37\,$\pm$\,0.19} & +4.02\,$\pm$\,1.52 & 6/10 & \ding{56} \\
BatchNorm & +1.02\,$\pm$\,0.73 & $-$1.41\,$\pm$\,0.70 & \tabmuted{$-$0.04\,$\pm$\,0.14} & +2.89\,$\pm$\,1.29 & 6/10 & \ding{56} \\
\bottomrule
\end{tabular}
\end{table}

Table~\ref{tab:ablation_improvement} reports the size of the post-LayerNorm effect against the same-slot baselines, each measured against the LN backbone and paired by seed. PostDeg gains $+3.5\%$ on InfluMax, $+2.5\%$ on Dismantle, and $+5.6\%$ on MIS at the largest evaluation size, on 10 of 10 seeds. The gain is size-dependent on two of the tasks. On epidemic containment, where the reward is the negative infected fraction under susceptible-infected-recovered (SIR) dynamics, it reaches $+3.7\%$ at the smallest evaluation size and falls to $+0.08\%$ at the largest, and Dismantle follows the same decreasing profile while staying clearly positive (Table~\ref{tab:size-profile}).

Two facts frame these numbers. Degree is excluded from the inputs, so PostDeg reinjects it as post-LayerNorm magnitude, and a log-degree input feature recovers only part of the lift in the same pipeline ($+1.6\%$ against PostDeg's $+2.1\%$ on InfluMax at 5 seeds, Appendix~\ref{app:content-vs-magnitude}). On MIS the trained policy stays $11$ to $23\%$ below a greedy construction \citep{schuetz2022,angelini2023}, so we read MIS as a comparison of normalization slots. Per-seed reward differences are in Table~\ref{tab:per-seed-reward}.

The same table separates the effect from other same-slot changes. GraphNorm is negative on Dismantle ($-1.34\%$ versus $+2.53\%$ for PostDeg), and GraphScalar and Extra LayerNorm stay at or below the backbone, so a graphwise spectral term and extra normalization do not explain the improvement. PairNorm, InstanceNorm, and BatchNorm match PostDeg on some tasks but collapse to the majority-class predictor on DD (Figure~\ref{fig:appendix-transfer-heatmap}). On MIS the margin over the best competing normalizer is size-dependent, and PostDeg is competitive with these tuned normalizers at zero added parameters (Appendix Table~\ref{tab:mis-by-size}). Absolute metrics and per-baseline tests are in Appendix Table~\ref{tab:main_results}.

\subsection{The exact form of the scale matters less}
\label{sec:results-ablation}

Parameter-free PostDeg is the method we recommend, and, to our mild surprise, the exact form of the scale matters little. Two learned same-slot variants (PostDeg-L-FG, PostDeg-L-Adaptive) test whether added capacity in the slot helps, and they are statistically equivalent to PostDeg on the pooled and SBM evaluations (TOST rejects a difference beyond $\pm1\%$ on all tasks, Table~\ref{tab:tost}, and a paired Wilcoxon does not reject equality, Table~\ref{tab:significance}). These variants also carry weight spectral normalization that PostDeg does not, so this is not a clean capacity ablation (Appendix~\ref{app:reproducibility}). The equivalence breaks only when MIS is split by graph family, where the learned exponent beats the fixed $1/2$ on the heavy-tailed BA family, a split that also mixes distribution shift and the added weight normalization (Appendix~\ref{app:stats}, Table~\ref{tab:tost-family}). Our direct $\gamma$-sweep on the parameter-free operator (Appendix~\ref{app:gamma-sweep}) varies the exponent alone. It helps on its own, clearest on MIS, but its BA gain ($+0.7$ to $+1.0\%$, not significant at $10$ seeds) is small, and the learned exponent barely moves from its initialization (Appendix~\ref{app:why-beta}). We keep $\gamma=1/2$ as the no-tuning default, since it matches the GCN-symmetric exponent in $D^{-1/2}AD^{-1/2}$ and needs no search.

\subsection{Does the post-LayerNorm scale reach the score head?}
\label{sec:mechanism}

By construction, the operator's own output is a function of node degree. With $s_i=(\widehat c_i+\varepsilon)^{-\gamma}$, log-log fits of $\ln s_i$ against $\ln d_i$ are uniformly negative with strong correlations, and a linear fit returns a task-dependent correlation between $-0.38$ and $-0.76$ (Appendix Tables~\ref{tab:pearson-all} and~\ref{tab:scale-extreme}). Both are computed on the operator's own scalar factors, upstream of the representation that reaches the score head, so the informative quantity is whether degree still tracks the \emph{representation magnitude} there, after the residual gate, clamp, and learned message passing. We measure it with forward hooks on the trained models. At the score head input, the per-node representation norm shows a clear inverse-degree slope for PostDeg, near $-0.49$ with correlation near $-0.98$ on all three tasks. It stays flat for the LN backbone, a constant post-LayerNorm scale, and a pre-LayerNorm scale (Appendix~\ref{app:review-hooks}). The pre-LayerNorm scale is the direct placement check, since its LayerNorm-input variance is degree-dependent while its score-head norm stays flat. Because the score-head magnitude is nearly a function of degree, PostDeg pushes the policy toward degree-weighted selection, and on Dismantle and Epidemic the trained policy selects much like the degree heuristic and outperforms it (Appendix Table~\ref{tab:policy-vs-heuristic}).

\paragraph{Where the heterogeneity effect appears.} We split MIS by graph family. MIS trains on SBM, so the BA family is out-of-distribution here, and this contrast mixes heterogeneity with distribution shift. Training MIS on BA removes the shift, and PostDeg still gains ($+1.7$ to $+3.9\%$, 10 of 10 seeds, Appendix~\ref{app:review-controls}), so shift does not fully explain it. Across the three positive tasks, though, degree dispersion does not order the improvement, since MIS has the lowest CV ($0.33$) and skewness ($0.35$) yet the largest advantage, and the cross-task rank correlation is negative (Spearman $-0.5$, Table~\ref{tab:task-graph-stats}). We therefore treat the heterogeneity account as a hypothesis, supported only by the within-task contrast, with the MIS cross-task gain left unexplained.

\subsection{The effect across evaluation sizes}
\label{sec:results-transfer}
PostDeg stays above the LN backbone across the tested $2$--$3\times$ train-to-test range on all four tasks (Figure~\ref{fig:transfer}). It is roughly flat with size on InfluMax, decays with size on Dismantle and Epidemic, and is non-monotone on MIS, with a dip to near zero at $n{=}200$ that we cannot account for. Across each (task, size) pair, paired Wilcoxon tests reject equality between the PostDeg policy and the canonical classical heuristic at $\alpha=0.05$, and PostDeg has the smallest train-evaluation gap of the methods tested (Appendix Table~\ref{tab:full-transfer}, Table~\ref{tab:policy-vs-heuristic}, Figures~\ref{fig:appendix-rel-vs-size} and~\ref{fig:appendix-gengap}).

\begin{figure}[t]
  \centering
  \includegraphics[width=\linewidth]{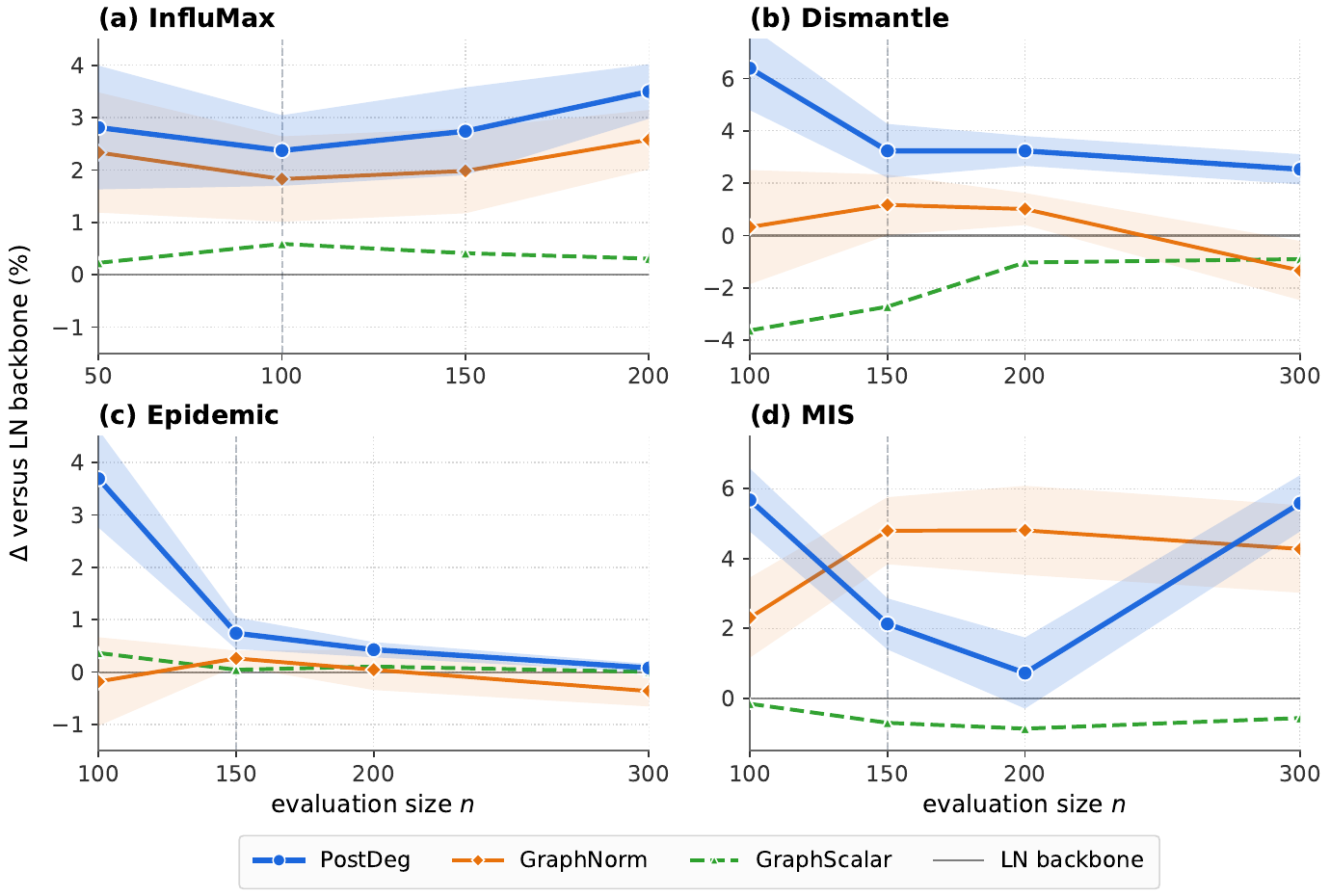}
  \caption{PostDeg gain over the backbone across evaluation sizes. Improvement over the LN backbone at each evaluation size (mean $\pm$ seed std over 10 seeds, bands at $\pm 1$ seed std, dashed vertical line at the training size $n_{\rm train}$). PostDeg (blue) stays ahead on InfluMax, Dismantle, and MIS, while GraphNorm (orange) and GraphScalar (green) do not. On Epidemic, PostDeg's advantage is largest at the smallest evaluation size and shrinks toward zero as graphs grow, while the other methods stay near zero. Learned same-slot variants are in Appendix Figure~\ref{fig:appendix-rel-vs-size}, and the within-MIS trend across graph families is in Appendix Figure~\ref{fig:appendix-mis}.}
  \label{fig:transfer}
\end{figure}

\subsection{No gain on DD, and a pooling-head artifact}
\label{sec:results-boundary}

As expected, on DD PostDeg does not improve over the backbone, and both remain near $80\%$. DD is a graph-classification benchmark, so the placement rule, which targets degree-sensitive node selection, does not apply to it. Low degree heterogeneity does not explain this null, because MIS has almost the same coefficient of variation and skewness (Table~\ref{tab:task-graph-stats}) and yet shows the largest improvement in the paper. PairNorm ($58.7\%$), InstanceNorm ($58.7\%$), and BatchNorm ($58.8\pm0.2\%$) fall to the majority-class rate ($58.65\%$) on every fold, while PostDeg, Extra LayerNorm, GraphScalar, and the backbone stay near $80\%$. This collapse is an artifact of the graph-classification pipeline, and the same normalizers match PostDeg on several node-selection tasks. The normalizer is the last operation before the graph embedding is formed by averaging over nodes, and a normalizer that centers each feature across a graph's nodes makes that average a constant vector, which can only feed the majority-class predictor. It appears at the first epoch on every fold, and Appendix~\ref{app:dd-collapse} gives the per-method arithmetic. GraphNorm has two DD numbers from two protocols ($78.5\%$ main 10-seed, $77.4\%$ supplemental 5-fold) and NodeNorm is at $78.8\%$ in the supplemental run (Appendix Table~\ref{tab:dd-extended}), and we report the two protocols separately.

\subsection{When degree is already in aggregation, PostDeg adds nothing}
\label{sec:results-pna}

PNA \citep{corso2020pna} is the closest published built-in degree normalizer. Its aggregator scales each message by $\delta(d_i)=\log(d_i+1)/\bar\delta$ while the messages are still being aggregated, so the degree term is already embedded in the representation before it reaches LayerNorm. Adding PostDeg on top of a PNA backbone returns paired-equivalent results ($|\Delta\%|<1\%$ on InfluMax and Dismantle, Appendix Table~\ref{tab:pna-redundancy}). These PNA groups run at 5 and 3 seeds versus 10 elsewhere and equivalence is power-sensitive, so we read this as a power-limited indication for our setup.

\section{Discussion and Conclusion}
\label{sec:discussion}

We began from a simple question about where a per-node degree signal can reach the score head inside a LayerNorm residual block. Only the slot after LayerNorm carries a positive per-node scalar to the score head as magnitude, because anything applied earlier is divided out in the forward pass. The pre-LayerNorm arm matches the backbone at evaluation (\S\ref{sec:results-main}), even though a scalar applied before LayerNorm can still shape training through the backward pass. PostDeg puts a parameter-free inverse-degree scale in that slot, and our controlled comparisons separate two effects. Placement is necessary for any improvement, while the exact form of the operator matters much less. Its small extra effect is clearest for the learned variant on heavy-tailed BA graphs (\S\ref{sec:results-ablation}). On InfluMax, Dismantle, and MIS, PostDeg improves over the LayerNorm backbone by $+3.5\%/+2.5\%/+5.6\%$ on all 10 seeds at the largest size. Its epidemic advantage is smaller and fades with size, DD shows no gain, and adding PostDeg on a degree-aware PNA backbone is paired-equivalent at the reduced seed counts we ran there (\S\ref{sec:results-pna}). To let a score head see the degree signal, the scale should go in the post-LayerNorm slot, with $\gamma=1/2$ as a default that needs no tuning. PostDeg gains nothing where degree already enters through aggregation or where heterogeneity is low.

\paragraph{Contextualization and impact.} Prior GNN-normalization work redesigns the normalizer (GraphNorm, PairNorm, NodeNorm) or the aggregator (PNA) to recover a lost degree signal. Our result is complementary and simpler, since the signal is often lost only because the scalar is on the wrong side of the normalizer, and moving it adds no parameters. The absorption identity holds for normalizers that divide each node by its own scale (LayerNorm, RMSNorm). Normalizers that pool across nodes, such as BatchNorm, are outside it, which is also why the node-centering normalizers behave differently in the DD pipeline (\S\ref{sec:results-boundary}). The placement check applies to a LayerNorm or RMSNorm residual block that feeds a positive per-node scalar to a downstream head, shown here only on the synthetic node-selection tasks we test.
\paragraph{Limitations.} We performed no per-method tuning, so a tuned baseline could shrink the margins. The positive results are all synthetic (BA/SBM), and the one real dataset, DD, shows no improvement, so practical relevance stays untested. The current dense-adjacency implementation also bounds the tested sizes to a few hundred nodes. Our baselines stop at GraphNorm/PairNorm and one degree-aware aggregator, without post-2021 normalizers or degree-aware attention. The theory is single-layer and covers positive scalars only (App.~\ref{app:theory-limitations}).

\clearpage
\appendix
\setcounter{table}{0}
\renewcommand{\thetable}{A\arabic{table}}
\setcounter{figure}{0}
\renewcommand{\thefigure}{A\arabic{figure}}
\setcounter{equation}{0}
\renewcommand{\theequation}{A\arabic{equation}}
\setcounter{section}{0}
\renewcommand{\thesection}{A.\arabic{section}}

\section{Notation and statistical conventions}
\label{app:reading-guide}

This appendix collects the proofs, the full statistical apparatus, and the supplementary runs.

\paragraph{Notation.} Table~\ref{tab:glossary} collects the symbols, definitions, and default values used throughout the appendix.
\begin{table}[H]
\centering
\caption{Glossary of symbols, definitions, default values, and the section or equation where each is introduced.}
\label{tab:glossary}
\footnotesize
\setlength{\tabcolsep}{5pt}
\resizebox{\textwidth}{!}{%
\begin{tabular}{llll}
\toprule
Symbol & Meaning & Value & Defined at \\
\midrule
$d_i$ & raw degree of node $i$ & n/a & Section~\ref{sec:variants} \\
$d_{\max}$ & max degree on a graph & n/a & Section~\ref{sec:variants} \\
$\widehat c_i$ & PostDeg normalized degree, $\max(d_i,1)/\max_j\max(d_j,1)$, in $(0,1]$ & n/a & Eq.~\ref{eq:postdeg-formula} \\
$c_i$ & learned-variant normalized degree, $d_i/d_{\max}$ clipped to $[10^{-6},1]$ & n/a & Eq.~\ref{eq:dsn-formula} \\
$\varepsilon$ & PostDeg floor in $(c_i+\varepsilon)$ & $10^{-8}$ & Eq.~\ref{eq:postdeg-formula} \\
$\varepsilon'$ & spectral floor in $\sqrt{\lambda_{\max}+\varepsilon'}$ & $10^{-4}$ & Eq.~\ref{eq:dsn-formula} \\
$\varepsilon_{\rm LN}$ & LayerNorm stabilizer & $10^{-5}$ & Eq.~\ref{eq:intro-ln-absorption} \\
$\bar h_i$ & $\LN(h_i)$, the LayerNorm output & n/a & Section~\ref{sec:variants} \\
$s_i$ & post-LayerNorm scale factor & $(\widehat c_i+\varepsilon)^{-1/2}$ for PostDeg & Eq.~\ref{eq:postdeg-formula} \\
$\alpha,\beta,\delta$ & learned parameters of the superset & $\beta\approx0.75$ & Eq.~\ref{eq:dsn-formula} \\
$\gamma$ & post-LN degree exponent in the scale & $1/2$ (PostDeg), $\beta+\delta\lambda_2$ (learned) & Eq.~\ref{eq:postdeg-formula} \\
$\lambda_2,\lambda_{\max}$ & second/largest eigenvalues of normalized Laplacian & $\le 2$ & Eq.~\ref{eq:dsn-formula} \\
$m_i$ & residual gate, $\sigma(\rho)$ or $\sigma(\phi(c_i))$ & scalar/MLP & Eq.~\ref{eq:dsn-formula} \\
$R_{ij}$ & pairwise scale ratio $s_j/s_i$ for $\widehat c_i>\widehat c_j$ & $R_{ij}>1$ & Eq.~\ref{eq:rij} \\
$h_0$ & Cheeger constant lower bound & task-dependent & Prop.~\ref{prop:cheeger-app} \\
PostDeg-L-Adaptive & Degree-Spectral Normalization (the learned superset) & n/a & Section~\ref{sec:variants} \\
PostDeg-L-FG & PostDeg-L-Adaptive with a single scalar gate $m=\sigma(\rho)$ & n/a & Section~\ref{sec:variants} \\
\bottomrule
\end{tabular}
}
\end{table}

\paragraph{How to read the statistical tables.} A TOST table tests \emph{equivalence} (lower one-sided $p\Rightarrow$ equivalent within the margin). A Wilcoxon table tests \emph{difference} (lower one-sided $p\Rightarrow$ better). Cohen's $d$ heatmaps use a diverging colormap centered at $0$ (red positive, blue negative). Win-rate cells are the fraction of paired seeds out of 10.

\section{Normalization design-space table}
\label{app:design-space}
\begin{table}[H]
\centering
\caption{Normalization design space. PNA, NodeNorm, and DiffGroupNorm are included for positioning but are not evaluated in the controlled main grid because they change aggregation or feature-norm variables beyond the post-block normalization slot. A ``yes'' in the topology-conditioned column marks a per-node scale, set by graph topology, that reaches the score head as representation magnitude.}
\label{tab:normalizer_design}
\small
\setlength{\tabcolsep}{5pt}
\renewcommand{\arraystretch}{1.25}
\begin{tabular}{@{}>{\raggedright\arraybackslash}p{0.15\linewidth} >{\raggedright\arraybackslash}p{0.20\linewidth} >{\raggedright\arraybackslash}p{0.19\linewidth} >{\centering\arraybackslash}p{0.18\linewidth} >{\centering\arraybackslash}p{0.105\linewidth}@{}}
\toprule
Method & Scale-setting signal & Position & Topology-conditioned scale? & Learnable? \\
\midrule
BatchNorm & batch/feature statistics & feature normalization & no & yes \\
LayerNorm & node/feature statistics & normalization layer & no & yes \\
InstanceNorm & graph/feature statistics & feature normalization & no & no \\
GraphNorm & graph/feature statistics & graphwise feature-norm & no & yes \\
PairNorm & pairwise feature distances & post-hoc feature rescale & no & no \\
NodeNorm & node feature statistics & post-block feature rescale & no & no \\
DiffGroupNorm & learned cluster assignments & per-block normalization & no & yes \\
PNA & degree scalers on aggregators & inside message passing & no & yes \\
PostDeg (ours) & degree centrality & after LayerNorm & yes & no \\
Learned superset (ours) & degree centrality & after LayerNorm & yes & yes \\
\bottomrule
\end{tabular}
\end{table}

\section{Theory Appendix}
\label{app:theory}

We collect here the numbered statements and proofs used by the body.

\paragraph{Setup.}
We adopt the notation of Section~\ref{sec:variants}. The learned same-slot variants PostDeg-L-FG and PostDeg-L-Adaptive use the parameterized scale
\begin{equation}
  s_i = \alpha\,(c_i + \varepsilon)^{-\gamma}(\lambda_{\max}+\varepsilon')^{-1/2},
  \quad \gamma = \beta + \delta\lambda_2,
  \quad \widetilde h_i = \bigl(m_i\,s_i + 1 - m_i\bigr)\,\bar h_i,
  \label{eq:dsn-formula}
\end{equation}
with $\lambda_2,\lambda_{\max}$ the second and largest eigenvalues of the normalized graph Laplacian, $\varepsilon'=10^{-4}$, and $(\alpha,\beta,\delta)$ learned per layer. The two variants differ only in the gate $m_i\in(0,1)$. PostDeg-L-FG sets $m_i=\sigma(\rho)$ for a single learned scalar $\rho$, and PostDeg-L-Adaptive sets $m_i=\sigma(\phi(c_i))$ for a small two-layer MLP $\phi$. Parameter domains are $\alpha\in[0.1,3.0]$, $\beta\in[0,1.5]$, $\delta\in[0,0.5]$, $c_i=d_i/d_{\max}$ clipped to $[10^{-6},1]$, spectral clamps $\lambda_2,\lambda_{\max}\in[0.1,10.0]$, and a clamp of the power-law factor $\alpha(c_i+\varepsilon)^{-\gamma}$ to $[0.01,100]$, applied before the graph-wise spectral multiplier. This clamp binds exactly at isolated nodes, where the clipped $c_i=10^{-6}$ gives an unclamped factor of order $10^{5}$, so the released per-node scale at an isolated node equals $100\,(\lambda_{\max}+\varepsilon')^{-1/2}\approx 77.2$, and on every node with degree at least $1$ the factor stays below the clamp maximum and the clamp is inactive (Lemma~\ref{lem:clamp-inactive-app}). The training-time forward additionally injects small parameter noise and replaces any NaN/Inf activations, neither of which appears in the closed form. Weight spectral normalization on the GAT projections (\texttt{use\_weight\_sn}) is on for the learned variants and for GraphScalar, and off for PostDeg, the LN backbone, and the alternative-normalization baselines. This is a backbone difference between PostDeg and the learned variants, and we account for it when reading the equivalence result (\S\ref{sec:results-ablation}). PostDeg corresponds to the limiting case $\alpha=1$, $\beta=1/2$, $\delta=0$, $m_i\equiv 1$. The implemented learned path always applies the gate, the graph-wise spectral factor, weight spectral normalization, and the numerical guards above, so the two remain separate operators and we report them separately. The graph-wise constant $(\lambda_{\max}+\varepsilon')^{-1/2}$ is applied \emph{after} LayerNorm and varies across graphs. Within a graph it rescales all nodes equally and does not affect node ranking, so it stays separate from the shared LayerNorm gain.

\paragraph{Disconnected or weakly connected graphs.}
If $G$ is disconnected, the mathematical normalized Laplacian value is $\lambda_2=0$, so the spectral-gap contribution vanishes and the learned superset reduces to degree-aware scaling with exponent $\beta$. The implementation lower clamp replaces this value by $0.1$ to avoid a hard discontinuity and to keep gradients through $\delta$ observable on weakly connected batches.

\subsection{Boundedness}

\begin{lemma}[Well-definedness]
\label{lem:wd}
On the parameter domains in the setup, every $s_i$ in Eq.~\eqref{eq:dsn-formula} is finite and strictly positive.
\end{lemma}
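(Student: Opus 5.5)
The plan is to treat $s_i=\alpha\,(c_i+\varepsilon)^{-\gamma}(\lambda_{\max}+\varepsilon')^{-1/2}$ as a product of three factors and show that each one is finite and strictly positive on the stated domains. Positivity and finiteness are preserved under products, so the lemma follows once the three factors are handled. I read $s_i$ as the released scale, with the clamp of the power-law factor to $[0,01,100]$ and the spectral clamp $\lambda_{\max}\in[0.1,10.0]$ applied as described in the setup. The argument does not depend on those clamps, however, and I will point out where they only sharpen the bounds.

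\textbf{First factor.} $\alpha\in[0.1,3.0]$, so $0<\alpha\le 3$.

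\textbf{Power-law factor.} Since $c_i$ is clipped to $[10^{-6},1]$ and $\varepsilon=10^{-8}>0$, the base $c_i+\varepsilon$ lies in $[10^{-6}+10^{-8},\,1+10^{-8}]$. It is therefore strictly positive, so the real power $(c_i+\varepsilon)^{-\gamma}$ is well defined for any real $\gamma$.

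The exponent satisfies $\gamma=\beta+\delta\lambda_2$ with $\beta\in[0,1.5]$, $\delta\in[0,0.5]$ and $\lambda_2\in[0.1,10]$. Hence $\gamma\in[0,6.5]$, which is finite. Because the base is positive and lies in a compact interval, the map $x\mapsto x^{-\gamma}$ is monotone on it, and this gives explicit bounds
\[
(1+10^{-8})^{-6.5}\le (c_i+\varepsilon)^{-\gamma}\le (10^{-6}+10^{-8})^{-6.5}.
\]
Both bounds are finite and strictly positive. Multiplying by $\alpha$ keeps the product in a compact subset of $(0,\infty)$. Applying the clamp to $[0.01,100]$ maps a positive finite number into $[0.01,100]$, so positivity and finiteness survive the clamp as well.

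\textbf{Spectral factor.} The normalized Laplacian is positive semidefinite, so $\lambda_{\max}\ge 0$ mathematically. After the clamp, $\lambda_{\max}\in[0.1,10]$, so $\lambda_{\max}+\varepsilon'\ge 0.1$. The square root is then real and positive, and its reciprocal lies in $[(10+10^{-4})^{-1/2},(0.1+10^{-4})^{-1/2}]$.

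\textbf{Combining.} The product of the three factors is finite and strictly positive. As a remark, the gated multiplier $m_is_i+1-m_i$ is a convex combination of two positive numbers, since $m_i\in(0,1)$, so it is positive too.

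The main obstacle is not analytic but interpretive. I need to confirm that every quantity entering the formula really lies in the stated domains, because $\lambda_2$ may be $0$ for disconnected graphs and $\lambda_{\max}$ could be near $0$ before clamping. I would first settle this by invoking the implementation clamps listed in the setup. I would then note that even without them, positivity of $c_i+\varepsilon$ and of $\varepsilon'$ already suffices, given nonnegativity of the Laplacian spectrum.
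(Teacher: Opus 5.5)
Your proposal is correct and follows the paper's own argument. The paper's one-line proof checks factor by factor that $\alpha>0$, $c_i+\varepsilon\ge c_{\min}+\varepsilon>0$, and $\lambda_{\max}+\varepsilon'>0$; your bound $\gamma\in[0,6.5]$ and the two-sided bounds on $(c_i+\varepsilon)^{-\gamma}$ just make explicit the finiteness half, which the paper leaves implicit (and $[0,01,100]$ should read $[0.01,100]$).
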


\begin{proof}
$\alpha>0$, $c_i+\varepsilon\ge c_{\min}+\varepsilon>0$, and $\lambda_{\max}+\varepsilon'\ge\lambda_{\max}^{\min}+\varepsilon'>0$.
\end{proof}

\begin{lemma}[Operational bound]
\label{lem:opbound}
If $\alpha\le\bar\alpha$, $\gamma\le\gamma_{\rm op}$, $c_i\ge c_0$, $\lambda_{\max}\ge\ell_0>0$, then
\begin{equation}
  s_i\le S_{\max}(\bar\alpha,\gamma_{\rm op},c_0,\ell_0)=\frac{\bar\alpha}{\sqrt{\ell_0+\varepsilon'}}\max\{1,(c_0+\varepsilon)^{-\gamma_{\rm op}}\}.
\end{equation}
\end{lemma}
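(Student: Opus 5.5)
The bound follows by splitting $s_i$ from Eq.~\eqref{eq:dsn-formula} into three nonnegative factors, $\alpha$, $(c_i+\varepsilon)^{-\gamma}$ and $(\lambda_{\max}+\varepsilon')^{-1/2}$, bounding each separately, and multiplying the bounds. Lemma~\ref{lem:wd} guarantees all three are finite and strictly positive, so the product of the upper bounds bounds the product.

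\emph{First and third factors.} The prefactor obeys $\alpha\le\bar\alpha$ by hypothesis. The map $t\mapsto (t+\varepsilon')^{-1/2}$ is decreasing on $t>-\varepsilon'$, so $\lambda_{\max}\ge\ell_0>0$ gives $(\lambda_{\max}+\varepsilon')^{-1/2}\le(\ell_0+\varepsilon')^{-1/2}$. Together these produce the prefactor $\bar\alpha/\sqrt{\ell_0+\varepsilon'}$.

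\emph{Degree factor.} This is the only step needing care, because the sign of $\log(c_i+\varepsilon)$ decides whether a larger exponent makes the power larger or smaller. Write $x=c_i+\varepsilon>0$ and consider $f(\gamma)=x^{-\gamma}$. Since $\gamma=\beta+\delta\lambda_2\ge 0$ on the parameter domain, $\gamma$ ranges over $[0,\gamma_{\rm op}]$. I split into two cases.
\begin{itemize}[leftmargin=*,topsep=2pt,itemsep=1pt]
  \item If $x\ge 1$, then $x^{-\gamma}\le 1$ for every $\gamma\ge 0$.
  \item If $x<1$, then $x^{-\gamma}$ is nondecreasing in $\gamma$, so $x^{-\gamma}\le x^{-\gamma_{\rm op}}$. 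Because $t\mapsto t^{-\gamma_{\rm op}}$ is nonincreasing for $\gamma_{\rm op}\ge 0$ and $x\ge c_0+\varepsilon$, this gives $x^{-\gamma_{\rm op}}\le (c_0+\varepsilon)^{-\gamma_{\rm op}}$.
\end{itemize}
In both cases $(c_i+\varepsilon)^{-\gamma}\le\max\{1,(c_0+\varepsilon)^{-\gamma_{\rm op}}\}$. Equivalently, $\gamma\mapsto x^{-\gamma}$ is monotone, so its maximum over $[0,\gamma_{\rm op}]$ sits at an endpoint, $\gamma=0$ (value $1$) or $\gamma=\gamma_{\rm op}$. The $\max$ with $1$ in the statement is there to cover the case $x\ge1$, where the exponent endpoint would undercount.

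Multiplying the three bounds gives $S_{\max}$. The main obstacle is the case analysis on $x$ versus $1$; the rest is monotonicity. The clamp of the power-law factor to $[0.01,100]$ only lowers $s_i$ further, so the bound also holds for the implemented scale. In the gated blend $m_is_i+1-m_i$, the bound on $s_i$ combines with the constant $1$ through $\max\{S_{\max},1\}$ if needed later.
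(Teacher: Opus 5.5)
Your proof is correct and follows the same argument as the paper's one-line proof, ``monotone maximization on the rectangular learned-parameter domain.'' You bound each factor by monotonicity, and your case split on $c_i+\varepsilon$ versus $1$ shows where the $\max\{1,\cdot\}$ comes from; your side remark that the clamp only lowers $s_i$ holds because $\alpha\ge 0.1$ and $c_i\le 1$ keep the power-law factor above the lower clamp value $0.01$.
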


\begin{proof}
Monotone maximization on the rectangular learned-parameter domain.
\end{proof}

\subsection{Degree separation and log-log scaling}

\begin{theorem}[Degree separation, full version]
\label{thm:degree-separation-app}
Let $i,j\in V$ with $c_i>c_j$ and $\gamma>0$. Then:
\begin{enumerate}[label=(\alph*),leftmargin=*,topsep=2pt,itemsep=1pt]
  \item $s_i<s_j$.
  \item $R_{ij}=s_j/s_i=((c_i+\varepsilon)/(c_j+\varepsilon))^\gamma>1$.
  \item $\partial R_{ij}/\partial \gamma = \ln((c_i+\varepsilon)/(c_j+\varepsilon))R_{ij}>0$.
  \item If $\delta>0$, $\partial R_{ij}/\partial \lambda_2 = \delta\ln((c_i+\varepsilon)/(c_j+\varepsilon))R_{ij}>0$.
\end{enumerate}
\end{theorem}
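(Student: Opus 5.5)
The plan is to work directly from the closed form in Eq.~\eqref{eq:dsn-formula}, in the unclamped regime. This is the regime where the power-law clamp is inactive, as stated in the setup for every node of degree at least one. There we have $s_i=\alpha(c_i+\varepsilon)^{-\gamma}K$, with the graph-wise constant $K=(\lambda_{\max}+\varepsilon')^{-1/2}$. By Lemma~\ref{lem:wd}, $s_i$ is finite and strictly positive, so ratios and logarithms are legitimate. The key observation is that $\alpha$ and $K$ are common to both nodes $i,j$ of the same graph, so they cancel in any ratio.

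The steps go in the order (b), then (a), then (c), then (d).

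\textbf{Step (b).} Cancel $\alpha K$ to get
\[
R_{ij}=\frac{s_j}{s_i}=\left(\frac{c_i+\varepsilon}{c_j+\varepsilon}\right)^{\gamma}.
\]
Since $c_i>c_j$, the base $x:=(c_i+\varepsilon)/(c_j+\varepsilon)$ exceeds $1$. With $\gamma>0$, this gives $x^\gamma=e^{\gamma\ln x}>1$.

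\textbf{Step (a).} Part (a) is immediate from (b) together with $s_i>0$: $s_j=R_{ij}s_i>s_i$.

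\textbf{Step (c).} Write $R_{ij}=\exp(\gamma\ln x)$, where $x$ does not depend on $\gamma$. Differentiating gives $\partial R_{ij}/\partial\gamma=\ln x\cdot R_{ij}$. This is positive because $\ln x>0$ and $R_{ij}>0$.

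\textbf{Step (d).} Use the chain rule through $\gamma=\beta+\delta\lambda_2$, so that $\partial\gamma/\partial\lambda_2=\delta$. Here we hold $c_i,c_j$ fixed and treat $\lambda_2$ as an independent variable. The variable $\lambda_2$ enters $s_i$ only through $\gamma$; the factor $K$ depends on $\lambda_{\max}$, not $\lambda_2$, and cancels anyway. This gives $\partial R_{ij}/\partial\lambda_2=\delta\ln x\,R_{ij}$, which is positive when $\delta>0$.

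The only real obstacle is bookkeeping about the implementation guards rather than analysis. The statement must be read in the regime where the clamp of $\alpha(c_i+\varepsilon)^{-\gamma}$ to $[0.01,100]$ is inactive for both $i$ and $j$. If the clamp bound at both nodes, (a) would weaken to $s_i\le s_j$, and the derivatives in (c) and (d) would vanish.

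The same caveat applies to the $\lambda_2$ clamp to $[0.1,10]$. The derivative in (d) should be understood with $\lambda_2$ in the interior of its clamp interval.

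I would state these restrictions explicitly, appealing to the setup's remark that the clamp is inactive on nodes of degree at least one. After that, every step is a one-line monotonicity or differentiation argument.
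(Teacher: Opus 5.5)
Your proposal is correct and matches the paper's proof: cancel $\alpha$ and $(\lambda_{\max}+\varepsilon')^{-1/2}$ in the ratio, use that the base exceeds one with $\gamma>0$ to get (a) and (b), and obtain (c) and (d) by differentiating $R_{ij}=\exp(\gamma\ln x)$ directly and via the chain rule with $\partial\gamma/\partial\lambda_2=\delta$. Your explicit restriction to the regime where the clamp is inactive is a sensible extra; the paper states the theorem for the unclamped closed form and handles the clamp separately in Lemma~\ref{lem:clamp}.
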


\begin{proof}
Cancel the factors $\alpha$ and $(\lambda_{\max}+\varepsilon')^{-1/2}$ in the ratio, $s_j/s_i=((c_i+\varepsilon)/(c_j+\varepsilon))^\gamma$. The base exceeds one and $\gamma>0$, so $R_{ij}>1$ and $s_j>s_i$. (c) and (d) follow by writing $R_{ij}=\exp(\gamma a_{ij})$ with $a_{ij}=\ln((c_i+\varepsilon)/(c_j+\varepsilon))>0$ and applying the chain rule with $\partial\gamma/\partial\lambda_2=\delta$.
\end{proof}

\begin{lemma}[Effect of the post-computation clamp]
\label{lem:clamp}
The clamp map $\clamp(\cdot,s_{\min}^{\rm clip},s_{\max}^{\rm clip})$ is monotone nondecreasing, so it preserves the weak ordering of scales. Strict separation is preserved exactly whenever both unclipped scales lie inside $[0.01,100]$.
\end{lemma}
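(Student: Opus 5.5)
The plan is to work directly from the definition $\clamp(x,a,b)=\min\{\max\{x,a\},b\}$ with $a=s_{\min}^{\rm clip}=0.01$ and $b=s_{\max}^{\rm clip}=100$, and to prove the two claims separately.

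\emph{Monotonicity.} The maps $x\mapsto\max\{x,a\}$ and $y\mapsto\min\{y,b\}$ are each monotone nondecreasing. Since $a<b$, the composition is also nondecreasing. Concretely, for $x\le y$ we have $\max\{x,a\}\le\max\{y,a\}$, and then $\min\{\max\{x,a\},b\}\le\min\{\max\{y,a\},b\}$. So whenever the unclipped factors satisfy $u_i\le u_j$, their clamped values satisfy $\clamp(u_i)\le\clamp(u_j)$, and the weak ordering is preserved.

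\emph{Transfer to the released scales.} The clamp acts on the power-law factor $\alpha(c_i+\varepsilon)^{-\gamma}$, and the result is then multiplied by the graph-wise constant $(\lambda_{\max}+\varepsilon')^{-1/2}$. By Lemma~\ref{lem:wd} this constant is strictly positive and identical for every node of the graph. Multiplying by it therefore preserves both weak and strict inequalities between nodes, so it suffices to compare the clamped power-law factors.

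\emph{Strict separation.} If both unclipped values lie in $[a,b]$, the clamp restricts to the identity there, so each clamped value equals its unclipped value. Any strict inequality $u_i<u_j$ therefore survives, and combined with Theorem~\ref{thm:degree-separation-app}(a) this gives $s_i<s_j$ whenever $c_i>c_j$.

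\emph{Sharpness of the hypothesis.} Strict separation can fail outside $[a,b]$: two distinct values both above $b$, for example two isolated-type nodes, are both mapped to $b$ and their separation collapses to equality. I will note this in one line to show the condition is needed.

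The main obstacle is modest. One has to state clearly that the clamp acts before the common spectral multiplier, so that the positive per-graph constant can be factored out of the comparison. One also has to avoid claiming strict separation in the saturated regime.
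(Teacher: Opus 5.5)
Your proposal is correct and follows the paper's own argument: apply the monotone clamp to the strict ordering from Theorem~\ref{thm:degree-separation-app}. You also factor out the common positive spectral multiplier, which the paper leaves implicit. On your sharpness remark, isolated nodes all share the clipped $c_i=10^{-6}$, so their unclipped factors are equal and they are not a strictly separated pair that collapses; moreover, the example shows only that some hypothesis is needed, not that the stated one is necessary (a pair with one value below $0.01$ and the other in $(0.01,100]$ stays strictly separated).
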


\begin{proof}
Apply a monotone map to the strict ordering in Theorem~\ref{thm:degree-separation-app}.
\end{proof}

\begin{corollary}[Log-log slope, with caveat]
\label{cor:loglog-app}
For $\varepsilon\ll c_i$, $\ln s_i = -\gamma\ln c_i + \mathrm{const}$. The exact pointwise slope is $-\gamma c_i/(c_i+\varepsilon)$, which differs from $-\gamma$ by a factor $c_{\min}/(c_{\min}+\varepsilon)\approx 0.99$ at the clipped minimum. The pooled fits on the released scale exports are below this value (Table~\ref{tab:pearson-all}, slopes $-0.63$ to $-0.84$ against a final-epoch $\gamma_{\rm eff}\approx0.83$), since they pool nodes across graphs and seeds from training-time captures, so we read them as a directional check, and the mechanism claim rests on the score-head measurement in \S\ref{sec:mechanism}.
\end{corollary}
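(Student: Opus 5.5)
The plan is to work directly from the closed form in Eq.~\eqref{eq:dsn-formula}. Taking logarithms, which is legitimate because every $s_i$ is finite and strictly positive by Lemma~\ref{lem:wd},
\[
\ln s_i=\ln\alpha-\tfrac12\ln(\lambda_{\max}+\varepsilon')-\gamma\ln(c_i+\varepsilon).
\]
For a fixed graph and fixed parameters, the first two terms do not depend on $i$, so they go into the constant. The remaining task is to compare $\ln(c_i+\varepsilon)$ with $\ln c_i$.

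For the approximate statement, I would write
\[
\ln(c_i+\varepsilon)=\ln c_i+\ln\bigl(1+\varepsilon/c_i\bigr)
\]
and bound the correction by $\varepsilon/c_i$. This correction vanishes as $\varepsilon/c_i\to 0$, which gives $\ln s_i=-\gamma\ln c_i+\mathrm{const}+O(\gamma\varepsilon/c_i)$.

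For the exact pointwise slope, I would treat $c_i$ as a continuous variable and differentiate:
\[
\frac{d\ln s_i}{d\ln c_i}=c_i\,\frac{d}{dc_i}\bigl(-\gamma\ln(c_i+\varepsilon)\bigr)=-\gamma\,\frac{c_i}{c_i+\varepsilon}.
\]
The ratio to $-\gamma$ is therefore $c_i/(c_i+\varepsilon)$. This ratio is increasing in $c_i$, so its worst case over the domain is at the clipped minimum $c_{\min}$. Substituting the clip value there gives the stated factor $c_{\min}/(c_{\min}+\varepsilon)$.

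The main obstacle is the numerical claim $\approx 0.99$. With $c_{\min}=10^{-6}$ and $\varepsilon=10^{-8}$, the factor is $1/(1+10^{-2})\approx 0.990$, which checks out. The constant-absorption step also needs care: the clamp of Lemma~\ref{lem:clamp} must be inactive, otherwise the slope is zero on clamped nodes. Isolated nodes are exactly where the clamp binds, so I would exclude them by invoking the clamp-inactivity result for degree at least $1$ (Lemma~\ref{lem:clamp-inactive-app}). On those nodes $c_i\ge 1/d_{\max}$, and the pointwise slope is within $1\%$ of $-\gamma$.

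The remaining sentences of the corollary, about the pooled empirical fits, are not mathematical claims. I would treat them as a remark pointing to the tables and give no proof for them.
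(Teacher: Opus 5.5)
Your proposal is correct and follows the same route the paper relies on: the paper states this corollary without a separate proof, and it follows just as you derive it, by taking logs of Eq.~\eqref{eq:dsn-formula}, absorbing the $i$-independent factors (including the per-graph $\gamma=\beta+\delta\lambda_2$), and differentiating $-\gamma\ln(c_i+\varepsilon)$ with respect to $\ln c_i$ to get the factor $c_{\min}/(c_{\min}+\varepsilon)=1/1.01\approx 0.99$. Your clamp caveat is a useful addition: the clipped minimum $c_{\min}=10^{-6}$ occurs only at isolated nodes, where the implemented clamp saturates, so the $0.99$ factor describes the unclamped closed form rather than any node of the released operator, and you are right to treat the pooled-fit sentences as an empirical remark rather than something to prove.
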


\begin{remark}[Two regression targets for the post-LN scale]
\label{rem:loglog-fig3}
Both Table~\ref{tab:scale-extreme} and Table~\ref{tab:pearson-all} regress on the operator's own scalar factors $s_i$. Table~\ref{tab:pearson-all} reports the log-log slope and the Pearson of $\ln s_i$ on $\ln d_i$, read as a directional check of Cor.~\ref{cor:loglog-app}, and Table~\ref{tab:scale-extreme} reports the linear Pearson $r(s_i,d_i)$ on the raw factors. The log-log fit is on logs and the linear Pearson is on raw values, so the two need not agree, and no downstream interaction enters. These tables cover the operator's own factors, and the post-network magnitudes at the score head are measured separately by the forward hooks (\S\ref{sec:mechanism}).
\end{remark}

\begin{lemma}[Near-regular attenuation]
\label{lem:near-regular-app}
On a $k$-regular graph, $c_i=1$ for all $i$ and $s_i$ is constant. If $d_{\max}/d_{\min}\le 1+\eta$, then $R_{ij}\le (1+\eta)^\gamma=1+\gamma\eta+O(\eta^2)$ as $\eta\to 0$.
\end{lemma}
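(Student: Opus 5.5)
For the first claim, I would substitute directly into Eq.~\eqref{eq:dsn-formula}. On a $k$-regular graph with $k\ge1$ every node has $d_i=k=d_{\max}$, so $c_i=d_i/d_{\max}=1$ for all $i$. The clip to $[10^{-6},1]$ is then inactive, and the clamp is inactive as well by Lemma~\ref{lem:clamp-inactive-app}, since no node is isolated. The remaining factors $\alpha$, $\gamma=\beta+\delta\lambda_2$ and $(\lambda_{\max}+\varepsilon')^{-1/2}$ are graph-level quantities, shared by all nodes. Hence $s_i=\alpha(1+\varepsilon)^{-\gamma}(\lambda_{\max}+\varepsilon')^{-1/2}$ is the same for every node. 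The degenerate case $k=0$ is the empty-edge graph, where every node is isolated and $c_i=10^{-6}$; $s_i$ is again constant, so I would remark on it only in passing.

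For the bound, I would start from part (b) of Theorem~\ref{thm:degree-separation-app}, which gives $R_{ij}=\bigl((c_i+\varepsilon)/(c_j+\varepsilon)\bigr)^{\gamma}$ for $c_i>c_j$ and $\gamma>0$. The key inequality is elementary. For $a\ge b>0$ and $\varepsilon\ge0$, adding $\varepsilon$ to numerator and denominator moves the ratio toward $1$, so $(a+\varepsilon)/(b+\varepsilon)\le a/b$. With $c_i=d_i/d_{\max}$, the unclipped ratio is $c_i/c_j=d_i/d_j\le d_{\max}/d_{\min}\le1+\eta$. Since $t\mapsto t^\gamma$ is increasing for $\gamma>0$, raising both sides to the power $\gamma$ gives $R_{ij}\le(1+\eta)^\gamma$.

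The step I expect to need the most care is clipping. If $d_{\min}\ge1$, then $c_j\ge 1/d_{\max}\ge10^{-6}$ whenever $d_{\max}\le10^{6}$, so the clip never binds. I would state this as a standing assumption, or note that isolated nodes force $d_{\min}=0$, which makes the hypothesis $d_{\max}/d_{\min}\le1+\eta$ vacuous or infinite. By Lemma~\ref{lem:clamp-inactive-app} the clamp is then also inactive, so the closed form of Theorem~\ref{thm:degree-separation-app} applies as written. I would also note that the case $c_i=c_j$ gives $R_{ij}=1$, which trivially satisfies the bound.

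Finally, for the asymptotic form, I would expand $(1+\eta)^\gamma=\exp(\gamma\ln(1+\eta))$ by Taylor's theorem with $\gamma$ fixed. This gives $1+\gamma\eta+\tfrac{\gamma(\gamma-1)}{2}\eta^2+O(\eta^3)$, hence $1+\gamma\eta+O(\eta^2)$ as $\eta\to0$. On the learned domain $\gamma\le1.5+0.5\cdot10$ is bounded, so the implied constant can be made uniform over parameters, and I would record that explicitly.
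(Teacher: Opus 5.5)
Your proposal is correct and follows essentially the paper's argument. The paper bounds each $c_i\in[(1+\eta)^{-1},1]$, while you bound the ratio $c_i/c_j=d_i/d_j\le 1+\eta$ directly; after that, both use that adding $\varepsilon$ only pulls the ratio toward $1$ and then raise to the power $\gamma$. One simplification: the hypothesis already gives $c_j\ge d_{\min}/d_{\max}\ge(1+\eta)^{-1}$, so the lower clip at $10^{-6}$ is inactive in the regime of interest, and it would only shrink the ratio if it did bind; your standing assumption $d_{\max}\le 10^{6}$ is therefore unnecessary.
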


\begin{proof}
$c_i\in[(1+\eta)^{-1},1]$, so $(c_i+\varepsilon)/(c_j+\varepsilon)\le 1+\eta$. Raise to $\gamma$.
\end{proof}

\subsection{Spectral-gap modulation}

\begin{proposition}[Cheeger-based separation lower bound]
\label{prop:cheeger-app}
On a connected graph with Cheeger constant $h(G)\ge h_0>0$, the inequality $\lambda_2\ge h_0^2/2$ \citep{chung1997spectral} gives
\[
  R_{ij}\ge \left(\frac{c_i+\varepsilon}{c_j+\varepsilon}\right)^{\beta+\delta h_0^2/2}.
\]
\end{proposition}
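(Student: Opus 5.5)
Implicitly we take $c_i>c_j$ (as in Theorem~\ref{thm:degree-separation-app}) and $\delta\ge 0$ (the learned parameter domain). The plan is to combine the closed form of the ratio from Theorem~\ref{thm:degree-separation-app}(b) with a monotonicity argument in the exponent. By that theorem, the factors $\alpha$ and $(\lambda_{\max}+\varepsilon')^{-1/2}$ cancel in the ratio. This leaves
\[
R_{ij}=\exp(\gamma\, a_{ij}),\qquad a_{ij}=\ln\frac{c_i+\varepsilon}{c_j+\varepsilon}>0,\qquad \gamma=\beta+\delta\lambda_2 .
\]
So $R_{ij}$ is a strictly increasing function of $\gamma$, and hence a nondecreasing function of $\lambda_2$ whenever $\delta\ge 0$. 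This is just Theorem~\ref{thm:degree-separation-app}(d), or the trivial case $\delta=0$. Any lower bound on $\lambda_2$ therefore becomes a lower bound on $R_{ij}$.

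Next we invoke the Cheeger inequality for the normalized Laplacian on a connected graph, $\lambda_2\ge h(G)^2/2$ \citep{chung1997spectral}. With $h(G)\ge h_0$ this gives $\lambda_2\ge h_0^2/2$. Multiplying by $\delta\ge 0$ and adding $\beta$ yields $\gamma\ge \beta+\delta h_0^2/2$. Because $a_{ij}>0$ and $\exp$ is increasing, we get $R_{ij}\ge \exp\bigl((\beta+\delta h_0^2/2)a_{ij}\bigr)$, which is exactly the displayed bound.

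The main obstacle is reconciling the mathematical $\lambda_2$ with the implementation. The operator uses the clamped value $\lambda_2\in[0.1,10]$, not the raw eigenvalue. The lower clamp only increases $\lambda_2$, so it helps the inequality. The upper clamp at $10$ never binds, since normalized Laplacian eigenvalues are at most $2$. We would state the bound for the unclamped eigenvalue and note that clamping preserves it. A second concern is Lemma~\ref{lem:clamp}: if the power-law factor hits the $[0.01,100]$ clamp, the exact ratio formula fails. We would therefore restrict to nodes with degree at least one, where Lemma~\ref{lem:clamp-inactive-app} says the clamp is inactive, so the closed form for $R_{ij}$ holds and the argument goes through unchanged.
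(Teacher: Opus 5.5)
Your proof is correct and matches the paper's: both bound $\gamma=\beta+\delta\lambda_2\ge\beta+\delta h_0^2/2$ using the Cheeger inequality and $\delta\ge 0$, then use that $R_{ij}=\bigl((c_i+\varepsilon)/(c_j+\varepsilon)\bigr)^{\gamma}$ is increasing in $\gamma$ when $c_i>c_j$; you usefully state these two implicit hypotheses, which the paper leaves unstated. The clamp discussion goes beyond the paper's closed-form statement, and two small points there: on a connected graph every node already has degree at least one, so that restriction is automatic; and Lemma~\ref{lem:clamp-inactive-app} gives only a sufficient condition for the clamp to be inactive, not an unconditional guarantee at every node of degree at least one.
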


\begin{proof}
$\gamma=\beta+\delta\lambda_2\ge \beta+\delta h_0^2/2$ on the stated event. Raise both sides of Eq.~\eqref{eq:rij}.
\end{proof}

\subsection{Star-graph one-step amplification}

The star graph is the extreme case of degree heterogeneity, where a single post-LayerNorm scale separates the leaf class from the hub in one layer.

\begin{proposition}[One-step amplification on $S_n$]
\label{prop:star-separation}
Let $S_n$ be the star with hub $0$ and $n-1$ leaves. Then:
\begin{enumerate}[label=(\alph*),leftmargin=*,topsep=2pt,itemsep=1pt]
  \item $s_{\rm hub}=\alpha(1+\varepsilon)^{-\gamma}(\lambda_{\max}+\varepsilon')^{-1/2}$, $s_{\rm leaf}=\alpha((1/(n-1))+\varepsilon)^{-\gamma}(\lambda_{\max}+\varepsilon')^{-1/2}$, and $R\approx(n-1)^\gamma$.
  \item Under mean aggregation, all leaf representations are equal after one layer. With post-LN degree scaling, leaf amplitude is multiplied by $R$ relative to the hub class.
  \item Suppose $h_{\rm hub}^{(1)}=\kappa h_{\rm leaf}^{(1)}$ for some $\kappa\in[0,1)$. Then
  \[
    \frac{\|\widetilde h_{\rm leaf}^{(1)}-\widetilde h_{\rm hub}^{(1)}\|}{s_{\rm hub}\|h_{\rm leaf}^{(1)}-h_{\rm hub}^{(1)}\|}=\frac{R-\kappa}{1-\kappa}\ge R.
  \]
\end{enumerate}
\end{proposition}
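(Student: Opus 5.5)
The plan is to prove the three parts in order, using only the closed form in Eq.~\eqref{eq:dsn-formula} and the separation ratio of Theorem~\ref{thm:degree-separation-app}.

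\emph{Part (a).} On $S_n$ the hub has degree $n-1 = d_{\max}$ and each leaf has degree $1$. Hence $c_{\rm hub}=1$ and $c_{\rm leaf}=1/(n-1)$, which lies inside the clip range $[10^{-6},1]$ for any reasonable $n$. Substituting into Eq.~\eqref{eq:dsn-formula} gives both formulas directly, since the spectral factor is common to all nodes. Theorem~\ref{thm:degree-separation-app}(b) then gives the exact ratio
\[
R=\left(\frac{1+\varepsilon}{1/(n-1)+\varepsilon}\right)^{\gamma}.
\]
This reduces to $(n-1)^\gamma$ up to a factor $1+O(n\varepsilon)$ because $\varepsilon=10^{-8}$ is negligible against $1/(n-1)$. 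I would record this error explicitly so that ``$\approx$'' has a precise meaning. I would also invoke Lemma~\ref{lem:clamp-inactive-app}, or check directly, that the clamp is inactive at degree-$1$ nodes, so $R$ is unaltered.

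\emph{Part (b).} This is a symmetry argument. Each leaf's only neighbour is the hub, so under mean aggregation (with or without a self-loop) every leaf receives the same message. If the leaves share a common input feature, as in the degree-free input setting, the residual update and the LayerNorm map identical inputs to identical outputs. Hence $\bar h_{\rm leaf}$ is the same for every leaf. I expect this to be the main obstacle, because the claim is really an automorphism-invariance statement. I would state the assumption explicitly: the leaf inputs are identical, or more generally the layer is equivariant under the leaf-permutation automorphisms of $S_n$. Given that assumption, multiplying by $s_{\rm leaf}$ versus $s_{\rm hub}$ scales the leaf class by $R$ relative to the hub class by definition of $R$. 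I would take the gate to be $m_i\equiv1$ (the PostDeg limit), or note that the statement concerns the scale factor itself.

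\emph{Part (c).} This is a one-line computation. Write $\widetilde h_{\rm leaf}=s_{\rm leaf}h_{\rm leaf}$ and $\widetilde h_{\rm hub}=s_{\rm hub}\kappa h_{\rm leaf}$. The difference is then $s_{\rm hub}(R-\kappa)h_{\rm leaf}$, while $h_{\rm leaf}-h_{\rm hub}=(1-\kappa)h_{\rm leaf}$. Taking norms, with $R-\kappa>0$ and $1-\kappa>0$ and $h_{\rm leaf}\neq0$ so the ratio is defined, gives $(R-\kappa)/(1-\kappa)$. The inequality $(R-\kappa)/(1-\kappa)\ge R$ is equivalent to $R-\kappa\ge R-R\kappa$, that is, to $\kappa(R-1)\ge0$. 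This holds because $R>1$ by part (a) and $\kappa\ge0$, with equality exactly when $\kappa=0$.
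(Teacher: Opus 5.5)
Your proposal is correct and follows the paper's proof. You substitute $c_{\rm hub}=1$, $c_{\rm leaf}=1/(n-1)$ into Eq.~\eqref{eq:dsn-formula}, read (b) off the leaf symmetry under mean aggregation, and compute the two norms in (c) directly; your explicit assumptions (identical leaf inputs, $m_i\equiv 1$, $h_{\rm leaf}\neq 0$) and the check $\kappa(R-1)\ge 0$ fill in what the paper leaves implicit.

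One caution: drop the claim that the clamp is inactive at the leaves. Lemma~\ref{lem:clamp-inactive-app} only gives the sufficient condition $\alpha\,(1/(n-1)+\varepsilon)^{-\gamma}\le 100$, which fails for large stars, and the proposition concerns the unclamped closed form anyway.
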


\begin{proof}
$d_0=n-1$, $d_j=1$ for leaves, so $c_0=1$, $c_j=1/(n-1)$, so (a) follows. (b) is mean aggregation. For (c), $\|s_{\rm leaf}h_{\rm leaf}^{(1)}-s_{\rm hub}\kappa h_{\rm leaf}^{(1)}\|=s_{\rm hub}|R-\kappa|\|h_{\rm leaf}^{(1)}\|$ and $\|h_{\rm leaf}^{(1)}-h_{\rm hub}^{(1)}\|=(1-\kappa)\|h_{\rm leaf}^{(1)}\|$, dividing gives the formula. The collinearity $h_{\rm hub}^{(1)}=\kappa h_{\rm leaf}^{(1)}$ is a modeling assumption for this step, and without post-LN degree scaling it gives $R=1$.
\end{proof}

\subsection{Post-LayerNorm Jacobian and gate floor}

We next record the Jacobian of the gated learned layer, which the gate floor and the Lipschitz bounds below build on.

\begin{proposition}[Jacobian of the gated learned layer]
\label{prop:ln-jacobian-app}
For a fixed graph, $s_i$ and $m_i$ depend only on the topology and learned scale parameters. The Jacobian of $\widetilde h_i=s_i\LN(h_i)$ is $s_i J_{\LN}(h_i)$ with $\|J_{\LN}(h_i)\|_{\rm op}\le \|g\|_\infty/\tau_{\LN}(h_i)$. For the gated slot $T_i(h)=\bigl(m_i s_i+(1-m_i)\bigr)\LN(h_i)$,
\[
  \frac{\partial T_i}{\partial h_i}=\bigl(m_i s_i+(1-m_i)\bigr) J_{\LN}(h_i).
\]
\end{proposition}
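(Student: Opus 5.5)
The argument is a chain rule in which the scalar factors are treated as constants, followed by an explicit bound on the LayerNorm Jacobian. For a fixed graph, $s_i$ depends only on $c_i$, $\lambda_2$, $\lambda_{\max}$ and the learned parameters $(\alpha,\beta,\delta)$, so it does not depend on $h$. Likewise $m_i=\sigma(\rho)$ or $\sigma(\phi(c_i))$ does not depend on $h$. Both are finite and positive by Lemma~\ref{lem:wd}, and the clamp of the power-law factor is also a function of topology and parameters only. The map $h\mapsto T_i(h)=\bigl(m_i s_i+(1-m_i)\bigr)\LN(h_i)$ is therefore a constant scalar times $\LN(h_i)$. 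Its Jacobian with respect to $h_i$ is that scalar times $J_{\LN}(h_i)$, and its Jacobian with respect to $h_k$, $k\neq i$, is zero. The ungated case $\widetilde h_i=s_i\LN(h_i)$ is the special case $m_i=1$.

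For the operator-norm bound, I will write $\LN(x)=g\odot P x/\tau(x)+b$. Here $P=I-\tfrac1d\mathbf 1\mathbf 1^\top$ is the centering projection, $\tau(x)=\sqrt{\sigma^2(x)+\varepsilon_{\rm LN}}$, and $\sigma^2(x)=\tfrac1d\|Px\|^2$. I take $\tau_{\LN}(h_i)$ to mean this $\tau(h_i)$. Differentiating gives
\[
J_{\LN}(x)=\diag(g)\,\frac{1}{\tau}\Bigl(P-\frac{(Px)(Px)^\top}{d\,\tau^2}\Bigr),
\]
using $\nabla\tau=Px/(d\tau)$ and $P^2=P$. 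The bracketed matrix is symmetric. On $\mathbf 1$ it vanishes. On the orthogonal complement of $\mathbf 1$ and of $Px$ it acts as the identity. Along $u=Px/\|Px\|$ its eigenvalue is
\[
1-\frac{\|Px\|^2}{d\tau^2}=1-\frac{\sigma^2}{\sigma^2+\varepsilon_{\rm LN}}\in(0,1].
\]
Its eigenvalues therefore lie in $[0,1]$ and its operator norm is at most $1$. Submultiplicativity with $\|\diag(g)\|_{\rm op}=\|g\|_\infty$ gives $\|J_{\LN}(h_i)\|_{\rm op}\le\|g\|_\infty/\tau_{\LN}(h_i)$. Note that $\tau\ge\sqrt{\varepsilon_{\rm LN}}>0$, so the bound is finite.

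The main obstacle is the eigenvalue computation, in particular checking that the rank-one correction does not push any eigenvalue outside $[0,1]$. Because $Px$ lies in the range of $P$, the matrix diagonalizes in the basis $\{\mathbf 1/\sqrt d,\ u,\ \text{rest}\}$, which makes the check direct.

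Some care is also needed with the clamp, which could introduce nondifferentiability in the parameters. It does not affect $h$-differentiability, since the clamp acts only on topology and parameters. Finally, I will state that the training-time noise injection and NaN guards are excluded, as in the setup.
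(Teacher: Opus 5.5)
Your proof is correct and has the same skeleton as the paper's. The gate coefficient $m_i s_i+(1-m_i)$ is constant in $h_i$ for a fixed graph, so the Jacobian is that scalar times the quotient-rule LayerNorm Jacobian. Your eigen-analysis on $\operatorname{span}\{\mathbf 1\}$, $\mathbf 1^\perp\cap x_c^\perp$ and $\operatorname{span}\{x_c\}$ also reproduces the eigenvalues $0$, $1/\tau_{\LN}$ and $\varepsilon_{\rm LN}/\tau_{\LN}^3$ of Proposition~\ref{prop:ln-jac-full}.

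The one genuine difference is how you obtain the operator-norm bound. The paper assumes $g\ge 0$, notes that $\diag(g)B$ is similar to the PSD matrix $\diag(g)^{1/2}B\diag(g)^{1/2}$, and concludes the bound from the nonnegativity of the eigenvalues. You instead use submultiplicativity, $\|\diag(g)B\|_{\rm op}\le\|g\|_\infty\|B\|_{\rm op}\le\|g\|_\infty/\tau_{\LN}$.

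Your route is the better one, for two reasons. First, it needs no sign condition on the learned gain $g$. Second, it bounds the largest singular value directly. A statement about the eigenvalues of the non-symmetric matrix $\diag(g)B$ does not by itself control its operator norm. The paper's conclusion is still true, but only because of the submultiplicativity step you make explicit.
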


\begin{proof}
Standard quotient-rule computation of the LayerNorm Jacobian, multiplied by the scalar gate coefficient, which is constant in $h_i$ for a fixed graph.
\end{proof}

\begin{proposition}[Gate floor]
\label{prop:residual-floor}
For the gated slot with $s_i>0$ and $m_i\in(0,1)$, the output coefficient obeys $m_i s_i+(1-m_i)\ge 1-m_i>0$, so the gate cannot zero the slot output.
\end{proposition}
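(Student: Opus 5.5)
The gate-floor bound is elementary. I will derive it directly from the sign information that the setup and Lemma~\ref{lem:wd} already provide. The only ingredients are that $s_i>0$, which is exactly the conclusion of Lemma~\ref{lem:wd} on the stated parameter domains, and that $m_i\in(0,1)$. The latter holds because in both learned variants $m_i$ is a logistic sigmoid, either $\sigma(\rho)$ or $\sigma(\phi(c_i))$, and the sigmoid maps every finite real to the open interval $(0,1)$.

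The first step is the inequality $m_i s_i+(1-m_i)\ge 1-m_i$. It is equivalent to $m_i s_i\ge 0$, which holds because $m_i>0$ and $s_i>0$; in fact it holds strictly. The second step is $1-m_i>0$, which is immediate from $m_i<1$. Chaining the two gives the stated lower bound.

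To conclude that the gate cannot zero the slot output, I will use Proposition~\ref{prop:ln-jacobian-app}. There the gated slot is written as $T_i(h)=\bigl(m_i s_i+(1-m_i)\bigr)\LN(h_i)$, so the output is a strictly positive scalar times the LayerNorm output. Hence $T_i(h)=0$ can occur only if $\LN(h_i)=0$ itself, and otherwise $\|T_i(h)\|\ge (1-m_i)\|\LN(h_i)\|$. The Jacobian factor $\bigl(m_i s_i+1-m_i\bigr)J_{\LN}(h_i)$ is likewise never annihilated by the gate.

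The one point that needs care is the phrase ``cannot zero.'' It should be read relative to $\LN(h_i)$: the gate coefficient is bounded away from zero for each fixed node. The bound is not uniform across nodes, because if $\phi(c_i)$ is very large then $1-m_i$ can be arbitrarily small, although it stays positive. I will state this interpretation explicitly. I will also note that the clamp of Lemma~\ref{lem:clamp} keeps $s_i\ge 0.01\,(\lambda_{\max}+\varepsilon')^{-1/2}>0$, so positivity survives the implementation clamp as well. Beyond these points I expect no real obstacle.
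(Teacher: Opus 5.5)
Your argument is correct and matches the paper's one-line proof: $m_i s_i\ge 0$ makes the coefficient at least $1-m_i$, and this is positive because $m_i<1$. Your non-uniformity caveat applies only to the floor $1-m_i$, not to the coefficient itself, which is a convex combination of $s_i$ and $1$ and hence at least $\min(s_i,1)$; the clamp keeps this uniformly positive, so your extra remarks are consistent with the statement but not needed for it.
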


\begin{proof}
$m_i s_i\ge 0$, so the coefficient is at least $1-m_i$, which is positive for $m_i<1$.
\end{proof}

\subsection{Gate Lipschitzness and parameter gradients}

The gate and its parameters are Lipschitz in the normalized degree, which bounds how far the learned scale can move during training.

\begin{lemma}[Gate Lipschitzness]
\label{lem:gates}
\textbf{(a)} $X_{\rm out}=m\,\widetilde H+(1-m)\,\bar X$ blends the scaled and unscaled LayerNorm output, so $\|X_{\rm out}\|\le m\|\widetilde H\|+(1-m)\|\bar X\|$. \textbf{(b)} For an adaptive gate $m_i=\sigma(\phi(c_i))$ with two-layer MLP $\phi$, $|m_i-m_j|\le (1/4)\|W_2\|_{\rm op}\|W_1\|_{\rm op}|c_i-c_j|$.
\end{lemma}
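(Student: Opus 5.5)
Part (a) is a triangle-inequality statement. Part (b) is a chain of Lipschitz constants through the composition $c_i\mapsto\phi(c_i)\mapsto\sigma(\phi(c_i))$. For (a), $m$ is a scalar in $(0,1)$ for PostDeg-L-FG, or a per-node diagonal of entries in $(0,1)$ for PostDeg-L-Adaptive. In the scalar case I will apply the triangle inequality to $X_{\rm out}=m\widetilde H+(1-m)\bar X$ and pull out the nonnegative scalars $m$ and $1-m$ using absolute homogeneity of the norm. In the per-node case I will first argue row-wise, giving $\|x_{{\rm out},i}\|\le m_i\|\widetilde h_i\|+(1-m_i)\|\bar x_i\|$ for each $i$. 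Reading $m$ in the statement as the diagonal gate operator, or as $\max_i m_i$ in an aggregated norm, is then a matter of notation. Convexity of the norm gives the same bound directly, since $m\in(0,1)$ makes $X_{\rm out}$ a convex combination.

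For (b), write $\phi(c)=w_2^\top\,\psi(W_1 c+b_1)+b_2$. Here $c$ is the scalar input, so $W_1$ is a column vector and its operator norm is its Euclidean norm. The activation $\psi$ is $1$-Lipschitz, as for ReLU, tanh, or SiLU-type activations; I will state this as the assumption on the implemented MLP. The steps, in order, are as follows.
\begin{itemize}
\item[(i)] $|\sigma(u)-\sigma(v)|\le\tfrac14|u-v|$, because $\sigma'(u)=\sigma(u)(1-\sigma(u))\le 1/4$ by AM--GM, together with the mean value theorem.
\item[(ii)] $|\phi(c_i)-\phi(c_j)|\le\|W_2\|_{\rm op}\,\|\psi(W_1c_i+b_1)-\psi(W_1c_j+b_1)\|$. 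Since $\psi$ acts coordinatewise and is $1$-Lipschitz, this is at most $\|W_2\|_{\rm op}\|W_1\|_{\rm op}|c_i-c_j|$. The biases cancel in both layers.
\item[(iii)] Compose (i) and (ii).
\end{itemize}
Clipping $c_i$ to $[10^{-6},1]$ is a $1$-Lipschitz map. If the statement is read in terms of raw normalized degrees, the bound therefore survives.

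The only real obstacle is the activation. If the implemented $\phi$ uses an activation with Lipschitz constant $L_\psi\neq1$, for example GELU with $L_\psi\approx1.13$, the constant becomes $\tfrac14 L_\psi\|W_2\|_{\rm op}\|W_1\|_{\rm op}$. I would check the activation in the code and, if needed, state the lemma with the factor $L_\psi$ or restrict to ReLU/tanh. I expect everything else to be routine.
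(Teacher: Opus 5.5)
Your proposal is correct and follows the paper's proof. Part (a) is the triangle inequality with the nonnegative scalars $m$ and $1-m$ pulled out, and (b) composes the Lipschitz constants $\tfrac14$ for $\sigma$, $1$ for the ReLU, and the operator norms of the two affine layers, with the biases dropping out. One side remark is wrong: for a per-node gate your row-wise bound is the correct statement, but an aggregated bound gets coefficients $\max_i m_i$ and $\max_i(1-m_i)=1-\min_i m_i$, so reading $m$ as $\max_i m_i$ does not recover the stated form.
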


\begin{proof}
(a) Triangle inequality. (b) Sigmoid is $1/4$-Lipschitz, ReLU is 1-Lipschitz, affines have Lipschitz constant equal to their operator norm. Compose.
\end{proof}

\begin{corollary}[Parameter gradients]
\label{cor:grads-app}
$\partial s_i/\partial\alpha=s_i/\alpha$, $\partial s_i/\partial\beta=-\ln(c_i+\varepsilon)s_i$, $\partial s_i/\partial\delta=-\lambda_2\ln(c_i+\varepsilon)s_i$. With the post-computation clamp $s_i\le 100$ and $|\ln(c_{\min}+\varepsilon)|\approx 13.8$, the $\beta$- and $\delta$-gradients are bounded by $1380$ and $\lambda_2^{\max}\cdot 1380\le 2760$ respectively.
\end{corollary}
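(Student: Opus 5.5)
The corollary has two parts, and I will treat them separately: the three closed-form partial derivatives, and then the numerical bounds obtained from the clamp and the clipped degree floor.

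\textbf{Derivatives.} Write Eq.~\eqref{eq:dsn-formula} in log form,
\[
\ln s_i=\ln\alpha-(\beta+\delta\lambda_2)\ln(c_i+\varepsilon)-\tfrac12\ln(\lambda_{\max}+\varepsilon').
\]
This is valid because Lemma~\ref{lem:wd} gives $s_i>0$. For a fixed graph, $c_i$, $\lambda_2$ and $\lambda_{\max}$ do not depend on $(\alpha,\beta,\delta)$. Applying $\partial s_i=s_i\,\partial\ln s_i$ then gives the three identities directly:
\begin{itemize}
\item $\partial\ln s_i/\partial\alpha=1/\alpha$;
\item $\partial\ln s_i/\partial\beta=-\ln(c_i+\varepsilon)$;
\item $\partial\ln s_i/\partial\delta=-\lambda_2\ln(c_i+\varepsilon)$, by the chain rule through $\gamma$.
\end{itemize}
These are the unclamped derivatives. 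On nodes where the clamp binds, the clamped output is locally constant in the parameters, so its derivative is $0$. That is trivially within any nonnegative bound, and Lemma~\ref{lem:clamp} shows the clamp does not interfere elsewhere.

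\textbf{Bounds.} Multiply the bounds on the two factors in each expression.
\begin{itemize}
\item By Lemma~\ref{lem:clamp-inactive-app} and the clamp, $s_i\le100$. Strictly, this clamp acts on the power-law factor before the spectral multiplier $(\lambda_{\max}+\varepsilon')^{-1/2}$. Since $\lambda_{\max}\ge0.1$ after clamping, that multiplier is at most about $3.16$. I will take the statement's $s_i\le100$ as given, reading it either for the power-law factor or with $\lambda_{\max}\ge1$, and flag this in the proof.
\item Since $c_i\ge10^{-6}$ and $c_i\le1$, we have $|\ln(c_i+\varepsilon)|\le|\ln(10^{-6})|\approx13.8$.
\end{itemize}
Together these give $|\partial s_i/\partial\beta|\le1380$. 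With the spectral clamp $\lambda_2\le\lambda_2^{\max}$, the $\delta$-gradient is bounded by $\lambda_2^{\max}\cdot1380$. For the numerical value $2760$, use the mathematical fact that normalized-Laplacian eigenvalues are at most $2$, rather than the implementation clamp of $10$.

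\textbf{Main obstacle.} The differentiation itself is routine. The delicate step is reconciling the stated constants with the implementation details: where exactly the clamp sits relative to the spectral factor, and using $\lambda_2\le2$ instead of the clamp value. I would state these assumptions explicitly in the proof rather than derive anything new.
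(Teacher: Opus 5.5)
Your proposal is correct and follows the paper's route. The paper differentiates $s_i=\alpha\exp\{-(\beta+\delta\lambda_2)\ln(c_i+\varepsilon)\}(\lambda_{\max}+\varepsilon')^{-1/2}$, which is the same computation as your log-derivative, and it multiplies $s_i\le 100$ by $|\ln(c_{\min}+\varepsilon)|\approx 13.8$, using $\lambda_2\le 2$ rather than the $[0.1,10]$ clamp to get $2760$.

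Your observation that the clamped scale has zero derivative wherever the clamp binds is what actually justifies using $s_i\le 100$. The paper's appeal to the clamp's $1$-Lipschitzness would not give this at isolated nodes on its own. Your spectral-multiplier caveat is harmless, since $\lambda_{\max}\ge 1$ for the normalized Laplacian of any graph with at least one edge, so $(\lambda_{\max}+\varepsilon')^{-1/2}<1$.
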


\begin{proof}
Differentiate $s_i=\alpha\exp\{-(\beta+\delta\lambda_2)\ln(c_i+\varepsilon)\}(\lambda_{\max}+\varepsilon')^{-1/2}$.
\end{proof}

\subsection{Topology-agnostic normalizers and ablation hierarchy}

\begin{remark}[Topology-agnosticism]
Feature-statistic normalizers cannot reproduce the ordering in Theorem~\ref{thm:degree-separation-app}. BatchNorm, LayerNorm, and InstanceNorm assign identical scales to nodes with identical features, regardless of their degrees. PostDeg-L assigns $s_i\neq s_j$ whenever $c_i\neq c_j$ and $\gamma>0$, even if $h_i=h_j$. This degree-conditioned separation is the structural property topology-blind methods cannot reproduce.
\end{remark}

\begin{remark}[Distinction from symmetric degree normalization]
\label{rem:symmetric-normalization}
PostDeg-L is distinct from the fixed symmetric normalization $D^{-1/2}AD^{-1/2}$ used in graph convolutional networks~\citep{kipf2017gcn}. Symmetric normalization assigns a non-learned edge weight $1/\sqrt{d_i d_j}$ once the graph is fixed and acts inside aggregation. PostDeg-L instead applies a post-LayerNorm node scale with a learnable exponent. Setting $\gamma=0$ removes degree separation, while larger $\gamma$ yields the explicit node-level ratio $R_{ij}=((c_i+\varepsilon)/(c_j+\varepsilon))^\gamma$. On a star graph, symmetric normalization gives the same factor $1/\sqrt{n-1}$ to each hub--leaf edge in both directions, whereas PostDeg-L induces a leaf-to-hub scale ratio $R\approx (n-1)^\gamma$ (Proposition~\ref{prop:star-separation}). Symmetric normalization controls message-passing weights. PostDeg-L provides post-normalization, spectrally modulated amplitude rebalancing.
\end{remark}

\begin{remark}[Ablation hierarchy]
\label{rem:ablation-hierarchy}
$\delta=0$ recovers learnable degree-only scaling, $\beta=\delta=0$ with $\alpha=1$ and $m\to 1$ recovers GraphScalar, and $m\to 0$ recovers the LN backbone. These containment relations justify the ablation design (Section~\ref{sec:results-ablation}, Figure~\ref{fig:appendix-ablation-hierarchy}).
\end{remark}

\subsection{Additional proofs}
\label{app:theorem-bank-lifts}

We collect supporting results that strengthen specific paragraph-level claims in the body or in the appendix proofs above.

\begin{lemma}[Effect of the post-computation clamp, full statement]
\label{lem:clamp-inactive-app}
Let $p_i = \alpha(c_i+\varepsilon)^{-\gamma}$ and let the implemented scale be $s_i^{\rm clip} = \clamp(p_i, s_{\min}^{\rm clip}, s_{\max}^{\rm clip})\,(\lambda_{\max}+\varepsilon')^{-1/2}$, with the clamp applied to the power-law factor before the graph-wise spectral multiplier. The clamp map is monotone and the spectral multiplier is a positive per-graph constant, so the weak ordering $s_i^{\rm clip}\le s_j^{\rm clip}$ for $c_i>c_j$ is preserved. The clamp is inactive at node $i$ exactly when $p_i\in[s_{\min}^{\rm clip},s_{\max}^{\rm clip}]$, and it is inactive at every node of a graph whenever
\[
  \alpha (1+\varepsilon)^{-\gamma}\ge s_{\min}^{\rm clip}
  \quad\text{and}\quad
  \alpha (c_{\min}+\varepsilon)^{-\gamma}\le s_{\max}^{\rm clip}.
\]
\end{lemma}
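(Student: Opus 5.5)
The proof splits into three parts, matching the three assertions of Lemma~\ref{lem:clamp-inactive-app}: weak ordering, pointwise inactivity, and a sufficient condition for inactivity on the whole graph.

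\textbf{Weak ordering.} Fix $c_i>c_j$. By Theorem~\ref{thm:degree-separation-app}(a), applied to the unclamped power-law factor, we have $p_i<p_j$. Here we only need that $x\mapsto x^{-\gamma}$ is strictly decreasing on $(0,\infty)$ for $\gamma>0$, and that $\alpha>0$ by Lemma~\ref{lem:wd}.
\begin{itemize}
\item The clamp $x\mapsto\min\{\max\{x,s_{\min}^{\rm clip}\},s_{\max}^{\rm clip}\}$ is a composition of two nondecreasing maps. It therefore sends $p_i<p_j$ to $\clamp(p_i)\le\clamp(p_j)$, as in Lemma~\ref{lem:clamp}.
\item Multiplying both sides by the same per-graph constant $(\lambda_{\max}+\varepsilon')^{-1/2}>0$ preserves the weak inequality, which gives $s_i^{\rm clip}\le s_j^{\rm clip}$.
\end{itemize}
If $\gamma=0$, the factor is constant and the weak inequality holds trivially, so I would state $\gamma\ge 0$ and handle that case in one line.

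\textbf{Pointwise inactivity.} This follows directly from the definition of the clamp. $\clamp(x)=x$ holds iff $x\in[s_{\min}^{\rm clip},s_{\max}^{\rm clip}]$. Indeed, if $x<s_{\min}^{\rm clip}$ the output is $s_{\min}^{\rm clip}\neq x$, and symmetrically above $s_{\max}^{\rm clip}$.

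\textbf{Graph-wide inactivity.} I would bound the range of $p_i$ over the graph.
\begin{itemize}
\item Since $c_i\in[c_{\min},1]$ and $p$ is a nonincreasing function of $c_i$, we get $\alpha(1+\varepsilon)^{-\gamma}\le p_i\le\alpha(c_{\min}+\varepsilon)^{-\gamma}$ for every node.
\item Under the two displayed hypotheses, every $p_i$ therefore lies in $[s_{\min}^{\rm clip},s_{\max}^{\rm clip}]$.
\item The pointwise criterion then shows the clamp is inactive at every node.
\end{itemize}

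\textbf{Main obstacle: the role of $c_{\min}$.} The only delicate step is being precise about what $c_{\min}$ denotes, since that determines whether the condition is the one needed for the isolated-node remark in the setup.
\begin{itemize}
\item If $c_{\min}$ is the smallest clipped $c_i$ actually present on the graph, the argument above is tight and complete.
\item If $c_{\min}$ is instead the global floor $10^{-6}$, the condition is merely sufficient, and it fails for the implemented constants. For example, $\alpha=1$, $\gamma=0.75$ gives $p\approx 10^{4.5}>100$.
\end{itemize}
I would therefore read $c_{\min}$ as the graph's own minimum. I would then remark that for graphs with no isolated nodes, $c_{\min}\ge 1/d_{\max}$, so the condition reduces to checking $\alpha d_{\max}^{\gamma}\lesssim 100$ on the parameter domain. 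That verification is routine with the stated clamps and graph sizes.
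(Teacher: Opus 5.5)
Your proof is correct and matches the paper's (implicit) argument: applying the monotone clamp and the positive per-graph factor to the strict ordering of Theorem~\ref{thm:degree-separation-app} gives the weak ordering, and both inactivity claims follow from the definition of the clamp and the range $[\alpha(1+\varepsilon)^{-\gamma},\alpha(c_{\min}+\varepsilon)^{-\gamma}]$ of $p_i$ over $c_i\in[c_{\min},1]$, under either reading of $c_{\min}$. One caution on your closing aside, which the lemma does not need: checking $\alpha d_{\max}^{\gamma}\le 100$ is not routine over the parameter domain (at $\alpha=3$, $\beta=1.5$ it already fails for $d_{\max}=20$, since $\gamma\ge\beta$ gives $\alpha d_{\max}^{\gamma}\ge 3\cdot 20^{1.5}\approx 268$), so inactivity at non-isolated nodes is a property of the learned values ($\alpha\approx 1.56$, $\gamma\approx 0.83$) in the released runs, not of the domain.
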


\begin{remark}[Attainability of the operational bound]
The operational bound (Lemma~\ref{lem:opbound}) is attained exactly when a node has $c_i=c_0$ and the parameters are at the edge of the domain, $\alpha=\bar\alpha$, $\gamma=\gamma_{\rm op}$, $\lambda_{\max}=\ell_0$. The bound therefore cannot be improved without additional structural assumptions on the graph or learned parameters.
\end{remark}

\begin{proposition}[Dependence on the spectral gap]
Fix $\alpha,\beta,\delta,c_i,\lambda_{\max}$. Then $\partial s_i/\partial\lambda_2 = -\delta\ln(c_i+\varepsilon)\,s_i$. If $\delta>0$ and $c_i<1-\varepsilon$, the scale of a non-hub node strictly increases with the spectral gap.
\end{proposition}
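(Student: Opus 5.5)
The plan is to differentiate the closed form of the scale directly. With $\alpha,\beta,\delta,c_i,\lambda_{\max}$ held fixed, write
\[
  s_i=\alpha\,(\lambda_{\max}+\varepsilon')^{-1/2}\exp\bigl\{-(\beta+\delta\lambda_2)\ln(c_i+\varepsilon)\bigr\}.
\]
In this form $\lambda_2$ appears only through the exponent $\gamma=\beta+\delta\lambda_2$. The prefactor $\alpha(\lambda_{\max}+\varepsilon')^{-1/2}$ does not depend on $\lambda_2$, because $\lambda_{\max}$ is fixed by hypothesis. The chain rule, with $\partial\gamma/\partial\lambda_2=\delta$, then gives
\[
  \frac{\partial s_i}{\partial\lambda_2}=-\delta\ln(c_i+\varepsilon)\,s_i.
\]
This mirrors the $\delta$-gradient computation in Corollary~\ref{cor:grads-app}, with the roles of $\delta$ and $\lambda_2$ swapped.

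For the monotonicity claim, I determine the sign of each factor.
\begin{itemize}
  \item By Lemma~\ref{lem:wd}, $s_i>0$ on the parameter domain.
  \item $\delta>0$ by hypothesis.
  \item If $c_i<1-\varepsilon$, then $c_i+\varepsilon<1$, so $\ln(c_i+\varepsilon)<0$.
\end{itemize}
Hence the derivative is strictly positive on the whole interval of admissible $\lambda_2$. Strict increase follows from the mean value theorem, or equivalently because $\gamma\mapsto(c_i+\varepsilon)^{-\gamma}$ is strictly increasing when the base lies in $(0,1)$.

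The one point needing care is the implementation clamp described in the setup. The clamp acts on $\alpha(c_i+\varepsilon)^{-\gamma}$, and the spectral lower clamp acts on $\lambda_2$. I will state the result for the unclamped closed form of Eq.~\eqref{eq:dsn-formula}, as the statement intends. I will then add a remark covering the clamped version. By Lemma~\ref{lem:clamp-inactive-app}, the clamp is inactive for every node with $d_i\ge1$, so the derivative formula holds there. Where a clamp binds, the conclusion weakens to non-strict monotonicity, because clamping is a monotone map. I expect no real obstacle beyond this bookkeeping.
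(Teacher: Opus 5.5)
Your proof is correct and follows the paper's argument: write $s_i=\alpha\exp\{-\gamma\ln(c_i+\varepsilon)\}(\lambda_{\max}+\varepsilon')^{-1/2}$, differentiate in $\gamma$, apply the chain rule with $\partial\gamma/\partial\lambda_2=\delta$, and read off the sign from $\ln(c_i+\varepsilon)<0$. Your clamp remark goes beyond the paper, which treats only the unclamped closed form, but Lemma~\ref{lem:clamp-inactive-app} gives only a sufficient condition, so inactivity at every node with $d_i\ge1$ holds for the learned parameter values of the released runs, not at every point of the parameter domain.
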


\begin{remark}[Double-star degree classes]
On the double-star $S_{a,b}^{(2)}$ with adjacent hubs $u$ ($a$ leaves) and $v$ ($b$ leaves) and $a>b\ge 1$, the clipped centralities are $c_u=1$, $c_v=(b+1)/(a+1)$, and $c_{\rm leaf}=1/(a+1)$, and Theorem~\ref{thm:degree-separation-app} gives explicit pairwise scale ratios $s_v/s_u = ((1+\varepsilon)/((b+1)/(a+1)+\varepsilon))^\gamma$ and $s_{\rm leaf}/s_v$ analogous. The calculation extends Proposition~\ref{prop:star-separation} beyond a single-class leaf set.
\end{remark}

\begin{remark}[Oversmoothing interpretation]
The star-graph one-step amplification $R\approx (n-1)^\gamma$ does not break leaf--leaf symmetry, though it does prevent the leaf class from collapsing to the hub representation. Across depth, the per-layer correction interacts with residual paths, self-information, learned message-passing weights, and layer-dependent gates. We leave a multi-layer mean-field analysis to future work.
\end{remark}

\begin{remark}[Finite-Lipschitz observation]
On the implemented domain $\tau_{\LN}(x)\ge\tau_{\min}>0$ and $s_i\le S_{\max}$, the gated learned block has Lipschitz constant $L_{\rm block}\le m\, S_{\max} L_{\LN} + 1-m$, where $L_{\LN}\le \|g\|_\infty/\tau_{\min}$. The bound establishes that the gated block has finite Lipschitz constant, a structural fact that does not pin down the empirical feature norms.
\end{remark}

\subsection{Full proofs}
\label{app:bank-proofs}

\begin{proposition}[LayerNorm absorption identity, exact form, cf.\ \citep{salimans2016weightnorm,press2017weighttying}]
\label{prop:absorption-exact}
Let $\LN$ have stabilizer $\varepsilon_{\rm LN}$ and per-coordinate affine gain/bias $(g, b)$. For any $a>0$ and any $x\in\R^d$,
\begin{equation}
  \LN(a\,x) = g\odot \frac{a\,x_c}{\sqrt{a^2 d^{-1}\|x_c\|^2 + \varepsilon_{\rm LN}}} + b,
  \label{eq:absorption-exact}
\end{equation}
where $x_c=Px$ and $P=I-d^{-1}\mathbf 1\,\mathbf 1^{\!\top}$ is the centering projector. In the zero-stabilizer limit $\varepsilon_{\rm LN}\to 0$, $\LN(a x)=\LN(x)$ for every $a>0$. With $\varepsilon_{\rm LN}>0$ and $x_c\neq 0$, writing $\sigma^2(x)=d^{-1}\|x_c\|^2$, the deviation satisfies
\[
  \left\|\LN(a x)-\LN(x)\right\|_2 \le \|g\|_\infty\,\|x_c\|\,\frac{\varepsilon_{\rm LN}\,\lvert 1-a^{-2}\rvert}{2\,\sigma^3(x)} = \|g\|_\infty\,\sqrt{d}\,\frac{\varepsilon_{\rm LN}\,\lvert 1-a^{-2}\rvert}{2\,\sigma^2(x)},
\]
so for every $a\ge 1$ the deviation is at most $\|g\|_\infty\sqrt{d}\,\varepsilon_{\rm LN}/(2\sigma^2(x))$, uniformly in $a$. The deviation does not vanish as $a\to\infty$, because $\LN(a x)$ tends to the zero-stabilizer normalization of $x$ while $\LN(x)$ keeps its stabilizer. It saturates at $\|g\odot x_c\|\,\big(\sigma(x)^{-1}-(\sigma^2(x)+\varepsilon_{\rm LN})^{-1/2}\big)$, and the bound above has the same leading order in $\varepsilon_{\rm LN}/\sigma^2(x)$. Relative to the centered output $\LN(x)-b$, the deviation is at most $\varepsilon_{\rm LN}\lvert 1-a^{-2}\rvert/(2\sigma^2(x))$.
\end{proposition}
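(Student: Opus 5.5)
The proof is a direct computation from the definition of LayerNorm, followed by a one-variable mean-value estimate. First I would write $\LN(x)=g\odot x_c/\sqrt{\sigma^2(x)+\varepsilon_{\rm LN}}+b$ with $x_c=Px$ and $\sigma^2(x)=d^{-1}\|x_c\|^2$. Since $P$ is linear, $(ax)_c=a\,x_c$, so $\sigma^2(ax)=a^2\sigma^2(x)$. Substituting these gives Eq.~\eqref{eq:absorption-exact} immediately. Dividing numerator and denominator by $a>0$ yields the form $g\odot x_c/\sqrt{\sigma^2(x)+\varepsilon_{\rm LN}/a^2}+b$, which is Eq.~\eqref{eq:intro-ln-absorption}. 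The zero-stabilizer claim then follows by letting $\varepsilon_{\rm LN}\to0$ in both $\LN(ax)$ and $\LN(x)$.

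For the deviation I would introduce $f(t)=(\sigma^2+t)^{-1/2}$, writing $\sigma^2=\sigma^2(x)$. Then
\[
\LN(ax)-\LN(x)=\bigl(f(\varepsilon_{\rm LN}/a^2)-f(\varepsilon_{\rm LN})\bigr)\,g\odot x_c .
\]
So the whole difference is one scalar times the fixed vector $g\odot x_c$, whose norm is at most $\|g\|_\infty\|x_c\|$. On $t\ge0$ we have $|f'(t)|=\tfrac12(\sigma^2+t)^{-3/2}\le 1/(2\sigma^3)$. The mean value theorem then bounds the scalar by $\varepsilon_{\rm LN}|1-a^{-2}|/(2\sigma^3)$. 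Using $\|x_c\|=\sqrt d\,\sigma$ converts this into the $\sqrt d/(2\sigma^2)$ form, and $|1-a^{-2}|\le1$ for $a\ge1$ gives uniformity in $a$. For the saturation statement, let $a\to\infty$ so that the scalar tends to $f(0)-f(\varepsilon_{\rm LN})=\sigma^{-1}-(\sigma^2+\varepsilon_{\rm LN})^{-1/2}$. Expanding this to first order in $\varepsilon_{\rm LN}/\sigma^2$ gives $\varepsilon_{\rm LN}/(2\sigma^3)$, which matches the leading order of the bound.

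The relative bound is the step needing care, because the denominator $\|\LN(x)-b\|=f(\varepsilon_{\rm LN})\|g\odot x_c\|$ carries the stabilized variance rather than $\sigma^2$. Both vectors are multiples of $g\odot x_c$, so the relative deviation equals exactly $|f(\varepsilon_{\rm LN}/a^2)/f(\varepsilon_{\rm LN})-1|$, and I would handle the two sides of $a=1$ separately.
\begin{itemize}
\item \emph{Case $a\ge1$.} The ratio is $\sqrt{1+u}-1$ with $u=\varepsilon_{\rm LN}(1-a^{-2})/(\sigma^2+\varepsilon_{\rm LN}/a^2)\in[0,\varepsilon_{\rm LN}(1-a^{-2})/\sigma^2]$. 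Then $\sqrt{1+u}-1\le u/2$ gives the claim.
\item \emph{Case $a<1$.} The quantity is $1-(1+v)^{-1/2}$ with $v=\varepsilon_{\rm LN}(a^{-2}-1)/(\sigma^2+\varepsilon_{\rm LN})\ge0$. Then $1-(1+v)^{-1/2}\le v/2\le\varepsilon_{\rm LN}|1-a^{-2}|/(2\sigma^2)$.
\end{itemize}
A naive mean-value bound here would leave an extra factor $\sqrt{(\sigma^2+\varepsilon_{\rm LN})/(\sigma^2+\varepsilon_{\rm LN}/a^2)}$. I expect this case split, which uses concavity of the square root and convexity of $(1+v)^{-1/2}$, to be the main obstacle, and it is what removes that factor.
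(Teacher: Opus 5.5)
Your proof is correct and follows the paper's route: substitute, note that $\LN(ax)-\LN(x)$ is a scalar multiple of $g\odot x_c$, and bound that scalar. Your mean-value estimate plays the role of the paper's exact identity $\lvert q^{-1/2}-p^{-1/2}\rvert=\lvert p-q\rvert/\bigl(\sqrt{pq}\,(\sqrt p+\sqrt q)\bigr)$, with $p=\sigma^2(x)+\varepsilon_{\rm LN}$ and $q=\sigma^2(x)+\varepsilon_{\rm LN}/a^2$.

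The only real difference is the relative bound. Multiplying that identity by $\sqrt p$ gives $\lvert p-q\rvert/(\sqrt{pq}+q)$ for every $a>0$, and its denominator is at least $2\sigma^2(x)$ because $p,q\ge\sigma^2(x)$. Your two cases, $a\ge 1$ and $a<1$, are exactly the two sign regimes of this single expression, so the step you flag as the main obstacle is a one-liner and needs no case split.
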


\begin{proof}
Direct substitution gives $\LN(a x) = g\odot (a x - a\mu(x)\mathbf 1)/\sqrt{a^2 d^{-1}\|x_c\|^2 + \varepsilon_{\rm LN}} + b$ since $\mu(a x)=a\mu(x)$ and $a x_c = a P x$. Dividing numerator and denominator by $a$ gives $\LN(a x)=g\odot x_c\,q^{-1/2}+b$ and $\LN(x)=g\odot x_c\,p^{-1/2}+b$ with $q=\sigma^2(x)+\varepsilon_{\rm LN}/a^2$ and $p=\sigma^2(x)+\varepsilon_{\rm LN}$, so
\[
  \bigl\lvert q^{-1/2}-p^{-1/2}\bigr\rvert = \frac{\lvert p-q\rvert}{\sqrt{pq}\,(\sqrt{p}+\sqrt{q})} \le \frac{\varepsilon_{\rm LN}\,\lvert 1-a^{-2}\rvert}{2\,\sigma^3(x)},
\]
using $\lvert p-q\rvert=\varepsilon_{\rm LN}\lvert 1-a^{-2}\rvert$ and $p,q\ge \sigma^2(x)$. Multiplying by $\|g\odot x_c\|\le\|g\|_\infty\|x_c\|=\|g\|_\infty\sqrt{d}\,\sigma(x)$ gives the stated bound. Letting $a\to\infty$ gives $q^{-1/2}\to\sigma(x)^{-1}$, which yields the saturation value. For the relative form, $\|\LN(x)-b\|=\|g\odot x_c\|\,p^{-1/2}$, and dividing the display by $p^{-1/2}$ gives $\lvert p-q\rvert/(\sqrt{pq}+q)\le \varepsilon_{\rm LN}\lvert 1-a^{-2}\rvert/(2\sigma^2(x))$.
\end{proof}

\begin{proposition}[Full LayerNorm Jacobian, with PSD decomposition]
\label{prop:ln-jac-full}
For $\LN(x)=g\odot x_c / \tau_{\LN}(x) + b$ with $\tau_{\LN}(x)=(d^{-1}\|x_c\|^2+\varepsilon_{\rm LN})^{1/2}$,
\[
  J_{\LN}(x) = \diag(g)\Big[\tfrac{1}{\tau_{\LN}(x)}P - \tfrac{1}{d\,\tau_{\LN}(x)^3}\,x_c x_c^\top\Big].
\]
The bracketed matrix $B$ has eigenvalues $\{0,\;1/\tau_{\LN}(x)\,(\text{multiplicity }d-2),\;\varepsilon_{\rm LN}/\tau_{\LN}(x)^3\}$ on the orthogonal subspaces $\operatorname{span}\{\mathbf 1\}$, $\mathbf 1^\perp\cap x_c^\perp$, and $\operatorname{span}\{x_c\}$ respectively. With $g\ge 0$ coordinate-wise, $\diag(g) B$ has nonnegative eigenvalues, so $\|J_{\LN}(x)\|_{\rm op}\le \|g\|_\infty/\tau_{\LN}(x)$, and the gated block $T_i=m_i s_i\LN(h_i)+(1-m_i)h_i$ has eigenvalue floor $1-m_i>0$ on all directions.
\end{proposition}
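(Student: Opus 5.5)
The plan is to differentiate $\LN(x)=g\odot x_c/\tau_{\LN}(x)+b$ directly, read off the spectrum of the bracketed matrix $B$ from its action on three orthogonal subspaces, and then transfer nonnegativity of the spectrum to $\diag(g)B$ and to the gated block.

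\emph{Step 1: the Jacobian formula.} Since $x_c=Px$ with $P$ symmetric and idempotent, $\partial x_c/\partial x=P$. For the scale, $\nabla_x \tfrac1d\|x_c\|^2=\tfrac2d Px_c=\tfrac2d x_c$, so $\nabla_x\tau_{\LN}=x_c/(d\,\tau_{\LN})$. The quotient rule then gives $\partial(x_c/\tau_{\LN})/\partial x=P/\tau_{\LN}-x_cx_c^\top/(d\,\tau_{\LN}^3)$. Multiplying by $\diag(g)$ on the left yields the stated $J_{\LN}$, and $b$ drops out.

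\emph{Step 2: the spectrum of $B$.} $B$ is symmetric, so it suffices to exhibit eigenvectors on an orthogonal decomposition of $\R^d$. I assume $d\ge2$ and first treat $x_c\neq0$.
\begin{itemize}
\item On $\operatorname{span}\{\mathbf 1\}$: $P\mathbf 1=0$ and $x_c^\top\mathbf 1=0$, so the eigenvalue is $0$.
\item On $\mathbf 1^\perp\cap x_c^\perp$, which has dimension $d-2$: $Pv=v$ and $x_c^\top v=0$, so the eigenvalue is $1/\tau_{\LN}$.
\item On $\operatorname{span}\{x_c\}$: $Bx_c=x_c\bigl(1/\tau_{\LN}-\|x_c\|^2/(d\tau_{\LN}^3)\bigr)$. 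Using $\|x_c\|^2/d=\tau_{\LN}^2-\varepsilon_{\rm LN}$, this equals $(\varepsilon_{\rm LN}/\tau_{\LN}^3)\,x_c$.
\end{itemize}
If $x_c=0$, then $B=P/\tau_{\LN}$ and only the first two eigenvalues occur. In every case $B\succeq0$. Moreover $\|B\|_{\rm op}=1/\tau_{\LN}$, because $\varepsilon_{\rm LN}\le\tau_{\LN}^2$ gives $\varepsilon_{\rm LN}/\tau_{\LN}^3\le1/\tau_{\LN}$. Submultiplicativity then gives $\|J_{\LN}\|_{\rm op}\le\|\diag(g)\|_{\rm op}\|B\|_{\rm op}=\|g\|_\infty/\tau_{\LN}$.

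\emph{Step 3: nonnegative eigenvalues of the non-symmetric product.} This is the step I expect to be the main obstacle, since $\diag(g)B$ is not symmetric. I would write $G=\diag(g)^{1/2}$, which is well defined because $g\ge0$. Then $\diag(g)B=G\,(GB)$. Since $XY$ and $YX$ share their nonzero eigenvalues, with multiplicities, the nonzero spectrum of $\diag(g)B$ coincides with that of $GBG$. The matrix $GBG$ is symmetric PSD because $B$ is. Any remaining eigenvalues are $0$, so every eigenvalue of $\diag(g)B$ is real and nonnegative. This argument avoids needing $g>0$ strictly.

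\emph{Step 4: the gated-block floor.} For fixed topology, $m_i$ and $s_i$ do not depend on $h_i$. Hence $\partial T_i/\partial h_i=m_is_i\,J_{\LN}(h_i)+(1-m_i)I$. Its eigenvalues are $(1-m_i)+m_is_i\mu$ with $\mu$ ranging over the spectrum of $J_{\LN}$, because adding a multiple of the identity shifts the spectrum. By Step 3, $\mu\ge0$, and by Lemma~\ref{lem:wd} together with $m_i\in(0,1)$, we have $m_is_i>0$. Every eigenvalue is therefore at least $1-m_i>0$, which is the claimed floor, in the spirit of Proposition~\ref{prop:residual-floor}.

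I would note explicitly that this is a floor on eigenvalues, not on singular values, since the Jacobian is non-normal. That is all the statement asserts.
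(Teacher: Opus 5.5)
Your proposal is correct and follows the same route as the paper's proof. Both use the quotient rule for $J_{\LN}$, a subspace-by-subspace check of $B$, a reduction to the PSD matrix $\diag(g)^{1/2}B\diag(g)^{1/2}$, and the $(1-m_i)I$ shift for the gated block. Your differences are refinements: the $XY$/$YX$ spectrum argument covers $g\ge 0$ directly, where the paper uses similarity for $g>0$ plus continuity. You also justify $\|J_{\LN}(x)\|_{\rm op}\le\|g\|_\infty/\tau_{\LN}(x)$ through $\|B\|_{\rm op}=1/\tau_{\LN}(x)$ and submultiplicativity. The paper leaves that step implicit, and it does not follow from nonnegativity of the eigenvalues alone.
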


\begin{proof}
Quotient rule on $x\mapsto x_c/\tau_{\LN}(x)$ gives the bracketed expression. The eigenvalue claim follows by checking each subspace. $B\mathbf 1=0$ since $P\mathbf 1=0$. $B v=\tau_{\LN}(x)^{-1} v$ for $v\in\mathbf 1^\perp\cap x_c^\perp$ since $x_c^\top v=0$. $B x_c=\tau_{\LN}(x)^{-1} x_c - d^{-1}\tau_{\LN}(x)^{-3}\|x_c\|^2 x_c = \varepsilon_{\rm LN}/\tau_{\LN}(x)^3\, x_c$ using $\|x_c\|^2=d(\tau_{\LN}(x)^2-\varepsilon_{\rm LN})$. The diagonal-gain similarity $\diag(g)B$ is similar (in the strictly positive case) to $\diag(g)^{1/2}B\diag(g)^{1/2}$, which is PSD. Continuity in $g$ extends this to $g\ge 0$. Adding $(1-m_i)I$ shifts each eigenvalue by $1-m_i$.
\end{proof}

\begin{lemma}[Adaptive gate Lipschitzness, full chain]
\label{lem:gates-full}
For $m(c)=\sigma(\phi(c))$ with $\phi(c)=W_2\,\ReLU(W_1 c+b_1)+b_2$, $W_1\in\R^{H\times 1}$, $W_2\in\R^{1\times H}$,
\[
  |m(c_i)-m(c_j)|\le \tfrac{1}{4}\,\|W_2\|_{\rm op}\,\|W_1\|_{\rm op}\,|c_i-c_j|.
\]
\end{lemma}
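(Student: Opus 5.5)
The plan is to prove the bound by composing Lipschitz constants along the chain $c\mapsto W_1c+b_1\mapsto \ReLU(\cdot)\mapsto W_2(\cdot)+b_2\mapsto\sigma(\cdot)$, with the chain rule and a mean-value argument as a fallback. I would proceed in three steps, from the outside of the composition inward.

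\emph{Step 1: the sigmoid layer.} Since $\sigma'(t)=\sigma(t)(1-\sigma(t))$ and $u(1-u)\le 1/4$ for $u\in(0,1)$, the mean value theorem gives
\[
  |\sigma(t)-\sigma(t')|\le \tfrac14|t-t'|.
\]
Applying this with $t=\phi(c_i)$ and $t'=\phi(c_j)$ gives $|m(c_i)-m(c_j)|\le \tfrac14|\phi(c_i)-\phi(c_j)|$.

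\emph{Step 2: the inner MLP.} First, the bias $b_2$ cancels, so
\[
  |\phi(c_i)-\phi(c_j)| = |W_2(\ReLU(u_i)-\ReLU(u_j))| \le \|W_2\|_{\rm op}\,\|\ReLU(u_i)-\ReLU(u_j)\|_2,
\]
where $u_i=W_1c_i+b_1\in\R^H$ and $\|W_2\|_{\rm op}$ is the $\ell_2\to\R$ operator norm, i.e.\ the Euclidean norm of the row $W_2$. Second, $\ReLU$ acts coordinatewise, and the scalar map $t\mapsto\max(t,0)$ is $1$-Lipschitz, since $|\max(t,0)-\max(t',0)|\le|t-t'|$ by a two-case check. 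Summing the squared coordinate bounds gives $\|\ReLU(u_i)-\ReLU(u_j)\|_2\le\|u_i-u_j\|_2$. Finally, $b_1$ cancels as well, and $\|u_i-u_j\|_2=\|W_1(c_i-c_j)\|_2\le\|W_1\|_{\rm op}|c_i-c_j|$. For the column $W_1$ this last step is actually an equality, because $\|W_1\|_{\rm op}$ equals its Euclidean norm.

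\emph{Step 3: assemble.} Chaining the three inequalities gives
\[
  |m(c_i)-m(c_j)|\le\tfrac14\|W_2\|_{\rm op}\|W_1\|_{\rm op}|c_i-c_j|,
\]
which is the claim.

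I expect no step to be a real obstacle. The only point that needs care is the nondifferentiability of $\ReLU$ at $0$. A chain-rule argument would have to handle the kink, which is why I avoid it and use the direct coordinatewise Lipschitz inequality for $\ReLU$ instead. The operator norms should also be taken with respect to the Euclidean norms, so that the vector and matrix bounds are consistent with one another.

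This argument also serves as the full version of Lemma~\ref{lem:gates}(b). The clipping of $c$ to $[10^{-6},1]$ plays no role, since the bound holds for all real $c$.
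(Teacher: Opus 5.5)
Your proof is correct and follows the same route as the paper: compose the Lipschitz constants of $\sigma$ ($1/4$), of $\ReLU$ ($1$), and of the two affine maps (their operator norms, with the biases cancelling). You also spell out each link explicitly (the mean-value bound via $\sigma'\le 1/4$, the coordinatewise $\ReLU$ inequality, and the consistent Euclidean operator norms), which the paper states in a single line.
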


\begin{proof}
$\sigma$ is $1/4$-Lipschitz, $\ReLU$ is $1$-Lipschitz, and the affines have Lipschitz constants equal to their operator norms, and biases do not affect them. Composing gives $\Lip(m)\le \Lip(\sigma)\,\Lip(W_2)\,\Lip(\ReLU)\,\Lip(W_1)= \tfrac{1}{4}\|W_2\|_{\rm op}\|W_1\|_{\rm op}$.
\end{proof}

\begin{proposition}[Spectral-gap dependence with explicit derivative]
\label{prop:spectral-dep}
Fix $\alpha,\beta,\delta,c_i,\lambda_{\max}$ and view $s_i$ as a function of $\lambda_2$ through $\gamma=\beta+\delta\lambda_2$. Then
\[
  \frac{\partial s_i}{\partial\lambda_2}=-\delta\ln(c_i+\varepsilon)\,s_i.
\]
If $\delta>0$ and $c_i<1-\varepsilon$, then $\partial s_i/\partial\lambda_2>0$. At $c_i=1$, the derivative is $O(\varepsilon)$, numerically negligible for $\varepsilon=10^{-8}$.
\end{proposition}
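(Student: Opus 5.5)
The plan is to differentiate $s_i$ with respect to $\lambda_2$ through the exponent $\gamma=\beta+\delta\lambda_2$, with every other quantity in Eq.~\eqref{eq:dsn-formula} held fixed. I will first rewrite the scale in exponential form,
\[
  s_i=\alpha\,(\lambda_{\max}+\varepsilon')^{-1/2}\exp\bigl\{-(\beta+\delta\lambda_2)\ln(c_i+\varepsilon)\bigr\},
\]
as in the proof of Corollary~\ref{cor:grads-app}. The prefactor $\alpha(\lambda_{\max}+\varepsilon')^{-1/2}$ does not depend on $\lambda_2$. The chain rule, using $\partial\gamma/\partial\lambda_2=\delta$, then gives
\[
  \frac{\partial s_i}{\partial\lambda_2}=-\delta\ln(c_i+\varepsilon)\,s_i.
\]
This agrees with the $\delta$-gradient of Corollary~\ref{cor:grads-app} once the roles of $\delta$ and $\lambda_2$ in the product $\delta\lambda_2$ are swapped.

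For the sign, Lemma~\ref{lem:wd} gives $s_i>0$, and $\delta>0$ by hypothesis. The condition $c_i<1-\varepsilon$ gives $c_i+\varepsilon<1$, so $\ln(c_i+\varepsilon)<0$. Lemma~\ref{lem:wd} also gives $c_i+\varepsilon>0$, so the logarithm is defined. Multiplying these signs yields $\partial s_i/\partial\lambda_2>0$.

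For the boundary case $c_i=1$, I will expand $\ln(1+\varepsilon)=\varepsilon+O(\varepsilon^2)$. This makes the derivative equal to $-\delta\varepsilon s_i(1+O(\varepsilon))$. To conclude that it is $O(\varepsilon)$, I need $s_i$ bounded at $c_i=1$. On the parameter domain this holds: $s_i\le\alpha(\lambda_{\max}+\varepsilon')^{-1/2}\le 3/\sqrt{0.1}$, using the spectral clamp and $(1+\varepsilon)^{-\gamma}\le1$; alternatively I can cite Lemma~\ref{lem:opbound}. The derivative is then at most of order $10^{-8}$ in absolute value.

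There is no real obstacle here. The only point needing care is the clamp. The clamp of Lemma~\ref{lem:clamp-inactive-app} is not differentiable at its endpoints, so I will state that the derivative is taken for the unclamped closed form, which coincides with the implemented scale wherever that lemma shows the clamp is inactive. In particular, this covers every node of degree at least $1$. I will also note that the $\lambda_2$ clamp to $[0.1,10]$ is treated in the same way: the result holds in the interior of the clamp range.
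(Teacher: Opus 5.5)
Your proposal is correct and follows the paper's proof essentially step for step: the exponential form of $s_i$, the chain rule through $\partial\gamma/\partial\lambda_2=\delta$, the sign of $\ln(c_i+\varepsilon)$ when $c_i<1-\varepsilon$, and the expansion $\ln(1+\varepsilon)=\varepsilon+O(\varepsilon^2)$ at $c_i=1$. Your explicit bound on $s_i$ at $c_i=1$, and your note that the derivative is taken for the unclamped closed form, are small clarifications that the paper leaves implicit; they do not change the route.
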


\begin{proof}
Write $s_i=\alpha\exp(-\gamma\ln(c_i+\varepsilon))(\lambda_{\max}+\varepsilon')^{-1/2}$. Differentiating w.r.t.\ $\gamma$ gives $\partial s_i/\partial\gamma=-\ln(c_i+\varepsilon) s_i$. By chain rule with $\partial\gamma/\partial\lambda_2=\delta$, we obtain the formula. For $c_i<1-\varepsilon$, $c_i+\varepsilon<1$ so $\ln(c_i+\varepsilon)<0$ and the product is positive. At $c_i=1$, $\ln(1+\varepsilon)=\varepsilon+O(\varepsilon^2)$.
\end{proof}

\begin{proposition}[Double-star pairwise ratios]
\label{prop:double-star}
On the double-star $S_{a,b}^{(2)}$ with hubs $u,v$ (degrees $a+1, b+1$, $a>b\ge1$) and pendant leaves of degree $1$, the pairwise scale ratios are
\[
  \frac{s_v}{s_u}=\left(\frac{1+\varepsilon}{(b+1)/(a+1)+\varepsilon}\right)^{\gamma},\quad
  \frac{s_{\rm leaf}}{s_v}=\left(\frac{(b+1)/(a+1)+\varepsilon}{1/(a+1)+\varepsilon}\right)^{\gamma}.
\]
Both ratios exceed $1$. Without post-LN degree scaling, all pairwise message magnitudes are equal.
\end{proposition}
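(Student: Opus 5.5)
The plan is to read off the normalized degrees on the double-star and then apply Theorem~\ref{thm:degree-separation-app} directly, since the learned scale depends on a node only through $c_i$. First we record the degrees. Hub $u$ is adjacent to its $a$ leaves and to $v$, so $d_u=a+1$. Likewise $d_v=b+1$, and every pendant leaf has degree $1$. Because $a>b\ge 1$, the maximum degree is $d_{\max}=a+1$. This gives $c_u=1$, $c_v=(b+1)/(a+1)$ and $c_{\rm leaf}=1/(a+1)$, all lying in $[10^{-6},1]$ on any graph of realistic size, so the clipping in $c_i$ does not bind. By Lemma~\ref{lem:clamp-inactive-app}, the power-law clamp is inactive whenever $\alpha(c_{\rm leaf}+\varepsilon)^{-\gamma}\le 100$, which we assume (it holds at the learned parameter values). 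We would state this caveat explicitly, or else note that the displayed ratios refer to the unclamped closed form in Eq.~\eqref{eq:dsn-formula}.

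Next we form the ratios. In $s_v/s_u$, the common factors $\alpha$ and $(\lambda_{\max}+\varepsilon')^{-1/2}$ cancel, as in the proof of Theorem~\ref{thm:degree-separation-app}(b). Both nodes lie on the same graph, so $\lambda_2$, and hence $\gamma=\beta+\delta\lambda_2$, is also shared. What remains is $\bigl((c_u+\varepsilon)/(c_v+\varepsilon)\bigr)^{\gamma}$, and substituting the values above gives the first display. The same cancellation applied to the pair $(v,\text{leaf})$ gives the second display.

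To show that both ratios exceed $1$, we apply Theorem~\ref{thm:degree-separation-app}(b) twice. The first application uses $c_u>c_v$, which follows from $b+1<a+1$. The second uses $c_v>c_{\rm leaf}$, which follows from $b+1>1$, that is, $b\ge 1$. Both applications require $\gamma>0$, which should be added as a standing hypothesis. The parameter domain allows $\beta=\delta=0$, and in that case $\gamma=0$ and both ratios equal $1$. So the claim should be read with $\gamma>0$, exactly as in the theorem.

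The last sentence of the statement is the main obstacle, because ``all pairwise message magnitudes are equal'' is not precisely defined. Our plan is to interpret it at the slot: without post-LN scaling, the coefficient multiplying $\LN(h_i)$ is the same constant for every node (it is $1$ for the LN backbone). Proposition~\ref{prop:absorption-exact} then shows that any positive per-node prefactor applied earlier is divided out, up to an $O(\varepsilon_{\rm LN})$ deviation. So the slot introduces no degree-dependent magnitude contrast, and every ratio analogous to $s_v/s_u$ equals $1$. We would present this as the $\gamma=0$ (or $m\to 0$) specialization, following Remark~\ref{rem:ablation-hierarchy}, rather than as a claim that the actual feature norms coincide. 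The feature norms still depend on the learned weights and on $g$, so only the scale ratios are asserted to be equal.
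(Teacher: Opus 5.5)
Your proposal is correct and follows the paper's proof, which reads off $d_{\max}=a+1$, $c_u=1$, $c_v=(b+1)/(a+1)$, $c_{\rm leaf}=1/(a+1)$ and substitutes these into Theorem~\ref{thm:degree-separation-app}. Your additions are sound refinements that the paper leaves implicit: the standing hypothesis $\gamma>0$, the reading of the final sentence as the $\gamma=0$ (or $m\to 0$) specialization, and the clamp caveat. One correction to the clamp remark: at the learned $\alpha\approx 1.56$, $\gamma\approx 0.8$, the clamp stays inactive at the leaves only while $a+1$ is below roughly $150$--$200$, so your parenthetical ``it holds at the learned parameter values'' does not hold for all $a$, and the unclamped reading you offer is the one to state.
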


\begin{proof}
$d_{\max}=a+1$, so $c_u=1$, $c_v=(b+1)/(a+1)$, $c_{\rm leaf}=1/(a+1)$. Substitute into Theorem~\ref{thm:degree-separation-app}.
\end{proof}

\begin{remark}[Numerical stability of clamps and gradients]
The constants from the setup ($c_{\min}=10^{-6}$, $\varepsilon=10^{-8}$, $\varepsilon'=10^{-4}$, scale clamp $[0.01,100]$) ensure $|\ln(c_{\min}+\varepsilon)|\approx 13.8$. The post-computation clamp is $1$-Lipschitz. Combined with Corollary~\ref{cor:grads-app}, the $\beta$-gradient through the clamped scale is bounded by $1380$ and the $\delta$-gradient by $\lambda_2^{\max}\cdot 1380\le 2760$, using $\lambda_2\le 2$ for the normalized Laplacian, which is below the $[0.1,10]$ safety clamp. These are implementation checks on the released code, and they add no new theoretical assumptions.
\end{remark}

\subsection{Scope of the theory}
\label{app:theory-limitations}

The placement rule and the gradient-floor result above are deterministic statements about a single residual GAT block at one point in training. Their scope is a single block, and we set out what falls outside it next.

\paragraph{In scope.} (i) The absorption identity for positive per-node scalars (Proposition~\ref{prop:absorption-exact}) is exact in the $\varepsilon_{\mathrm{LN}}\to 0$ limit and holds closely for any $\sigma^2$ bounded away from zero. (ii) The degree-separation identity (Theorem~\ref{thm:degree-separation-app}) is a deterministic ratio for a fixed $\gamma>0$. (iii) The Jacobian and gate floor (Proposition~\ref{prop:ln-jacobian-app}, Proposition~\ref{prop:residual-floor}) are single-layer statements about one gated slot. (iv) The clamp condition (Lemma~\ref{lem:clamp-inactive-app}) is a deterministic domain-coverage check, and in the released runs it holds at every node of degree at least $1$, while isolated nodes saturate at the clamp maximum.

\paragraph{Out of scope.}
\begin{itemize}[leftmargin=*,topsep=2pt,itemsep=2pt]
  \item \emph{Sign-changing or complex-valued multipliers.} The absorption identity uses positivity to push $a_i$ inside the square root. Signed or complex attention weights need a separate analysis.
  \item \emph{RMSNorm and other zero-mean normalizers.} RMSNorm uses $\|x\|/\sqrt{d}$ as its scale, where standard LayerNorm uses the standard deviation. The absorption argument adapts directly (the same factor of $a_i^2$ moves into the denominator), but the $\sigma^2$-range constants differ. We run no RMSNorm experiments, so the PostDeg numbers may not transfer to it.
  \item \emph{Multi-layer compounding.} A formal mean-field analysis of feature-variance propagation through alternating PostDeg-L and LayerNorm layers, including the residual gates, learned message-passing matrices $W_\ell$, and self-information, would quantify how the one-step amplification of Proposition~\ref{prop:star-separation} compounds with depth. We leave such an analysis to future work. The empirical claim is single-layer and per-node, and the multi-layer effect is observed but not proven.
  \item \emph{Aggregation-side multipliers.} PNA's $\delta(d_i)=\log(d_i+1)/\bar\delta$ is mixed with the message before LayerNorm. Our placement rule does not apply directly. The appropriate reference is the paired-equivalence reported empirically (Section~\ref{sec:results-pna}, Appendix~\ref{app:pna}).
  \item \emph{Aggregation-free architectures.} The diagnostic is stated for residual GAT blocks. Transformers without graph-structured aggregation, set-based encoders, and position-encoded backbones use LayerNorm in different blocks. The diagnostic should re-derive for each.
\end{itemize}

\paragraph{What is well-defined but unproven here.} The empirical observation that low-heterogeneity tasks (DD, Epidemic) attenuate the post-LN advantage is qualitatively implied by Lemma~\ref{lem:near-regular-app} (near-regular attenuation), but the lemma gives a strict-regular-graph statement, and an interpolating ``effective heterogeneity'' bound that quantifies attenuation as a function of $\mathrm{CV}(d)$ on non-regular families is left to future work.

\section{Empirical Appendix}
\label{app:experiments}

\subsection{Full main-grid metrics}
\label{app:full-metrics}
\begin{table}[htbp]
\centering
\caption{Full main-grid results at the largest evaluation size (mean $\pm$ seed std, 10 seeds). PostDeg is the recommended operator and is bolded. Learned same-slot variants use weight spectral normalization and are reported as capacity diagnostics. GraphScalar is a graphwise scalar control. LN backbone is the identity in the controlled post-block slot. The residual block still contains LayerNorm. $\uparrow$/$\downarrow$: higher/lower is better. $^{\ast}$DD Acc is from the main 10-fold cross-validation protocol. The supplemental 5-fold NodeNorm comparison reports separate numbers (Appendix Table~\ref{tab:dd-extended}).}
\label{tab:main_results}
\small
\setlength{\tabcolsep}{2.6pt}
\resizebox{\textwidth}{!}{%
\begin{tabular}{lccccc}
\toprule
Method & InfluMax $n{=}200\uparrow$ & Dismantle $n{=}300\uparrow$ & Epidemic $n{=}300\downarrow$ & MIS $n{=}300\uparrow$ & DD Acc (\%)$^{\ast}\uparrow$ \\
\midrule
\textbf{PostDeg} & 56.86 $\pm$ 0.21 & 0.2433 $\pm$ 0.0011 & 0.7843 $\pm$ 0.0009 & 41.81 $\pm$ 0.35 & 79.9 $\pm$ 0.4 \\
PostDeg-L-FG & 56.86 $\pm$ 0.23 & 0.2436 $\pm$ 0.0007 & 0.7835 $\pm$ 0.0011 & 42.02 $\pm$ 0.37 & 80.1 $\pm$ 0.4 \\
PostDeg-L-Adaptive & 56.92 $\pm$ 0.25 & 0.2435 $\pm$ 0.0007 & 0.7837 $\pm$ 0.0008 & 41.92 $\pm$ 0.37 & 80.3 $\pm$ 0.2 \\
\midrule
Extra LayerNorm & 54.97 $\pm$ 0.19 & 0.2371 $\pm$ 0.0009 & 0.7847 $\pm$ 0.0005 & 39.57 $\pm$ 0.50 & 80.3 $\pm$ 0.5 \\
GraphScalar & 55.11 $\pm$ 0.28 & 0.2351 $\pm$ 0.0006 & 0.7848 $\pm$ 0.0006 & 39.38 $\pm$ 0.53 & 80.7 $\pm$ 0.4 \\
LN backbone & 54.94 $\pm$ 0.31 & 0.2373 $\pm$ 0.0008 & 0.7849 $\pm$ 0.0006 & 39.60 $\pm$ 0.34 & 80.4 $\pm$ 0.4 \\
GraphNorm & 56.35 $\pm$ 0.14 & 0.2341 $\pm$ 0.0026 & 0.7877 $\pm$ 0.0021 & 41.29 $\pm$ 0.42 & 78.5 $\pm$ 0.7 \\
\midrule
PairNorm & 55.38 $\pm$ 0.25 & 0.2329 $\pm$ 0.0027 & 0.7894 $\pm$ 0.0021 & 40.40 $\pm$ 0.48 & 58.7 $\pm$ 0.0 \\
InstanceNorm & 56.35 $\pm$ 0.30 & 0.2288 $\pm$ 0.0039 & 0.7878 $\pm$ 0.0016 & 41.19 $\pm$ 0.48 & 58.7 $\pm$ 0.0 \\
BatchNorm & 55.50 $\pm$ 0.25 & 0.2339 $\pm$ 0.0014 & 0.7852 $\pm$ 0.0008 & 40.75 $\pm$ 0.45 & 58.8 $\pm$ 0.2 \\
\bottomrule
\end{tabular}%
}
\end{table}

\paragraph{Margins against the strongest baseline.} The body's $+3.5\%/+2.5\%/+5.6\%$ are relative to the LN backbone. Against the \emph{best} competing normalizer in Table~\ref{tab:main_results} the margins are narrower, about $0.9$ points on InfluMax (PostDeg $+3.50$ versus GraphNorm $+2.58$) and $1.3$ on MIS ($+5.58$ versus $+4.27$), and on Dismantle the separation is unambiguous because each feature-statistic normalizer falls below the backbone. These are largest-size margins, and the MIS comparison is size-dependent across the grid (Table~\ref{tab:mis-by-size}). We report paired Wilcoxon tests per comparison, which bottom out at $p=0.002$ at 10 seeds when each seed agrees.

\subsection{Reproducibility and setup}
\label{app:reproducibility}

\begin{algorithm}[!htbp]
\caption{PostDeg layer (wrapper around LayerNorm in a residual GNN block).}
\label{alg:postdeg}
\begin{algorithmic}[1]
\Require Node features $H \in \R^{n \times d}$, edge index $E$, normalized degree $\widehat c \in \R^n$, GAT block $\mathrm{GAT}_\ell$, stabilizer $\varepsilon = 10^{-8}$
\State $Z \gets H + \mathrm{GAT}_\ell(H, E)$ \Comment{residual GAT block}
\State $\bar H \gets \LN(Z)$ \Comment{per-node LayerNorm}
\State $\widetilde H_i \gets (\widehat c_i + \varepsilon)^{-1/2}\, \bar H_i$ for each node $i$ \Comment{post-LN scale}
\State \Return $\widetilde H$
\end{algorithmic}
\end{algorithm}

\begin{algorithm}[!htbp]
\small
\caption{Greedy node selection at evaluation. The decoder differs by task. On Dismantle and Epidemic the features are recomputed after each selection (the re-encode shown inside the loop), whereas on InfluMax and MIS the features are encoded once before the loop and each pick is a masked top-$K$ from the static scores.}
\label{alg:greedy}
\begin{algorithmic}[1]
\Require Graph $G=(V,E)$, policy $\pi_\theta$, budget $K$
\State $S \gets \emptyset$, $C \gets V$
\For{$k=1,\dots,K$}
  \State $H \gets \mathrm{GNN}_\theta(G, S)$ \Comment{Dismantle/Epidemic re-encode here. InfluMax/MIS compute $H$ once \emph{before} the loop and reuse it}
  \State $\hat v \gets \arg\max_{v\in C}\,\pi_\theta(H_v)$
  \State $S \gets S\cup\{\hat v\}$, $\quad C \gets C\setminus\{\hat v\}$
\EndFor
\State \Return $S$
\end{algorithmic}
\end{algorithm}

\paragraph{Code and data.}
The supplementary package contains the experiment code, configuration shell scripts, rendered tables and figures, and the result files used in the paper. Synthetic graph data are generated from the scripts, and DD is downloaded from the TU graph-kernel dataset distribution during preprocessing, used under its license terms and not redistributed as a modified dataset. The supplementary documentation maps each result file to the table or figure it produces. In brief, the original completed-run logs produce the main tables, while a separate set of supplemental runs produces the cross-backbone, NodeNorm, PNA, and supplemental DD tables.

\paragraph{Hyperparameters.}
\begin{table}[htbp]
\centering
\caption{Consolidated training hyperparameters. All methods share the same backbone, optimizer, training budget, seeds, and evaluation protocol. Hyperparameters were fixed from pilot runs and not tuned per method.}
\label{tab:hyperparams}
\footnotesize
\setlength{\tabcolsep}{6pt}
\resizebox{\textwidth}{!}{%
\begin{tabular}{ll}
\toprule
Hyperparameter & Value \\
\midrule
Backbone & 3-layer GAT \\
Hidden dim & 128 \\
Attention heads & 4 \\
Dropout & 0.1 \\
Optimizer & Adam \\
Learning rate & $10^{-4}$ (node-selection); $10^{-3}$ (DD) \\
Weight decay & $10^{-4}$ \\
Gradient clipping & 1.0 \\
Epochs & 200 (node-selection); 100 (DD) \\
Batch size & 64 graphs $\times$ 5 batches (InfluMax/MIS); 16 $\times$ 20 (Dismantle/Epidemic); 32 (DD) \\
Train sizes & $n=100$ (InfluMax); $n=150$ (Dismantle, Epidemic, MIS) \\
Eval sizes & $n\in\{50,100,150,200\}$ (InfluMax); $\{100,150,200,300\}$ (others) \\
Seeds (10) & $\{42, 123, 7, 2024, 314, 999, 555, 8080, 1337, 31337\}$ \\
$\varepsilon$ (PostDeg floor) & $10^{-8}$ \\
$\varepsilon'$ (PostDeg-L-Adaptive spectral floor) & $10^{-4}$ \\
$\varepsilon_{\rm LN}$ (LayerNorm) & $10^{-5}$ \\
PostDeg-L-Adaptive domain & $\alpha\in[0.1,3.0]$, $\beta\in[0,1.5]$, $\delta\in[0,0.5]$ \\
PostDeg-L-Adaptive clamps & spectral $[0.1,10]$, scale $[0.01,100]$, $c_i\in[10^{-6},1]$ \\
DD CV & stratified 10-fold with fixed shuffled folds per seed \\
\bottomrule
\end{tabular}
}
\end{table}

\paragraph{Forward pass.}
The PostDeg forward pass applies Eq.~\eqref{eq:postdeg-formula} once per layer. At layer $\ell$, $Z^{(\ell)}=H^{(\ell-1)}+\mathrm{GAT}_\ell(H^{(\ell-1)},A)$ and $H^{(\ell)}=(\widehat c+\varepsilon)^{-1/2}\odot\LN(Z^{(\ell)})$. The normalized degree vector $\widehat c$ is computed once and cached per graph. The learned same-position superset (Eq.~\eqref{eq:dsn-formula}) keeps the same placement and substitutes the parameterized scale, and we call these learned variants PostDeg-L-FG and PostDeg-L-Adaptive. Algorithm~\ref{alg:greedy} records the greedy node-selection procedure. The decoder differs by task. Dismantle and Epidemic recompute features after each selection, whereas InfluMax and MIS encode once and take a masked top-$K$ from the static scores. Within each task the decoder is identical across the compared methods.

\paragraph{Tasks and rewards.}
InfluMax uses Barab\'{a}si--Albert graphs with training size $n=100$, attachment parameter $m=3$, evaluation sizes $n\in\{50,100,150,200\}$, budget $K=\lfloor0.1n\rfloor$, and independent-cascade probability $p=0.1$ for $10$ rounds, averaged over $10$ Monte Carlo simulations per reward/evaluation. Network dismantling uses stochastic block model graphs with target training size $n=150$, $5$--$7$ communities, community-size jitter of $\pm20\%$, $p_{\mathrm{in}}=0.15$ with per-community uniform noise in $[-0.05,0.05]$ clipped to $[0.1,0.9]$, $p_{\mathrm{out}}=0.02$, evaluation sizes $n\in\{100,150,200,300\}$, and budget $K=\lfloor0.2n\rfloor$. Its reward is fragmentation $1-|\mathrm{LCC}|/n$ after removing the selected nodes, where $\mathrm{LCC}$ is the largest connected component. Epidemic containment uses the same SBM contact-graph generator and budget as Dismantle. Selected nodes are vaccinated before an SIR simulation with infection probability $\beta=0.3$, recovery probability $\gamma=0.1$, $50$ steps, $5$ initial infections, and $10$ Monte Carlo samples during training or $20$ during final evaluation. MIS uses SBM graphs with training size $n=150$, $5$--$7$ communities, $p_{\mathrm{in}}=0.3$, $p_{\mathrm{out}}=0.02$, evaluation sizes $n\in\{100,150,200,300\}$, and budget $K=\lfloor0.3n\rfloor$.

\paragraph{Excluded inputs.}
Raw node degree and eigenvector centrality are excluded from all input features.

\paragraph{Included structural features.}
InfluMax and MIS use clustering coefficient, normalized average-neighbor degree, selection indicator, and remaining-budget fraction. Dismantle and Epidemic use clustering coefficient, normalized $k$-core number, PageRank, and removal/vaccination indicator. On BA training graphs at $n=100$, PageRank, $k$-core, and average-neighbor degree have Pearson $|r|\ge 0.78$ with raw degree, so they are strong content surrogates. The PostDeg gain over the LN backbone holds even though these degree-correlated features are already in the input.

\paragraph{DD setup.}
DD graph classification uses the TU DD protein dataset with structural features (clustering, normalized $k$-core, PageRank, constant bias) concatenated with node labels/attributes when available, global mean pooling, and stratified $10$-fold cross-validation with fixed shuffled folds per seed.

\paragraph{Compute and runtime.}
\begin{table}[htbp]
\centering
\caption{Per-method runtime envelope, learnable parameters per layer, and asymptotic per-layer cost. The four post-LN scale operators add $\mathcal{O}(|V|)$ work per layer. PostDeg adds zero learnable parameters. Walltime per (method, task, seed) on a $2$-CPU $8$-GB worker.}
\label{tab:runtime}
\small
\setlength{\tabcolsep}{4pt}
\begin{tabular}{lccc}
\toprule
Method & Walltime & Learnable params/layer & Asymptotic cost/layer \\
\midrule
PostDeg & 30--35 min & 0 & $\mathcal{O}(|V|)$ \\
PostDeg-L-FG & 35--40 min & 4 & $\mathcal{O}(|V|)+\mathcal{O}(\lambda\text{ solver})$ \\
PostDeg-L-Adaptive & 40--50 min & 5 + MLP gate & $\mathcal{O}(|V|)+\mathcal{O}(\lambda\text{ solver})$ \\
GraphScalar & 30--35 min & 0 & $\mathcal{O}(\lambda\text{ solver})$ \\
LN backbone & 30 min & 0 & 0 \\
GraphNorm & 30--35 min & $d$ (gain/bias) & $\mathcal{O}(|V|d)$ \\
Extra LayerNorm & 30 min & $d$ (gain/bias) & $\mathcal{O}(|V|d)$ \\
BatchNorm & 30 min & $d$ (gain/bias) & $\mathcal{O}(|V|d)$ \\
InstanceNorm & 30 min & 0 & $\mathcal{O}(|V|d)$ \\
PairNorm & 35 min & 0 & $\mathcal{O}(|V|d)$ (Frobenius norm) \\
\bottomrule
\end{tabular}
\end{table}

\noindent
Each sweep job used one worker with $2$ CPU cores and $8$~GB RAM. Training one (method, task, seed) combination takes roughly $30$--$90$ minutes. The reported main grid contains $10$ methods $\times$ $10$ seeds $\times$ $5$ tasks $=500$ completed runs and took approximately $400$ worker-hours.

\paragraph{Numerical clamps.}
$\varepsilon=10^{-8}$ in Eq.~\eqref{eq:postdeg-formula}, $\varepsilon'=10^{-4}$ in the spectral term of Eq.~\eqref{eq:dsn-formula}, and $\varepsilon_{\rm LN}=10^{-5}$ in LayerNorm. PostDeg uses $\widehat c_i=\max(d_i,1)/\max_j\max(d_j,1)$. The learned variants clip $c_i$ to $[10^{-6},1]$. Spectral quantities are clamped to $[0.1,10.0]$. The learned-variant power-law factor $\alpha(c_i+\varepsilon)^{-\gamma}$ is clamped to $[0.01,100]$ before the graph-wise spectral multiplier, and this clamp saturates at isolated nodes (Lemma~\ref{lem:clamp-inactive-app}).

\paragraph{$\varepsilon$ ablation.}
Table~\ref{tab:eps-ablation} shows, by construction, that the $\varepsilon$ floor never binds. The $c_i$ clip at $10^{-6}$ dominates $\varepsilon$ for every $\varepsilon\le 10^{-6}$, so the operator output is independent of $\varepsilon$ in this range. This is an analytic argument, so a measured $\varepsilon$-sweep is unnecessary.

\begin{table}[htbp]
\centering
\caption{$\varepsilon$-floor ablation for PostDeg. Rows are the floor value, columns are evaluation sizes. The scores are identical across rows \emph{by construction}: the lower clamp on $c_i$ (set to $10^{-6}$) dominates $\varepsilon$ for every $\varepsilon\le10^{-6}$, so the floor never binds and the operator output does not depend on $\varepsilon$ in this range. This is an analytic argument, and we did not run a measured $\varepsilon$ sweep.}
\label{tab:eps-ablation}
\small
\setlength{\tabcolsep}{6pt}
\begin{tabular}{lcccc}
\toprule
$\varepsilon$ & InfluMax n50 & InfluMax n100 & InfluMax n150 & InfluMax n200 \\
\midrule
$10^{-10}$ & 13.30\,$\pm$\,0.08 & 27.79\,$\pm$\,0.13 & 42.25\,$\pm$\,0.19 & 56.86\,$\pm$\,0.20 \\
$10^{-8}$ & 13.30\,$\pm$\,0.08 & 27.79\,$\pm$\,0.13 & 42.25\,$\pm$\,0.19 & 56.86\,$\pm$\,0.20 \\
$10^{-6}$ & 13.30\,$\pm$\,0.08 & 27.79\,$\pm$\,0.13 & 42.25\,$\pm$\,0.19 & 56.86\,$\pm$\,0.20 \\
\bottomrule
\end{tabular}
\end{table}

\subsection{Released data}
\label{app:data-dict}

For each table and figure we release the underlying result files. These cover per-(backbone, task, mode, seed, size) evaluation rewards together with their heuristic comparators, per-(seed, epoch) learned-variant parameters for each layer, per-node scale factors with the associated degrees and learned coefficients, per-graph degree distributions and their summaries, and per-(seed, epoch) training and evaluation curves. The main tables are produced from the completed-run evaluation logs. The cross-backbone, PNA, NodeNorm, and supplemental DD tables (Tables~\ref{tab:cross-backbone}, \ref{tab:pna-redundancy}, \ref{tab:predicted-baselines-run}, \ref{tab:dd-extended}) are produced from a separate set of supplemental runs, seed-aggregated and, for DD, reported as 5-fold cross-validation, separate from the paper's main 10-fold DD runs. The supplementary README records how to recompute the tables from the released files.

\FloatBarrier
\subsection{Pre-LayerNorm multipliers are absorbed}\label{app:variance}

\paragraph{Absorption bound.}
The absorption identity (Proposition~\ref{prop:absorption-exact}) is exact in the $\varepsilon_{\rm LN}\to 0$ limit, and for a positive per-node multiplier $a_i$ the relative deviation between $\LN(a_i z_i)$ and $\LN(z_i)$ is at most $\varepsilon_{\rm LN}\lvert 1-a_i^{-2}\rvert/(2\sigma^2(z_i))$, so at most $\varepsilon_{\rm LN}/(2\sigma^2(z_i))$ for every $a_i\ge 1$, uniformly in $a_i$. The deviation saturates at this order for large $a_i$ instead of vanishing, because $\LN(a_i z_i)$ approaches the zero-stabilizer normalization while $\LN(z_i)$ keeps its stabilizer. With $\varepsilon_{\rm LN}=10^{-5}$ and any activation variance bounded away from zero (for a unit-scale representation $\sigma^2(z_i)=O(1)$), the bound is below $10^{-4}$. We did not export per-node $\sigma^2(z_i)$ from the training runs, so we state the bound analytically. The identity itself is standard for LayerNorm.

\subsection{Full numerical results}
\label{app:full-results}

Table~\ref{tab:full-transfer} reports mean $\pm$ seed std at each (method, task, evaluation size) combination. We pair it with two follow-on artifacts. The heatmap in Figure~\ref{fig:appendix-transfer-heatmap} compresses the rightmost column of the table into a single page-width grid so that the sign and magnitude of each (method, task) cell is readable at a glance, with the lower-is-better Epidemic column flipped. Per-seed reward at the largest evaluation size sits in Table~\ref{tab:per-seed-reward}, where each row is a single seed. The four per-task panels (Figures~\ref{fig:appendix-full-influmax}--\ref{fig:appendix-full-epidemic}) consolidate convergence, transfer, evaluation distribution, and per-seed comparison on a single page each.

\begin{table}[htbp]
\centering
\caption{Full transfer table: mean $\pm$ seed std over 10 seeds for each method, each node-selection task, each evaluation size. The horizontal rule separates the post-LN scale family from the baselines.}
\label{tab:full-transfer}
\small
\setlength{\tabcolsep}{2.6pt}
\medskip
\noindent\textbf{InfluMax} ($\uparrow$)\par
\begin{tabular}{lcccc}
\toprule
Method & $n{=}50$ & $n{=}100$ & $n{=}150$ & $n{=}200$ \\
\midrule
PostDeg & 13.30\,$\pm$\,0.08 & 27.79\,$\pm$\,0.13 & 42.25\,$\pm$\,0.19 & 56.86\,$\pm$\,0.20 \\
PostDeg-L-FG & 13.27\,$\pm$\,0.08 & 27.87\,$\pm$\,0.09 & 42.37\,$\pm$\,0.17 & 56.86\,$\pm$\,0.22 \\
PostDeg-L-Adaptive & 13.28\,$\pm$\,0.08 & 27.82\,$\pm$\,0.07 & 42.39\,$\pm$\,0.24 & 56.92\,$\pm$\,0.23 \\
Extra LayerNorm & 12.89\,$\pm$\,0.19 & 27.19\,$\pm$\,0.24 & 41.17\,$\pm$\,0.15 & 54.97\,$\pm$\,0.18 \\
\midrule
GraphScalar & 12.96\,$\pm$\,0.26 & 27.30\,$\pm$\,0.19 & 41.29\,$\pm$\,0.23 & 55.11\,$\pm$\,0.26 \\
LN backbone & 12.93\,$\pm$\,0.09 & 27.14\,$\pm$\,0.15 & 41.12\,$\pm$\,0.24 & 54.94\,$\pm$\,0.30 \\
GraphNorm & 13.23\,$\pm$\,0.10 & 27.64\,$\pm$\,0.12 & 41.94\,$\pm$\,0.14 & 56.35\,$\pm$\,0.13 \\
PairNorm & 13.13\,$\pm$\,0.07 & 27.30\,$\pm$\,0.14 & 41.41\,$\pm$\,0.15 & 55.38\,$\pm$\,0.24 \\
InstanceNorm & 13.17\,$\pm$\,0.07 & 27.57\,$\pm$\,0.11 & 42.01\,$\pm$\,0.23 & 56.35\,$\pm$\,0.29 \\
BatchNorm & 12.96\,$\pm$\,0.13 & 27.43\,$\pm$\,0.16 & 41.59\,$\pm$\,0.15 & 55.50\,$\pm$\,0.24 \\
\bottomrule
\end{tabular}\par
\medskip
\noindent\textbf{Dismantle} ($\uparrow$)\par
\begin{tabular}{lcccc}
\toprule
Method & $n{=}100$ & $n{=}150$ & $n{=}200$ & $n{=}300$ \\
\midrule
PostDeg & 0.3894\,$\pm$\,0.0028 & 0.3076\,$\pm$\,0.0014 & 0.2744\,$\pm$\,0.0011 & 0.2433\,$\pm$\,0.0010 \\
PostDeg-L-FG & 0.3878\,$\pm$\,0.0056 & 0.3075\,$\pm$\,0.0010 & 0.2745\,$\pm$\,0.0005 & 0.2436\,$\pm$\,0.0007 \\
PostDeg-L-Adaptive & 0.3887\,$\pm$\,0.0046 & 0.3079\,$\pm$\,0.0015 & 0.2745\,$\pm$\,0.0007 & 0.2435\,$\pm$\,0.0007 \\
Extra LayerNorm & 0.3612\,$\pm$\,0.0049 & 0.2959\,$\pm$\,0.0017 & 0.2658\,$\pm$\,0.0009 & 0.2371\,$\pm$\,0.0008 \\
\midrule
GraphScalar & 0.3528\,$\pm$\,0.0027 & 0.2899\,$\pm$\,0.0029 & 0.2631\,$\pm$\,0.0005 & 0.2351\,$\pm$\,0.0006 \\
LN backbone & 0.3661\,$\pm$\,0.0062 & 0.2980\,$\pm$\,0.0035 & 0.2658\,$\pm$\,0.0013 & 0.2373\,$\pm$\,0.0008 \\
GraphNorm & 0.3671\,$\pm$\,0.0031 & 0.3015\,$\pm$\,0.0015 & 0.2685\,$\pm$\,0.0020 & 0.2341\,$\pm$\,0.0024 \\
PairNorm & 0.3577\,$\pm$\,0.0057 & 0.2924\,$\pm$\,0.0022 & 0.2637\,$\pm$\,0.0011 & 0.2329\,$\pm$\,0.0026 \\
InstanceNorm & 0.3710\,$\pm$\,0.0038 & 0.3011\,$\pm$\,0.0016 & 0.2667\,$\pm$\,0.0018 & 0.2288\,$\pm$\,0.0037 \\
BatchNorm & 0.3559\,$\pm$\,0.0041 & 0.2877\,$\pm$\,0.0025 & 0.2607\,$\pm$\,0.0018 & 0.2339\,$\pm$\,0.0014 \\
\bottomrule
\end{tabular}\par
\medskip
\noindent\textbf{Epidemic} ($\downarrow$)\par
\begin{tabular}{lcccc}
\toprule
Method & $n{=}100$ & $n{=}150$ & $n{=}200$ & $n{=}300$ \\
\midrule
PostDeg & 0.6269\,$\pm$\,0.0038 & 0.7167\,$\pm$\,0.0021 & 0.7467\,$\pm$\,0.0014 & 0.7843\,$\pm$\,0.0009 \\
PostDeg-L-FG & 0.6292\,$\pm$\,0.0038 & 0.7179\,$\pm$\,0.0019 & 0.7470\,$\pm$\,0.0017 & 0.7835\,$\pm$\,0.0010 \\
PostDeg-L-Adaptive & 0.6319\,$\pm$\,0.0118 & 0.7171\,$\pm$\,0.0026 & 0.7464\,$\pm$\,0.0014 & 0.7837\,$\pm$\,0.0008 \\
Extra LayerNorm & 0.6477\,$\pm$\,0.0032 & 0.7215\,$\pm$\,0.0019 & 0.7499\,$\pm$\,0.0015 & 0.7847\,$\pm$\,0.0005 \\
\midrule
GraphScalar & 0.6485\,$\pm$\,0.0033 & 0.7218\,$\pm$\,0.0024 & 0.7491\,$\pm$\,0.0014 & 0.7848\,$\pm$\,0.0005 \\
LN backbone & 0.6509\,$\pm$\,0.0037 & 0.7221\,$\pm$\,0.0011 & 0.7499\,$\pm$\,0.0009 & 0.7849\,$\pm$\,0.0006 \\
GraphNorm & 0.6521\,$\pm$\,0.0067 & 0.7202\,$\pm$\,0.0014 & 0.7496\,$\pm$\,0.0025 & 0.7877\,$\pm$\,0.0020 \\
PairNorm & 0.6537\,$\pm$\,0.0038 & 0.7224\,$\pm$\,0.0016 & 0.7521\,$\pm$\,0.0024 & 0.7894\,$\pm$\,0.0020 \\
InstanceNorm & 0.6532\,$\pm$\,0.0045 & 0.7197\,$\pm$\,0.0017 & 0.7497\,$\pm$\,0.0014 & 0.7878\,$\pm$\,0.0015 \\
BatchNorm & 0.6470\,$\pm$\,0.0040 & 0.7246\,$\pm$\,0.0013 & 0.7514\,$\pm$\,0.0018 & 0.7852\,$\pm$\,0.0008 \\
\bottomrule
\end{tabular}\par
\medskip
\noindent\textbf{MIS} ($\uparrow$)\par
\begin{tabular}{lcccc}
\toprule
Method & $n{=}100$ & $n{=}150$ & $n{=}200$ & $n{=}300$ \\
\midrule
PostDeg & 25.96\,$\pm$\,0.11 & 32.96\,$\pm$\,0.22 & 37.64\,$\pm$\,0.28 & 41.81\,$\pm$\,0.34 \\
PostDeg-L-FG & 26.09\,$\pm$\,0.15 & 33.19\,$\pm$\,0.19 & 37.88\,$\pm$\,0.19 & 42.02\,$\pm$\,0.35 \\
PostDeg-L-Adaptive & 26.15\,$\pm$\,0.16 & 33.11\,$\pm$\,0.26 & 37.77\,$\pm$\,0.29 & 41.92\,$\pm$\,0.35 \\
Extra LayerNorm & 24.57\,$\pm$\,0.09 & 32.19\,$\pm$\,0.30 & 37.29\,$\pm$\,0.38 & 39.57\,$\pm$\,0.47 \\
\midrule
GraphScalar & 24.52\,$\pm$\,0.12 & 32.04\,$\pm$\,0.32 & 37.04\,$\pm$\,0.52 & 39.38\,$\pm$\,0.51 \\
LN backbone & 24.57\,$\pm$\,0.22 & 32.27\,$\pm$\,0.34 & 37.37\,$\pm$\,0.51 & 39.60\,$\pm$\,0.32 \\
GraphNorm & 25.13\,$\pm$\,0.26 & 33.82\,$\pm$\,0.34 & 39.16\,$\pm$\,0.24 & 41.29\,$\pm$\,0.40 \\
PairNorm & 24.70\,$\pm$\,0.18 & 32.72\,$\pm$\,0.24 & 38.04\,$\pm$\,0.47 & 40.40\,$\pm$\,0.46 \\
InstanceNorm & 25.02\,$\pm$\,0.27 & 33.67\,$\pm$\,0.35 & 39.04\,$\pm$\,0.26 & 41.19\,$\pm$\,0.45 \\
BatchNorm & 25.03\,$\pm$\,0.26 & 33.47\,$\pm$\,0.33 & 38.75\,$\pm$\,0.21 & 40.74\,$\pm$\,0.43 \\
\bottomrule
\end{tabular}\par
\end{table}

The heatmap below is the same data as Table~\ref{tab:full-transfer} at the largest evaluation size, with sign flipped for the lower-is-better Epidemic column. The DD column is also included so that the collapse of PairNorm/InstanceNorm/BatchNorm is visible against the same color scale.

\begin{figure}[htbp]
  \centering
  \includegraphics[width=\linewidth]{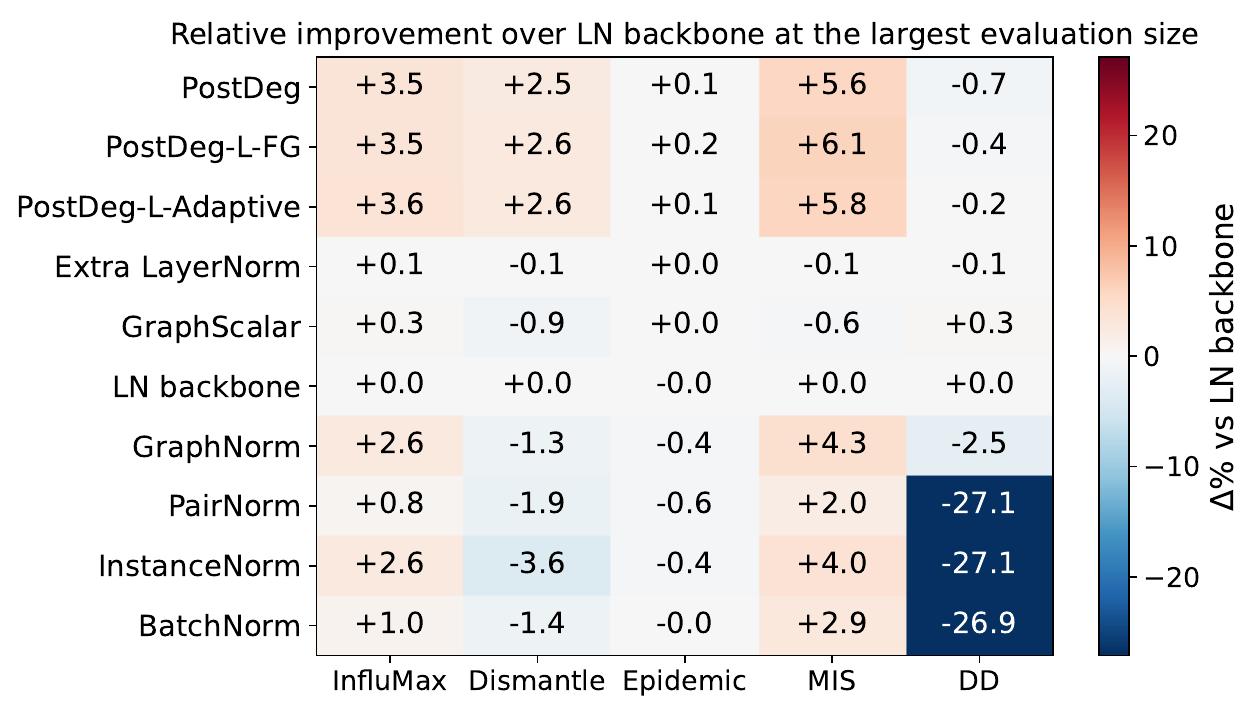}
  \caption{Heatmap version of Table~\ref{tab:full-transfer} at the largest evaluation size, with relative improvement over LN backbone per (method, task), with sign flipped for lower-is-better tasks. PostDeg and its learned variants form the only column with consistently positive entries. PairNorm, InstanceNorm, and BatchNorm are positive on three node-selection tasks and drop to the majority-class rate on DD.}
  \label{fig:appendix-transfer-heatmap}
\end{figure}

Three patterns are visible in the heatmap. The top three rows (PostDeg, PostDeg-L) carry positive entries on InfluMax, Dismantle, and MIS, near-zero entries on Epidemic, and small negative entries on DD. This is the cross-task signature of a placement effect that attenuates as the degree distribution becomes more regular. The middle block (LayerNorm, GraphScalar, LN backbone) remains near zero on all columns, because none of these operators introduces a per-node degree contrast in the slot. The bottom block (PairNorm, InstanceNorm, BatchNorm) has the most variable behavior. These three feature-statistic normalizers are positive on InfluMax and MIS, mixed on Dismantle and Epidemic, and drop to the majority-class rate on DD, which is the pooling-head artifact described in Section~\ref{sec:results-boundary}.

The DD column is also the only place where PostDeg and its learned variants deviate from LN backbone by a perceptible amount, namely $-0.7\%$ for PostDeg, $-0.4\%$ for PostDeg-L-FG, and $-0.2\%$ for PostDeg-L-Adaptive. These small negatives sit within seed std, in line with the reading in \S\ref{sec:results-boundary}. DD is a graph-classification task whose head reads a mean-pooled graph embedding, so a per-node magnitude scale has no selection target there, and accuracy stays at the backbone level.

The per-seed reward table that follows lists the per-seed evidence behind the 10/10 win counts. It has ten rows per task, one for each seed, with PostDeg and its learned variants on the left and the LN backbone baseline on the right. Reading any row, the gap between PostDeg and LN backbone matches the per-task summary in Section~\ref{sec:results-main}.

\begin{table}[htbp]
\centering
\caption{Per-seed reward for the post-LN family and the unnormalized baseline at the largest evaluation size. Each row is a single seed. The per-task 10/10 win counts can be read directly off these rows.}
\label{tab:per-seed-reward}
\small
\setlength{\tabcolsep}{4pt}
\medskip
\noindent\textbf{InfluMax}\par
\begin{tabular}{lcccc}
\toprule
Seed & PostDeg & PostDeg-L-FG & PostDeg-L-Adaptive & LN backbone \\
\midrule
7 & 56.584 & 56.640 & 56.974 & 55.046 \\
42 & 56.967 & 56.464 & 56.764 & 54.649 \\
123 & 57.106 & 56.764 & 56.798 & 55.028 \\
314 & 56.827 & 56.850 & 57.016 & 54.481 \\
555 & 57.066 & 56.894 & 56.682 & 55.366 \\
999 & 56.779 & 57.110 & 56.572 & 55.020 \\
1337 & 56.586 & 56.651 & 57.084 & 54.967 \\
2024 & 56.997 & 57.152 & 56.842 & 55.027 \\
8080 & 56.629 & 56.932 & 57.444 & 54.473 \\
31337 & 57.051 & 57.112 & 57.023 & 55.319 \\
\bottomrule
\end{tabular}\par
\medskip
\noindent\textbf{Dismantle}\par
\begin{tabular}{lcccc}
\toprule
Seed & PostDeg & PostDeg-L-FG & PostDeg-L-Adaptive & LN backbone \\
\midrule
7 & 0.2422 & 0.2432 & 0.2427 & 0.2371 \\
42 & 0.2429 & 0.2426 & 0.2422 & 0.2362 \\
123 & 0.2443 & 0.2438 & 0.2431 & 0.2372 \\
314 & 0.2413 & 0.2446 & 0.2446 & 0.2385 \\
555 & 0.2446 & 0.2448 & 0.2436 & 0.2379 \\
999 & 0.2427 & 0.2431 & 0.2439 & 0.2363 \\
1337 & 0.2435 & 0.2434 & 0.2429 & 0.2372 \\
2024 & 0.2431 & 0.2427 & 0.2441 & 0.2386 \\
8080 & 0.2442 & 0.2438 & 0.2440 & 0.2368 \\
31337 & 0.2440 & 0.2437 & 0.2437 & 0.2371 \\
\bottomrule
\end{tabular}\par
\medskip
\noindent\textbf{Epidemic}\par
\begin{tabular}{lcccc}
\toprule
Seed & PostDeg & PostDeg-L-FG & PostDeg-L-Adaptive & LN backbone \\
\midrule
7 & 0.7847 & 0.7830 & 0.7832 & 0.7847 \\
42 & 0.7827 & 0.7852 & 0.7841 & 0.7843 \\
123 & 0.7841 & 0.7842 & 0.7829 & 0.7846 \\
314 & 0.7857 & 0.7851 & 0.7845 & 0.7853 \\
555 & 0.7834 & 0.7826 & 0.7853 & 0.7846 \\
999 & 0.7846 & 0.7835 & 0.7832 & 0.7847 \\
1337 & 0.7845 & 0.7832 & 0.7829 & 0.7845 \\
2024 & 0.7835 & 0.7822 & 0.7832 & 0.7846 \\
8080 & 0.7840 & 0.7824 & 0.7846 & 0.7850 \\
31337 & 0.7854 & 0.7837 & 0.7832 & 0.7864 \\
\bottomrule
\end{tabular}\par
\medskip
\noindent\textbf{MIS}\par
\begin{tabular}{lcccc}
\toprule
Seed & PostDeg & PostDeg-L-FG & PostDeg-L-Adaptive & LN backbone \\
\midrule
7 & 41.450 & 41.875 & 41.730 & 39.390 \\
42 & 42.015 & 41.955 & 41.800 & 40.295 \\
123 & 41.580 & 41.785 & 42.105 & 39.340 \\
314 & 42.545 & 42.605 & 42.105 & 39.820 \\
555 & 42.110 & 42.625 & 42.530 & 39.480 \\
999 & 41.560 & 41.715 & 41.320 & 39.300 \\
1337 & 41.870 & 42.010 & 42.030 & 39.500 \\
2024 & 41.515 & 41.685 & 41.560 & 39.230 \\
8080 & 41.985 & 42.300 & 42.340 & 39.915 \\
31337 & 41.510 & 41.595 & 41.680 & 39.765 \\
\bottomrule
\end{tabular}\par
\end{table}

\paragraph{Per-task panels.}
The four figures that follow give one diagnostic per task. Each combines, as panels (a)--(d), the convergence curve of each method, relative improvement over the LN backbone across evaluation sizes, the score distribution at the largest evaluation size, and a paired per-seed scatter of PostDeg against the LN backbone. The order is InfluMax, Dismantle, MIS, Epidemic. The gap between PostDeg and the LN backbone shrinks as the degree distribution becomes more regular.

InfluMax is the most heterogeneous task we run. In Figure~\ref{fig:appendix-full-influmax}, PostDeg and PostDeg-L separate from all baselines within the first $50$ training epochs, keep the lead across all four evaluation sizes, and produce a per-seed scatter that sits entirely above the diagonal.

\begin{figure}[htbp]
  \centering
  \includegraphics[width=\linewidth]{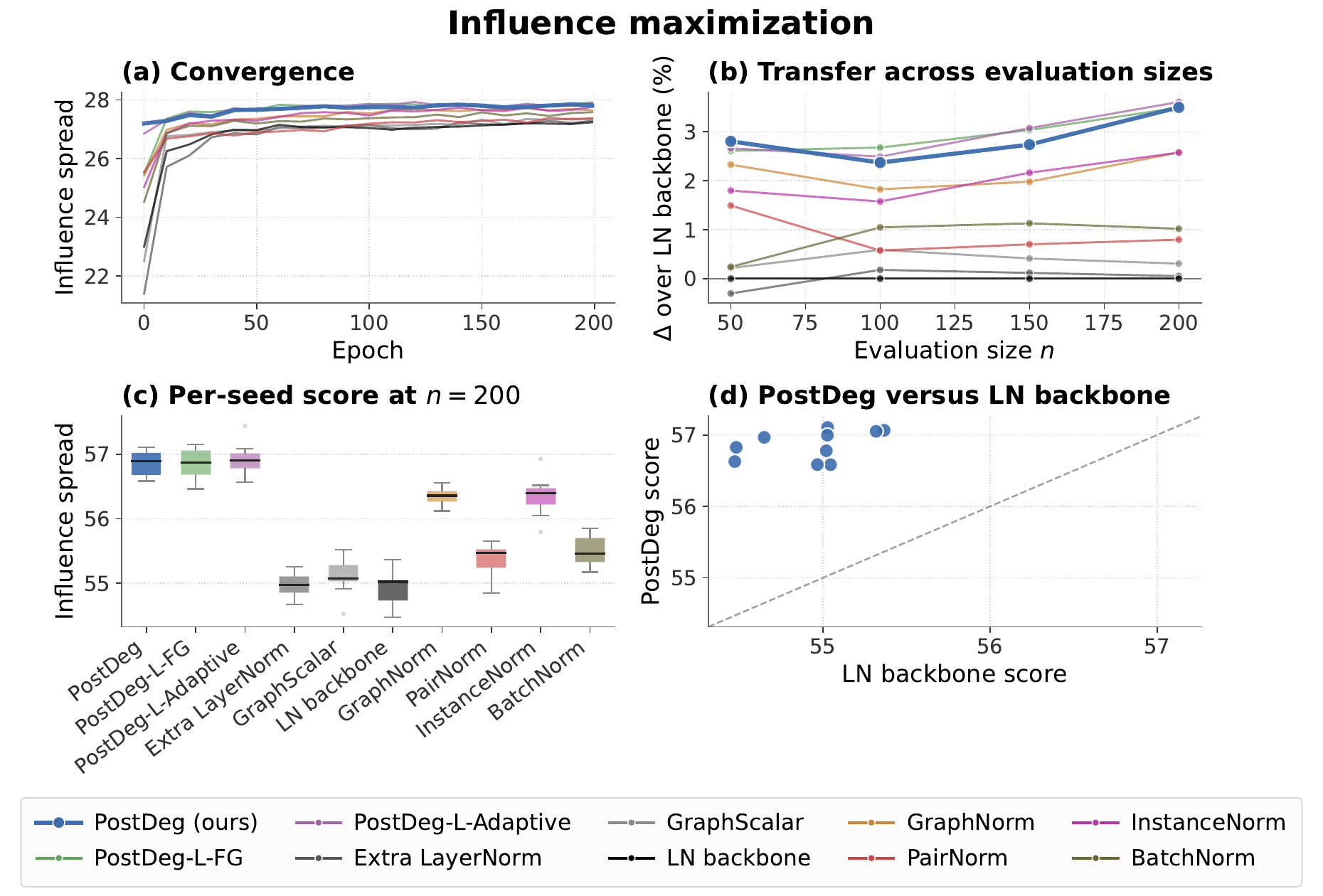}
  \caption{InfluMax per-task diagnostic (mean $\pm$ seed std, 10 seeds). \textbf{(a)}~per-method convergence. \textbf{(b)}~relative improvement over the LN backbone across evaluation sizes. \textbf{(c)}~per-seed score distribution at $n=200$. \textbf{(d)}~paired per-seed scores, PostDeg against the LN backbone (dashed identity line).}
  \label{fig:appendix-full-influmax}
\end{figure}

Figure~\ref{fig:appendix-full-dismantle} shows the same separation pattern for network dismantling, and it adds one feature. PairNorm, InstanceNorm, and BatchNorm fall \emph{below} LN backbone at the largest evaluation size, visible as the negative-slope lines in panel (b). PostDeg and its learned variants stay above zero on each evaluation size.

\begin{figure}[htbp]
  \centering
  \includegraphics[width=\linewidth]{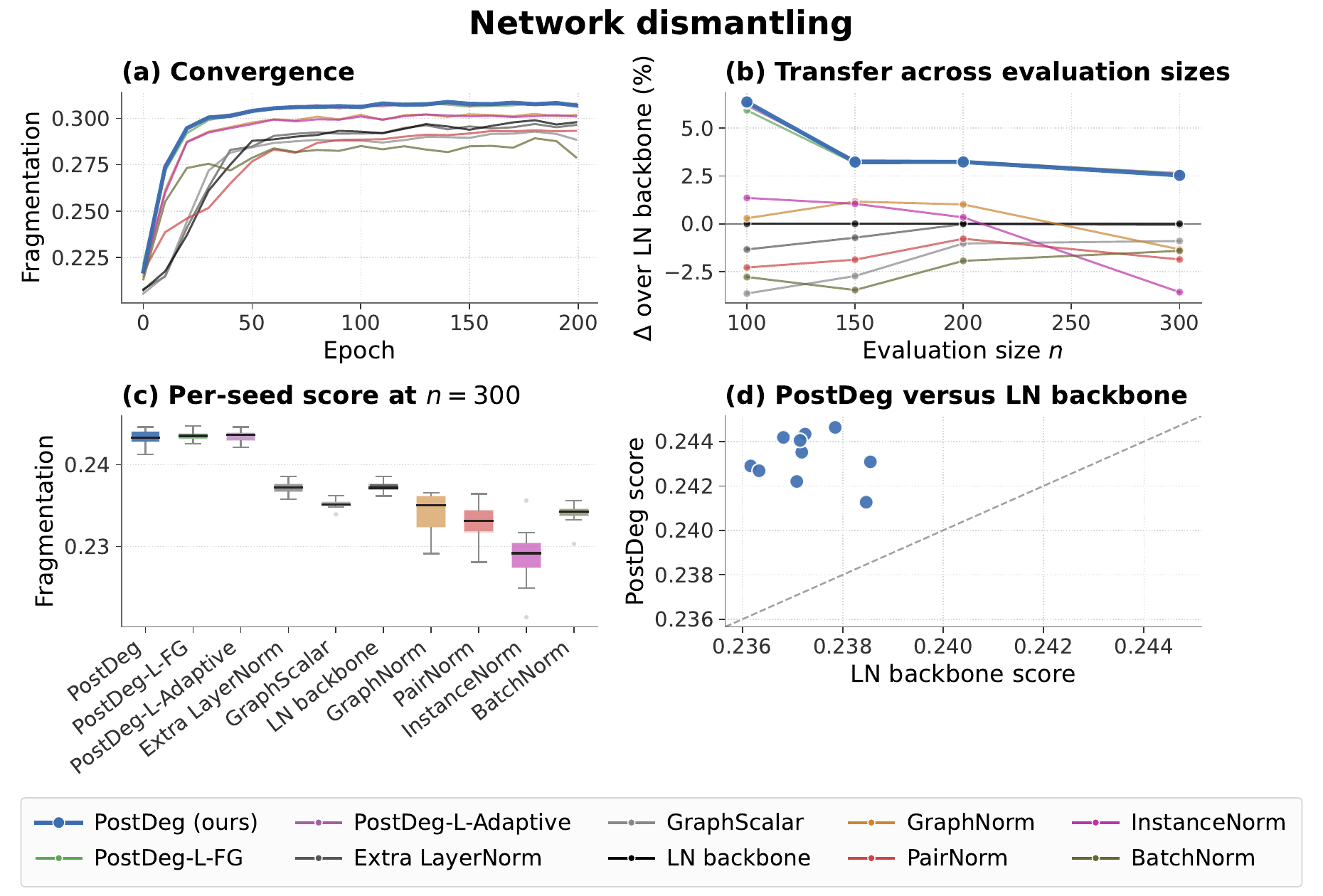}
  \caption{Network dismantling per-task diagnostic (mean $\pm$ seed std, 10 seeds). \textbf{(a)}~per-method convergence. \textbf{(b)}~relative improvement over the LN backbone across evaluation sizes, where PairNorm, InstanceNorm, and BatchNorm fall below zero. \textbf{(c)}~per-seed score distribution at $n=300$. \textbf{(d)}~paired per-seed scores, PostDeg against the LN backbone (dashed identity line).}
  \label{fig:appendix-full-dismantle}
\end{figure}

MIS trains on SBM graphs only. The BA family enters solely as cross-distribution evaluation (Appendix~\ref{app:boundary}). Figure~\ref{fig:appendix-full-mis} shows the largest absolute reward gap of any node-selection task, about $2.2$ at $n=300$ (an in-distribution SBM size). GraphNorm and InstanceNorm capture about $70\%$ of this gap. The rest of the baselines stay close to LN backbone.

\begin{figure}[htbp]
  \centering
  \includegraphics[width=\linewidth]{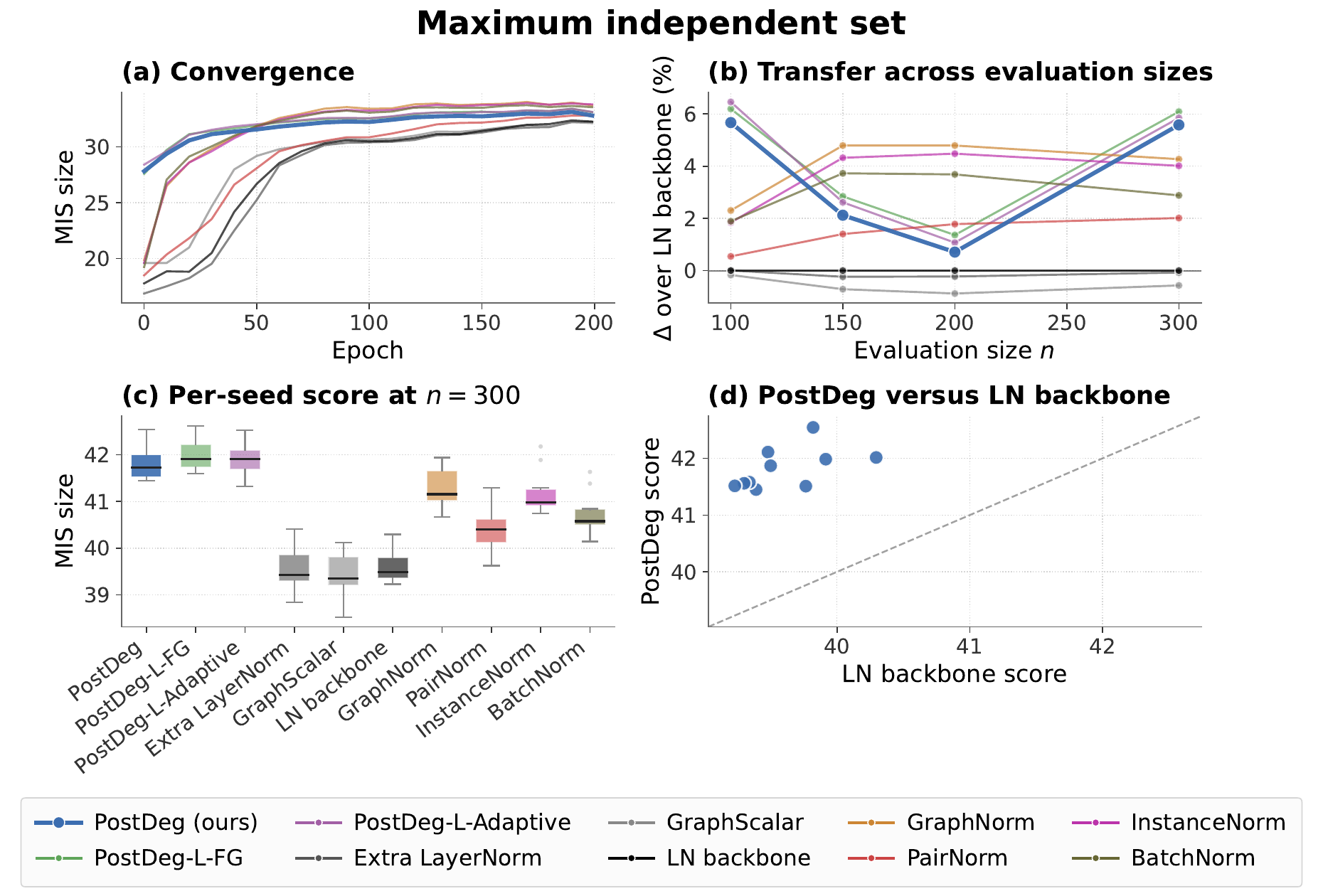}
  \caption{Maximum independent set per-task diagnostic (mean $\pm$ seed std, 10 seeds). \textbf{(a)}~per-method convergence. \textbf{(b)}~relative improvement over the LN backbone across evaluation sizes. \textbf{(c)}~per-seed score distribution at $n=300$. \textbf{(d)}~paired per-seed scores, PostDeg against the LN backbone (dashed identity line).}
  \label{fig:appendix-full-mis}
\end{figure}

Epidemic sits at the low-heterogeneity end of our tasks, and at the largest evaluation size the methods cluster together. Figure~\ref{fig:appendix-full-epidemic} shows each method within a $0.005$-wide band on infected fraction, and the per-seed scatter of PostDeg against the LN backbone straddles the diagonal. When the degree distribution is close to uniform, the post-LN scale becomes a near-constant graphwise multiplier and the placement effect attenuates.

\begin{figure}[htbp]
  \centering
  \includegraphics[width=\linewidth]{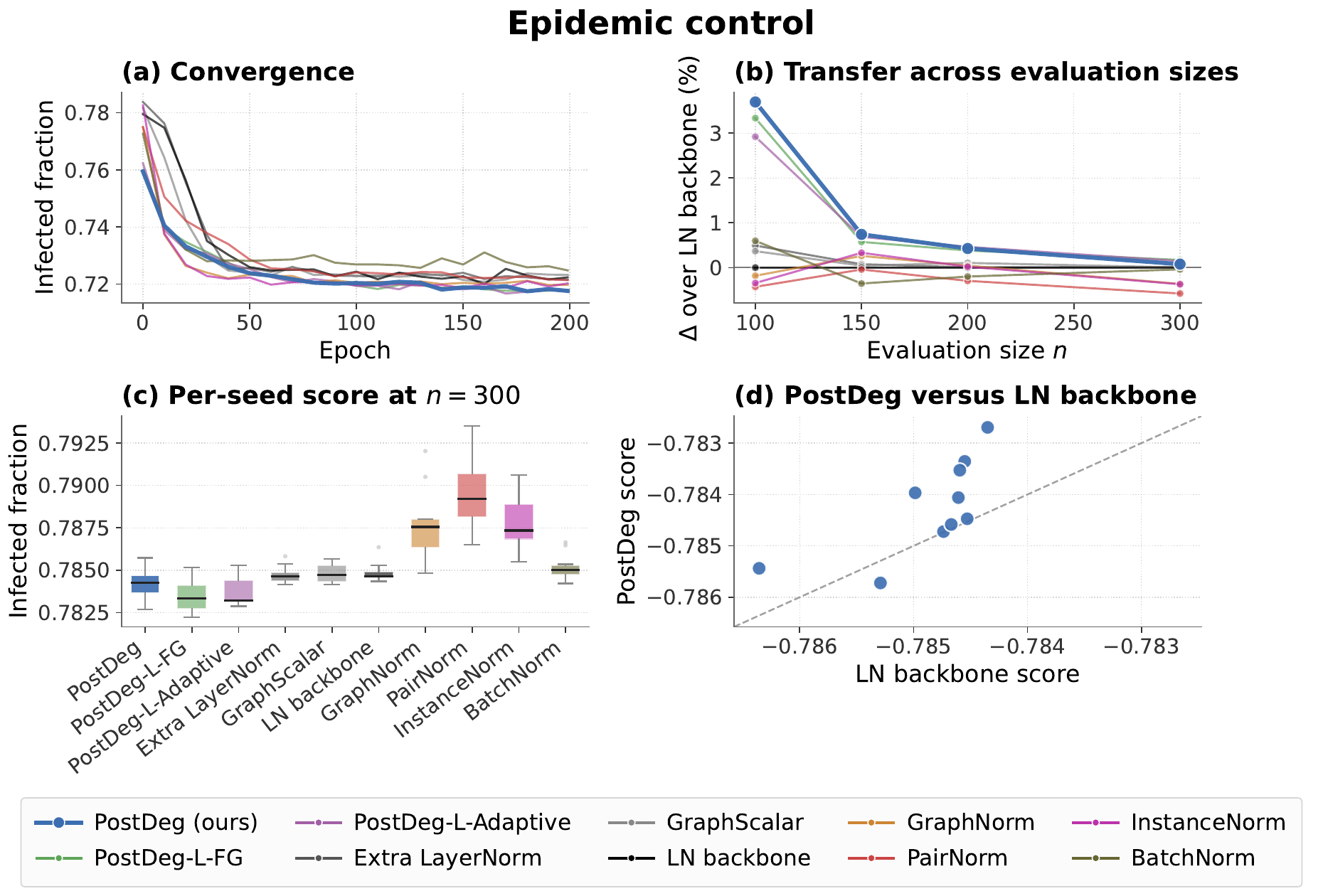}
  \caption{Epidemic containment per-task diagnostic (mean $\pm$ seed std, 10 seeds). \textbf{(a)}~per-method convergence. \textbf{(b)}~relative improvement over the LN backbone across evaluation sizes. \textbf{(c)}~per-seed score distribution at $n=300$. \textbf{(d)}~paired per-seed scores, PostDeg against the LN backbone (dashed identity line), which straddles the diagonal.}
  \label{fig:appendix-full-epidemic}
\end{figure}

\FloatBarrier
\subsection{Equivalence and ablations}
\label{app:stats}

\paragraph{Placement-alone check.} If placement alone drove the gain, post-LN operators of different capacity would be paired-equivalent. We test this by TOST, both pooled and stratified by graph family, and find that it holds on near-regular graphs but fails on heavy-tailed ones.

\paragraph{Ablation hierarchy.}
Figure~\ref{fig:appendix-ablation-hierarchy} arranges the learned superset and its ablations by parameter count. In descending capacity, the order is PostDeg-L-Adaptive, PostDeg-L-FG, PostDeg, GraphScalar, LN backbone. At the pooled level the first three cluster within seed noise, while GraphScalar and the LN backbone sit at the bottom. This clustering separates once graphs are stratified. On heavy-tailed BA graphs the learned exponent moves the numbers above parameter-free PostDeg (Table~\ref{tab:tost-family}).

\begin{figure}[htbp]
  \centering
  \includegraphics[width=\linewidth]{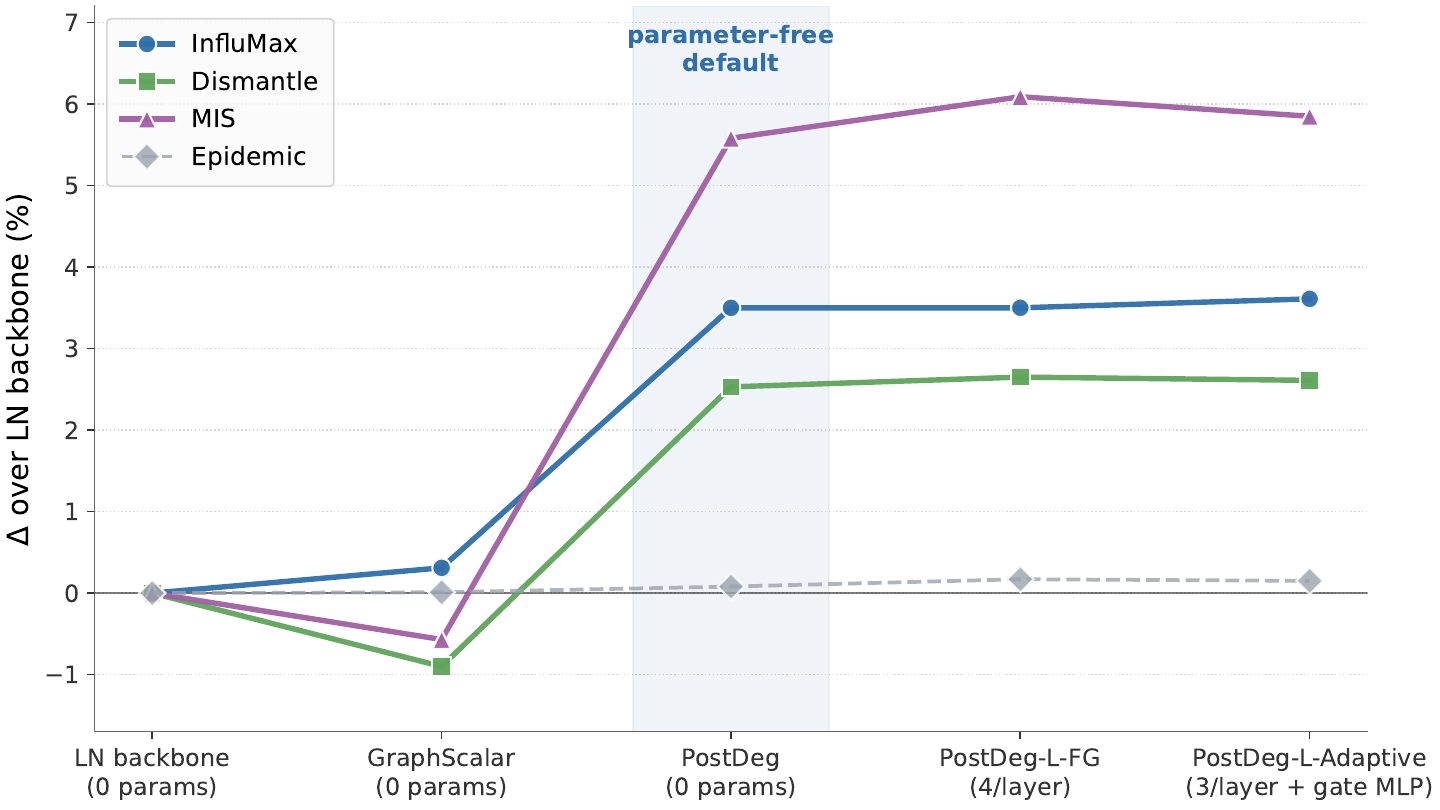}
  \caption{Per-task $\Delta\%$ over the LN backbone at the largest evaluation size across the ablation hierarchy, with operator capacity annotated under each method. Parameter-free PostDeg matches the higher-capacity learned variants, a plateau to its right, while GraphScalar, which drops the per-node degree term, matches the backbone. Epidemic (dashed) is near zero at this largest evaluation size, though at the smallest evaluation size it reaches $+3.7\%$ (Table~\ref{tab:size-profile}).}
  \label{fig:appendix-ablation-hierarchy}
\end{figure}

\paragraph{TOST equivalence tests, pooled and family-stratified.}
On the pooled/SBM evaluations, all four tasks produce a TOST $p$-value below $0.05$ at the $\pm 1\%$ margin (Table~\ref{tab:tost}), so PostDeg and the learned superset are paired-equivalent there. This pooled equivalence does \emph{not} hold once MIS is stratified by graph family (Table~\ref{tab:tost-family}). On heavy-tailed BA graphs the learned exponent beats parameter-free PostDeg by more than twice the margin, unanimously across seeds, while equivalence holds only on the near-regular SBM family. The pooled TOST therefore shows only that the \emph{form} stops mattering on near-regular graphs. It does not show that the form never matters. On the BA family the learned exponent does beat PostDeg, but that family is cross-distribution for MIS and the learned variant also carries weight spectral normalization, so we read the BA margin as confounded by that shift and the added weight normalization.

\begin{table}[htbp]
\centering
\caption{Two one-sided tests (TOST) for paired equivalence between PostDeg-L-Adaptive (the learned PostDeg ablation) and PostDeg, one row per task. Margins are taken as $\pm 1\%$, $\pm 2\%$, $\pm 5\%$ of the PostDeg mean. $p_{\text{TOST}}$ values below $0.05$ reject inequivalence.}
\label{tab:tost}
\small
\setlength{\tabcolsep}{6pt}
\begin{tabular}{lccccc}
\toprule
Task & paired mean diff & 90\% CI & $p_{\text{TOST}}$ ($\pm 1\%$) & $p_{\text{TOST}}$ ($\pm 2\%$) & $p_{\text{TOST}}$ ($\pm 5\%$) \\
\midrule
InfluMax & +0.06080 & [-0.16901, +0.29061] & 0.001 & 0.000 & 0.000 \\
Dismantle & +0.00020 & [-0.00060, +0.00099] & 0.000 & 0.000 & 0.000 \\
Epidemic & -0.00054 & [-0.00136, +0.00028] & 0.000 & 0.000 & 0.000 \\
MIS & +0.10600 & [-0.07711, +0.28911] & 0.006 & 0.000 & 0.000 \\
\bottomrule
\end{tabular}
\end{table}

\begin{table}[htbp]
\centering
\caption{Family-stratified equivalence on MIS (parameter-free PostDeg versus\ the learned exponent PostDeg-L-FG), recomputed per seed from the completed-run evaluation logs. On heavy-tailed BA graphs the learned exponent beats PostDeg by more than twice the $\pm1\%$ equivalence margin on 10 of 10 seeds. Equivalence holds only on near-regular SBM. $p=0.002$ is the two-sided $n{=}10$ Wilcoxon floor, attained by unanimous seed agreement. MIS trains on SBM, so the BA rows are cross-distribution (out-of-distribution) evaluation. The margin is also confounded by the weight spectral normalization the learned variant carries and PostDeg does not (\S\ref{sec:variants}), so the exponent alone does not account for it.}
\label{tab:tost-family}
\small
\begin{tabular}{lccccc}
\toprule
MIS eval & PostDeg & PostDeg-L-FG & rel.\ diff & wins & Wilcoxon $p$ \\
\midrule
BA $n{=}100$ & 25.80 & 26.36 & $+2.18\%$ & 10/10 & $0.002$ \\
BA $n{=}150$ & 39.34 & 40.30 & $+2.45\%$ & 10/10 & $0.002$ \\
BA $n{=}200$ & 53.03 & 54.33 & $+2.45\%$ & 10/10 & $0.002$ \\
SBM $n{=}300$ & 41.81 & 42.02 & $+0.48\%$ & \phantom{0}9/10 & $0.006$ \\
\bottomrule
\end{tabular}
\end{table}

\paragraph{Wilcoxon signed-rank tests (PostDeg-L-Adaptive versus\ representative baselines).}
The Wilcoxon table is the complementary one-sided test. It tests whether PostDeg-L-Adaptive and each baseline are statistically distinguishable, paired by seed. The PostDeg-L-Adaptive row against PostDeg is non-significant on each task, which is consistent with the equivalence result above. The comparisons against LN backbone and GraphScalar are significant across all tasks, which confirms that PostDeg and its learned variants improve over the LN-backbone and graphwise-scalar baselines.

\begin{table}[htbp]
\centering
\caption{Wilcoxon signed-rank test: PostDeg-L-Adaptive versus\ baselines at largest evaluation size, paired by seed. The PostDeg-L-Adaptive row against PostDegs are not significant at $\alpha=0.05$. This is a non-rejection at $n=10$ and does not establish equivalence (see Table~\ref{tab:tost} for TOST equivalence tests).}
\label{tab:significance}
\small
\setlength{\tabcolsep}{8pt}
\begin{tabular}{llcc}
\toprule
Experiment & Baseline & $p$-value & Significance \\
\midrule
InfluMax & LN backbone & 0.0020 & $^{**}$ \\
 & PostDeg & 0.8457 & n.s. \\
 & GraphScalar & 0.0020 & $^{**}$ \\
 & BatchNorm & 0.0020 & $^{**}$ \\
\midrule
Dismantle & LN backbone & 0.0020 & $^{**}$ \\
 & PostDeg & 0.9219 & n.s. \\
 & GraphScalar & 0.0020 & $^{**}$ \\
 & BatchNorm & 0.0020 & $^{**}$ \\
\midrule
Epidemic & LN backbone & 0.0098 & $^{**}$ \\
 & PostDeg & 0.3223 & n.s. \\
 & GraphScalar & 0.0039 & $^{**}$ \\
 & BatchNorm & 0.0098 & $^{**}$ \\
\midrule
MIS & LN backbone & 0.0020 & $^{**}$ \\
 & PostDeg & 0.3750 & n.s. \\
 & GraphScalar & 0.0020 & $^{**}$ \\
 & BatchNorm & 0.0020 & $^{**}$ \\
\bottomrule
\multicolumn{4}{l}{\small\textit{Note:} $^{***}$~$p{<}0.001$, $^{**}$~$p{<}0.01$, $^{*}$~$p{<}0.05$, n.s.\ = not significant.}
\end{tabular}
\end{table}

\paragraph{Effect sizes and per-seed wins.}
Cohen's $d$ values relative to the LN backbone are in Table~\ref{tab:cohens-d}. They are large on three tasks because between-seed variance is low at our seed budget, so we also report the per-seed reward gaps in Table~\ref{tab:per-seed-reward} as a scale-free complement.

\begin{table}[htbp]
\centering
\caption{Cohen's $d$ against LN backbone at the largest evaluation size. Positive values favor the method (sign-flipped for lower-is-better tasks). These $d$ values come from the near-deterministic training pipeline, with between-seed variability below $0.002$ on the learned exponent (Table~\ref{tab:per-seed-beta}), and do not indicate a large practical effect. We read them alongside the raw per-seed rewards in Table~\ref{tab:per-seed-reward}.}
\label{tab:cohens-d}
\small
\setlength{\tabcolsep}{6pt}
\begin{tabular}{lcccc}
\toprule
Method & InfluMax & Dismantle & Epidemic & MIS \\
\midrule
PostDeg & +7.28 & +6.38 & +0.79 & +6.40 \\
PostDeg-L-FG & +7.01 & +8.22 & +1.58 & +6.79 \\
PostDeg-L-Adaptive & +7.06 & +8.01 & +1.59 & +6.56 \\
Extra LayerNorm & +0.11 & -0.20 & +0.21 & -0.07 \\
GraphScalar & +0.57 & -3.01 & +0.07 & -0.50 \\
GraphNorm & +5.88 & -1.67 & -1.81 & +4.43 \\
PairNorm & +1.54 & -2.21 & -2.90 & +1.92 \\
InstanceNorm & +4.60 & -3.03 & -2.39 & +3.86 \\
BatchNorm & +1.98 & -2.88 & -0.42 & +2.88 \\
\bottomrule
\end{tabular}
\end{table}

\FloatBarrier

On each of the three degree-sensitive node-selection tasks, PostDeg and its learned variants each win on $10$ of $10$ paired seeds against the LN backbone, and $9$ of $10$ on Epidemic (visible per seed in Table~\ref{tab:per-seed-reward}).

\paragraph{Result.} TOST rejects inequivalence between PostDeg and the learned superset at $\pm 1\%$ on all tasks. The FG and adaptive variants settle at nearly identical parameters (Table~\ref{tab:dsn_params}).

\FloatBarrier
\subsection{Post-LayerNorm multipliers reach the score head}
\label{app:mechanism}

\paragraph{Theory check.} Theorem~\ref{thm:degree-separation-app} yields a strict scale ordering and Corollary~\ref{cor:loglog-app} a log-log slope of $-\gamma$, and we test both per task and per layer.

\paragraph{Per-task degree-distribution summary.}
See body Table~\ref{tab:task-graph-stats} for the per-task degree distribution summaries.

\paragraph{Per-task log-log Pearson and slope.}
Per-task and per-layer signed Pearson correlations between $\ln s_i$ and $\ln d_i$.

\begin{table}[htbp]
\centering
\caption{Per-layer log-log Pearson $r$ and OLS slope of the \emph{learned-superset scalar scale} $\ln s_i$ (the per-node PostDeg-L-Adaptive factor) on $\ln d_i$, on each node-selection task. These regressions give a directional check of the closed-form scalar power law (Corollary~\ref{cor:loglog-app}) on the operator's own scalar factors. They are \emph{not} regressions on post-network representation magnitudes at the score head, which we did not export. The pooled slopes are below the final-epoch $\gamma_{\rm eff}\approx0.83$ because the fits pool nodes across graphs and seeds from training-time captures. Table~\ref{tab:scale-extreme} reports the linear Pearson on the same scalar factors. All values are negative, which matches the direction of the closed-form law.}
\label{tab:pearson-all}
\small
\setlength{\tabcolsep}{4pt}
\begin{tabular}{lcccccc}
\toprule
Task & $r$ (L0) & $r$ (L1) & $r$ (L2) & slope (L0) & slope (L1) & slope (L2) \\
\midrule
InfluMax & -0.960 & -0.960 & -0.960 & -0.837 & -0.834 & -0.836 \\
Dismantle & -0.785 & -0.785 & -0.785 & -0.636 & -0.635 & -0.637 \\
Epidemic & -0.785 & -0.785 & -0.785 & -0.634 & -0.635 & -0.635 \\
MIS & -0.794 & -0.794 & -0.794 & -0.630 & -0.625 & -0.621 \\
\bottomrule
\end{tabular}
\end{table}

\paragraph{Closed-form versus\ observed scale-extreme ratio.}
Figure~\ref{fig:appendix-pred-vs-obs} plots the closed-form ratio $R_{\max}^{\rm pred}=(c_{\max}/c_{\min})^{\gamma}$ against the empirical ratio $R_{\max}^{\rm obs}=s_{\max}/s_{\min}$, one point per task. The plot is a directional sign check, since the closed form fixes the sign of each ratio without fixing its magnitude. On Dismantle the closed-form ratio (at $\gamma=1/2$) is $4.47$ against an observed $67.10$ for the \emph{learned} variant. The gap is arithmetic and not a downstream interaction. The learned variant runs at a larger effective exponent and these SBM graphs contain degree-$0$ nodes whose $c_i$ is clipped at $10^{-6}$, both of which inflate the observed extreme ratio. We report only the direction of agreement, which is what the closed form fixes.

\begin{figure}[htbp]
  \centering
  \includegraphics[width=0.85\linewidth]{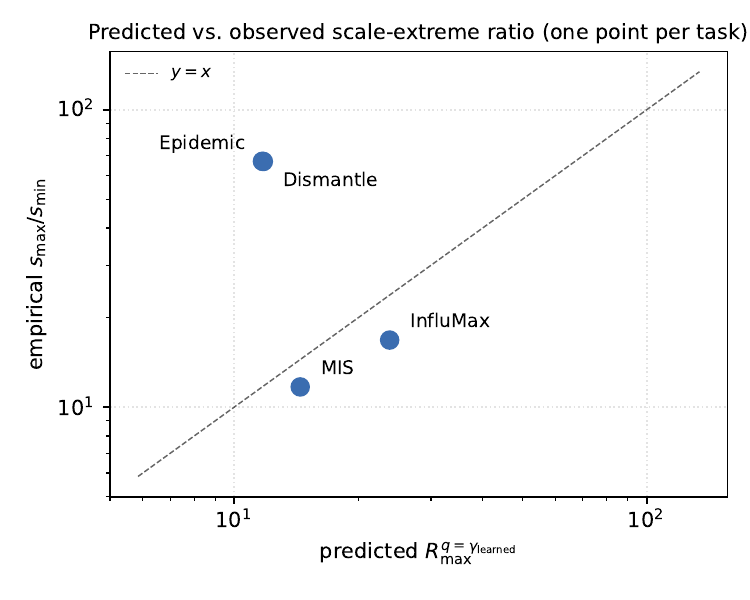}
  \caption{Consistency check between closed-form and observed scale-extreme ratio per task.}
  \label{fig:appendix-pred-vs-obs}
\end{figure}

\paragraph{Quantitative test of the separation identity.}
We compare the empirical scale extremes per task to the closed-form ratios at both $\gamma=1/2$ (PostDeg) and $\gamma_{\text{learned}}$ (the learned superset). The empirical extremes come from the learned-superset scale at convergence. The $\gamma=1/2$ row undershoots the empirical ratio for purely arithmetic reasons, while the $\gamma_{\text{learned}}$ row tracks the empirical ratio in shape and direction.

\begin{table}[htbp]
\centering
\caption{Quantitative test of the degree-separation identity (Theorem~\ref{thm:degree-separation-app}) on each node-selection task. We compute the scale-ratio extreme $R_{\max}=(d_{\max}/d_{\min})^{\gamma}$ at two exponents, $\gamma=1/2$ (PostDeg, the recommended operator) and the learned $\gamma=\beta+\delta\lambda_2$ (the same-position superset, averaged across seeds and layers). The empirical scale ratio is computed on the learned-superset scale at convergence. The ratio at $\gamma=1/2$ undershoots the empirical ratio because the recommended operator uses a smaller exponent than the optimum, while the ratio at the learned exponent tracks the empirical ratio in shape and direction. Tasks with $d_{\min}=0$ (Dismantle, Epidemic, MIS) are reported with the implementation clamp $d_{\min}\!\to\!\max(d_{\min},1)$ used by the operator.}
\label{tab:obs1-test}
\small
\setlength{\tabcolsep}{4pt}
\begin{tabular}{lccccccccc}
\toprule
Task & raw $d_{\min}$ & clamped $d_{\min}$ & $d_{\max}$ & $\gamma_{\text{learned}}$ & $s_{\min}$ & $s_{\max}$ & $R_{\max}^{\gamma=1/2}$ & $R_{\max}^{\gamma_{\text{learned}}}$ & emp.\ $s_{\max}/s_{\min}$ \\
\midrule
InfluMax & 1 & 1 & 45 & 0.83 & 1.18 & 19.88 & 6.71 & 23.81 & 16.81 \\
Dismantle & 0 & 1 & 20 & 0.82 & 1.15 & 77.16 & 4.47 & 11.70 & 67.10 \\
Epidemic & 0 & 1 & 20 & 0.82 & 1.15 & 77.16 & 4.47 & 11.77 & 66.97 \\
MIS & 0 & 1 & 27 & 0.81 & 1.22 & 14.22 & 5.20 & 14.45 & 11.69 \\
\bottomrule
\end{tabular}
\end{table}

\paragraph{Per-task scale extremes.}
Extremes of the operator's own scalar factors $s_i$ per task (measured on the learned variant), with the closed-form $\gamma=1/2$ ratio and the signed \emph{linear} Pearson correlation between $s_i$ and $d_i$. These characterize the operator output. The post-network representation magnitude at the score head is measured separately by the forward hooks (\S\ref{sec:mechanism}).

\begin{table}[htbp]
\centering
\caption{Extremes of the operator's own per-node scalar factors $s_i$ per task (the PostDeg-L-Adaptive factor at convergence), with the closed-form ratio at $\gamma=1/2$ and the signed \emph{linear} Pearson correlation between $s_i$ and $d_i$. These characterize the operator's own scalar output (degree by construction), \emph{not} post-network representation magnitudes at the score head, which we did not export. Table~\ref{tab:pearson-all} reports the Pearson and slope of $\ln s_i$ on $\ln d_i$ for the same factors. That fit is on logs and this one is on raw values, so the two need not agree, and no downstream interaction enters either.}
\label{tab:scale-extreme}
\small
\setlength{\tabcolsep}{6pt}
\begin{tabular}{lcccccc}
\toprule
Task & $s_{\min}$ & $s_{\max}$ & emp.\ $s_{\max}/s_{\min}$ & closed-form $R_{\max}^{\gamma=1/2}$ & Pearson $r(s,d)$ \\
\midrule
InfluMax & 1.18 & 19.88 & 16.81 & 6.71 & -0.764 \\
Dismantle & 1.15 & 77.16 & 67.10 & 4.47 & -0.375 \\
Epidemic & 1.15 & 77.16 & 66.97 & 4.47 & -0.374 \\
MIS & 1.22 & 14.22 & 11.69 & 5.20 & -0.688 \\
\bottomrule
\end{tabular}
\end{table}

\paragraph{Per-layer scale-versus-degree.}
Figure~\ref{fig:appendix-per-layer-scale} plots $\ln s_i$ against $\ln d_i$ separately for each of the three GAT layers and each of the four node-selection tasks. Slopes and signed correlations agree to two decimals across layers on each task.

\begin{figure}[htbp]
  \centering
  \includegraphics[width=\linewidth]{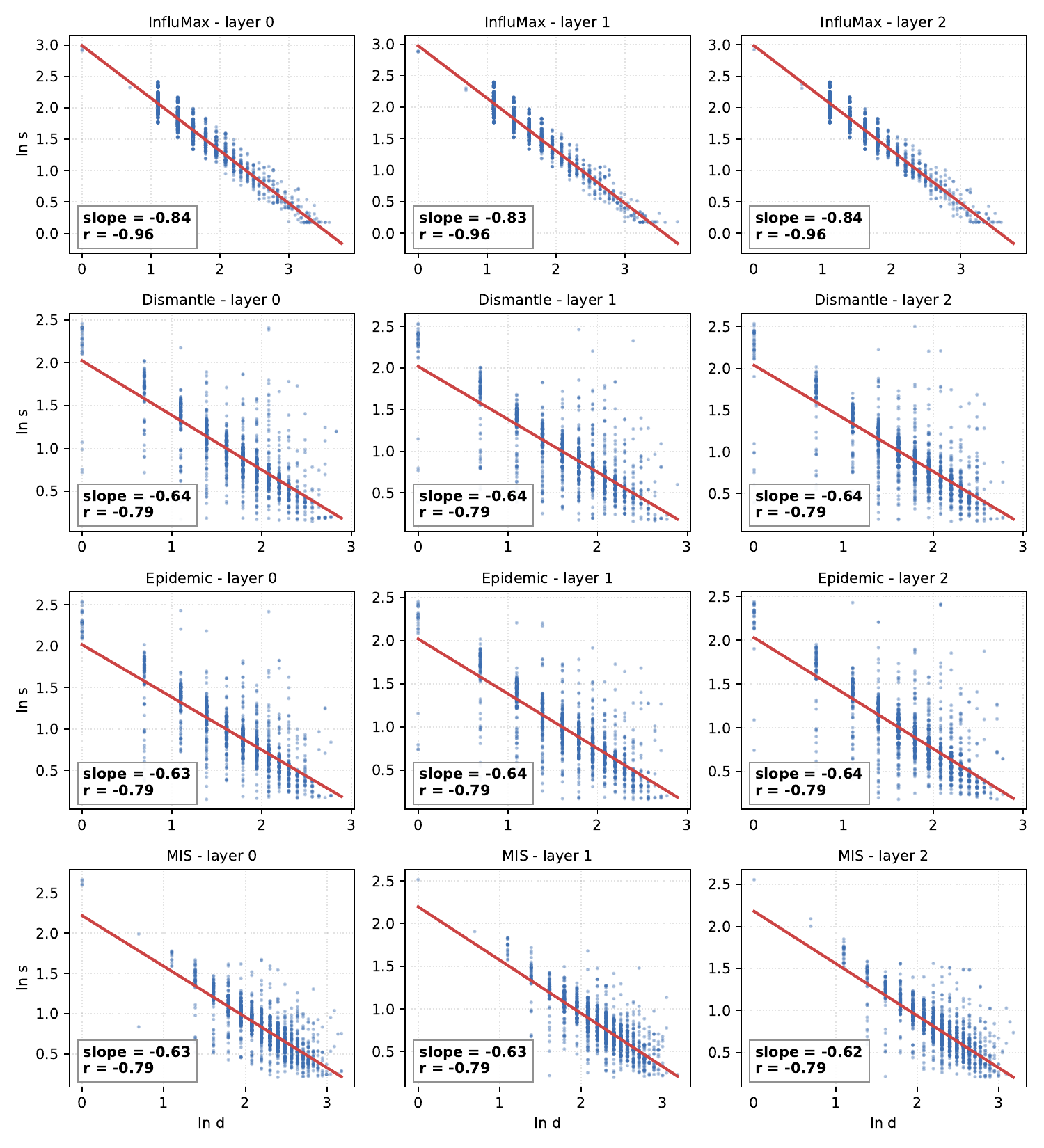}
  \caption{Log-log scale-versus-degree scatter per layer per task. The optimizer treats the three GAT layers nearly identically. Slopes and signed Pearson correlations agree to two decimals across layers across all tasks.}
  \label{fig:appendix-per-layer-scale}
\end{figure}

\paragraph{Effect size scales with degree heterogeneity.}
Within MIS, the BA-versus-SBM stratification shows a clear trend (Figure~\ref{fig:effect-vs-het}). Because MIS trains on SBM, the BA family is cross-distribution (out-of-distribution) evaluation here, so this trend mixes degree heterogeneity with distribution shift. For PostDeg, BA graphs yield about $+6.9\%$ over the LN backbone averaged over $n=100,150,200$ (up to $+8.8\%$ at $n{=}200$), and SBM graphs yield $+0.7$ to $+5.7\%$ depending on size. The larger $+10.98\%/+5.85\%$ figures come from the learned PostDeg-L-Adaptive variant, and PostDeg itself gives the smaller numbers above.

\begin{figure}[htbp]
  \centering
  \includegraphics[width=\linewidth]{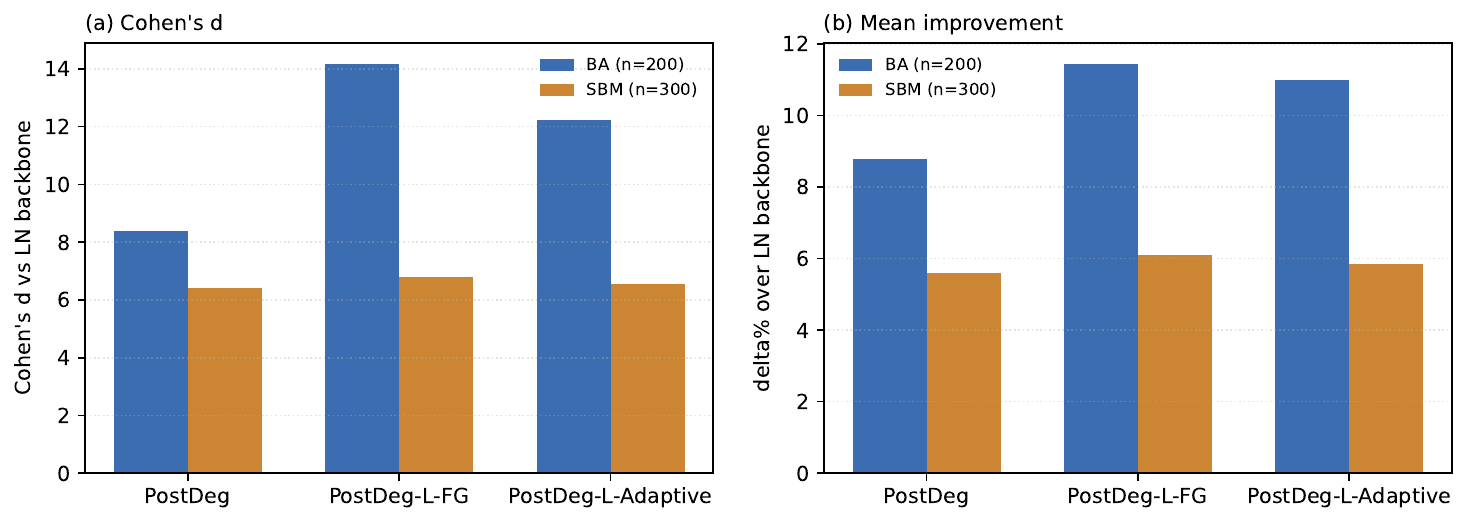}
  \caption{Within-MIS trend by graph family. \textbf{(a)} Cohen's $d$ between each method and LN backbone on BA ($n{=}200$) versus\ moderately heterogeneous SBM ($n{=}300$). \textbf{(b)} Mean relative improvement over LN backbone on the same two families. BA produces larger gains. Since MIS trains on SBM, the BA family is cross-distribution evaluation, so this is consistent with the heterogeneity account but confounded with distribution shift.}
  \label{fig:effect-vs-het}
\end{figure}

\paragraph{Per-task scale-versus-degree.}
The per-task and per-layer log-log scatter of $s_i$ against $d_i$ is consolidated in Figure~\ref{fig:appendix-per-layer-scale} above. That 4$\times$3 grid (one row per task, one column per layer) makes the per-layer agreement explicit and supersedes a separate task-only plot. Slopes are negative on all tasks and the absolute slope is largest on InfluMax (the most heterogeneous) and smallest on Epidemic (the most regular), as Corollary~\ref{cor:loglog-app} implies.

\subsection{Learned-ablation parameters}
\label{app:dsn-params}

We examine where the learned superset settles. Across seeds, layers, and tasks, $(\alpha, \beta, \delta)$ end up in narrow bands ($\beta\approx0.75$, $\delta\approx0.25$), so the effective exponent $\gamma=\beta+\delta\lambda_2$ stays near its initialization. With $\lambda_2\approx0.31$ on BA this is $\approx0.83$, and $\approx0.82$ on the SBM tasks. It never descends toward PostDeg's $1/2$.

\paragraph{Final values.}
Table~\ref{tab:dsn_params} reports final-epoch $(\alpha, \beta, \delta)$ averaged across the three GAT layers and 10 seeds, per task. The exponent $\beta$ is between $0.74$ and $0.77$ on each task. Together with $\delta$, the effective post-LN exponent is near $0.83$ on the BA tasks, well above the PostDeg default of $1/2$. The three GAT layers agree to two decimals on each parameter, so the optimizer does not specialize layers.

\begin{table}[htbp]
\centering
\caption{Learned same-slot parameters at final epoch, averaged across all 3 GAT layers. Mean $\pm$ std over 10 seeds. $\dagger$ = our method.}
\label{tab:dsn_params}
\small
\setlength{\tabcolsep}{6pt}
\begin{tabular}{llccc}
\toprule
Experiment & Variant & $\alpha$ & $\beta$ & $\delta$ \\
\midrule
InfluMax & PostDeg-L-FG $\dagger$ & 1.5628 $\pm$ 0.0021 & 0.7543 $\pm$ 0.0009 & 0.2514 $\pm$ 0.0003 \\
InfluMax & PostDeg-L-Adaptive $\dagger$ & 1.5628 $\pm$ 0.0011 & 0.7545 $\pm$ 0.0007 & 0.2515 $\pm$ 0.0002 \\
\midrule
Dismantle & PostDeg-L-FG $\dagger$ & 1.5867 $\pm$ 0.0041 & 0.7651 $\pm$ 0.0012 & 0.2551 $\pm$ 0.0004 \\
Dismantle & PostDeg-L-Adaptive $\dagger$ & 1.5862 $\pm$ 0.0041 & 0.7647 $\pm$ 0.0015 & 0.2550 $\pm$ 0.0005 \\
\midrule
Epidemic & PostDeg-L-FG $\dagger$ & 1.5819 $\pm$ 0.0045 & 0.7626 $\pm$ 0.0018 & 0.2543 $\pm$ 0.0006 \\
Epidemic & PostDeg-L-Adaptive $\dagger$ & 1.5840 $\pm$ 0.0045 & 0.7632 $\pm$ 0.0021 & 0.2546 $\pm$ 0.0008 \\
\midrule
MIS & PostDeg-L-FG $\dagger$ & 1.5755 $\pm$ 0.0014 & 0.7615 $\pm$ 0.0006 & 0.2538 $\pm$ 0.0002 \\
MIS & PostDeg-L-Adaptive $\dagger$ & 1.5748 $\pm$ 0.0018 & 0.7616 $\pm$ 0.0009 & 0.2538 $\pm$ 0.0003 \\
\bottomrule
\end{tabular}
\end{table}

\paragraph{Per-seed final $\beta$.}
Table~\ref{tab:per-seed-beta} reports per-seed final $\beta$ values, one row per seed, averaged across the three GAT layers. Across-seed standard deviations are $\le 0.002$ across all tasks. This is the source of the unusually low between-seed variance that drives Cohen's $d$ above $6$ in Table~\ref{tab:cohens-d}.

\begin{table}[htbp]
\centering
\caption{Per-seed final $\beta$ values for the learned PostDeg-L-Adaptive ablation, averaged across the 3 GAT layers. Spread across seeds is below $0.005$ for each task.}
\label{tab:per-seed-beta}
\small
\setlength{\tabcolsep}{6pt}
\begin{tabular}{lcccc}
\toprule
Seed & InfluMax & Dismantle & Epidemic & MIS \\
\midrule
7 & 0.7534 & 0.7624 & 0.7604 & 0.7607 \\
42 & 0.7553 & 0.7641 & 0.7609 & 0.7604 \\
123 & 0.7544 & 0.7662 & 0.7644 & 0.7615 \\
314 & 0.7541 & 0.7642 & 0.7644 & 0.7627 \\
555 & 0.7550 & 0.7624 & 0.7652 & 0.7616 \\
999 & 0.7538 & 0.7671 & 0.7666 & 0.7617 \\
1337 & 0.7550 & 0.7657 & 0.7629 & 0.7630 \\
2024 & 0.7541 & 0.7655 & 0.7610 & 0.7610 \\
8080 & 0.7548 & 0.7649 & 0.7614 & 0.7625 \\
31337 & 0.7555 & 0.7645 & 0.7643 & 0.7606 \\
\midrule
mean $\pm$ std & 0.7545\,$\pm$\,0.0006 & 0.7647\,$\pm$\,0.0014 & 0.7632\,$\pm$\,0.0020 & 0.7616\,$\pm$\,0.0009 \\
\bottomrule
\end{tabular}
\end{table}

\subsection{Nested ablation}
\label{app:nested-ablation}

We progressively drop degrees of freedom from the learned superset and report the mean reward at the largest evaluation size.

\begin{itemize}[leftmargin=*,topsep=2pt,itemsep=2pt]
  \item PostDeg-L-Adaptive: adaptive gate $m_i=\sigma(\phi(c_i))$, learned $(\alpha,\beta,\delta)$. Three node-selection deltas: $+3.61\%$ / $+2.61\%$ / $+5.85\%$.
  \item PostDeg-L: scalar gate $m=\sigma(\rho)$, learned $(\alpha,\beta,\delta)$. $+3.50\%$ / $+2.65\%$ / $+6.09\%$.
  \item PostDeg: $\alpha=1, \beta=1/2, \delta=0, m=1$. $+3.50\%$ / $+2.53\%$ / $+5.58\%$.
  \item GraphScalar: $\alpha=1, \beta=\delta=0$ (graphwise scalar only). $+0.31\%$ / $-0.90\%$ / $-0.57\%$.
  \item LN backbone: $m=0$ (identity). $0\%$ baseline.
\end{itemize}

The first three rows cluster within seed noise of each other (TOST equivalence, Table~\ref{tab:tost}), while GraphScalar and the LN backbone are lower. GraphScalar carries weight spectral normalization and the LN backbone does not, and the two are close, so that weight normalization on its own does not explain the PostDeg gain. This matches the equivalence result of \S\ref{sec:results-ablation}, where added capacity in the slot leaves the outcome unchanged.

\subsection{The learned exponent stays near its initialization}
\label{app:why-beta}

The learned exponent $\beta$ is initialized at $0.75$. Its raw parameter starts at $0$, and the sigmoid map onto the domain $[0,1.5]$ sends $0\mapsto 0.75$ (likewise $\alpha\mapsto 1.55$, $\delta\mapsto 0.25$). It then settles at $0.752$--$0.768$ across tasks (seed std $<0.002$, Table~\ref{tab:per-seed-beta}), a total drift under $0.02$ over 200 epochs, so $\beta$ barely leaves its initialization and never descends toward PostDeg's $1/2$.

The near-constant $\beta$ is therefore weak evidence about whether the optimizer prefers $0.75$ to $1/2$. With $\beta$ pinned near its initial value, the learned variants on their own cannot separate ``the extra exponent capacity is unused'' from ``the gradient is too weak to move it,'' and the PostDeg-versus-learned equivalence inherits that ambiguity. The clean test of whether the exponent matters is the direct $\gamma$-sweep (Appendix~\ref{app:gamma-sweep}), which \emph{sets} $\gamma\in\{-\tfrac12,0,\tfrac12,1\}$ on the parameter-free operator, spanning the $1/2$ and $\approx 1$ settings the learned parameter never reaches, and finds only a small, task-dependent effect (largest on MIS, absent on InfluMax). We keep the parameter-free $\gamma=1/2$ as the default.

\subsection{Exponent and sign sweep}
\label{app:gamma-sweep}

This subsection reports the exponent and sign sweep referenced in \S\ref{sec:discussion}. We run the parameter-free operator $s_i=(\widehat c_i+\varepsilon)^{-\gamma}$ at $\gamma\in\{-\tfrac12,0,+\tfrac12,+1\}$, changing only the exponent and holding the backbone, training budget, and $10$ seeds fixed. The three degree-sensitive tasks are included (InfluMax, Dismantle, and MIS, the last split by graph family), and Epidemic is omitted because degree does not help there. Table~\ref{tab:gamma-sweep} gives the paired per-seed comparison against $\gamma=1/2$ (PostDeg), and Figure~\ref{fig:gamma-sweep} plots the gain over $\gamma=0$ as the exponent varies.

\begin{figure}[htbp]
  \centering
  \includegraphics[width=0.74\linewidth]{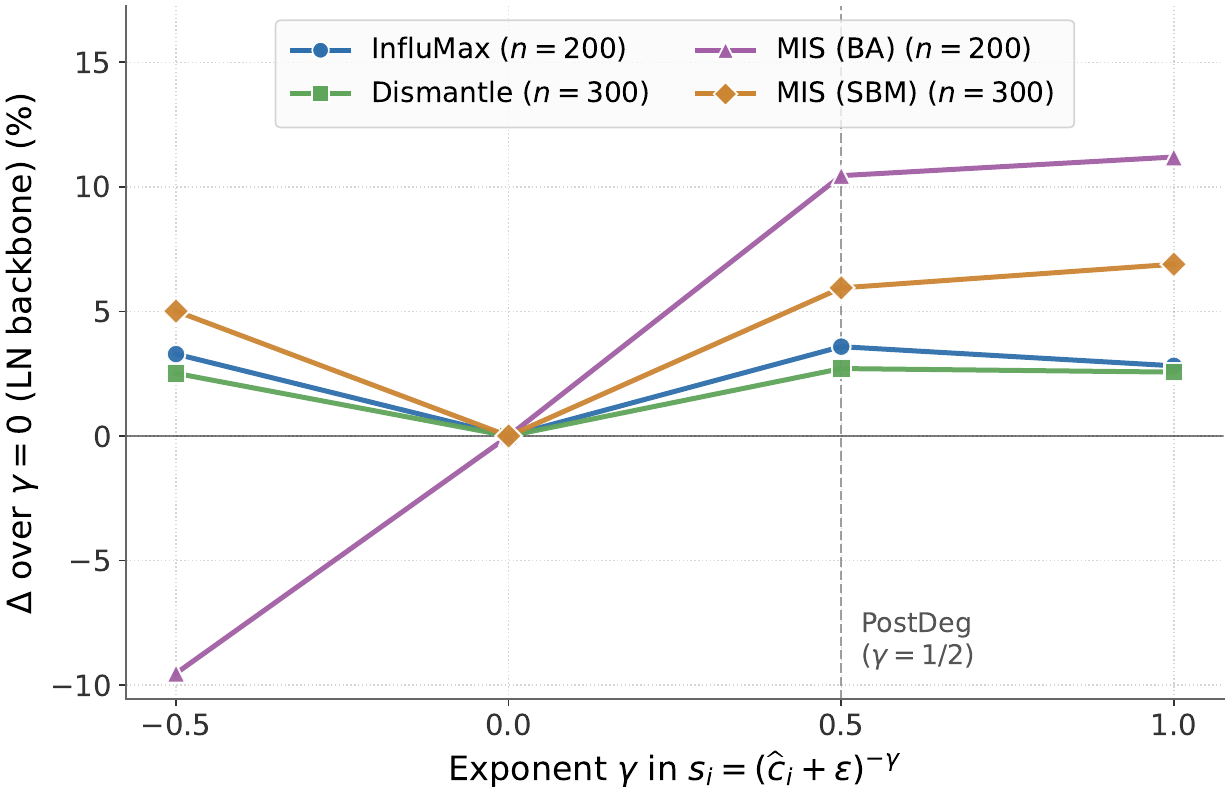}
  \caption{Exponent/sign sweep. Mean paired improvement over $\gamma=0$ (the identity, $s_i\equiv1$, which reproduces the LN backbone by construction) as the exponent $\gamma$ varies, at the largest evaluation size per task, $10$ seeds. PostDeg is $\gamma=1/2$ (dashed line). MIS is split by graph family. On the MIS-BA family the curve is monotone in $\gamma$ and the sign flip $\gamma=-1/2$ is below the backbone, whereas elsewhere the direction matters little. The unit-scale setting $\gamma=0$ recovers none of the gain on any task. The constant-scale arm that separates the degree contrast from the mean amplification is reported in Appendix~\ref{app:review-controls}.}
  \label{fig:gamma-sweep}
\end{figure}

\begin{table}[htbp]
\centering
\caption{Exponent/sign sweep on the parameter-free operator $s_i=(\widehat c_i+\varepsilon)^{-\gamma}$, varying only $\gamma$ (10 seeds, paired). The reference is $\gamma=1/2$ (PostDeg). Columns give the per-seed mean relative change $\Delta\%$ against it at $\gamma\in\{-\tfrac12,0,1\}$. $\gamma=0$ is the identity ($s_i\equiv1$) and reproduces the LN backbone. $^{\dagger}$ marks a two-sided Wilcoxon $p<0.05$; the $n{=}10$ floor is $p=0.002$.}
\label{tab:gamma-sweep}
\small
\setlength{\tabcolsep}{6pt}
\renewcommand{\arraystretch}{1.15}
\begin{tabular}{llcccc}
\toprule
Task & $n$ & $\gamma{=}\tfrac12$ mean & $\Delta\%\,(\gamma{=}{-}\tfrac12)$ & $\Delta\%\,(\gamma{=}0)$ & $\Delta\%\,(\gamma{=}1)$ \\
\midrule
InfluMax & 50 & 13.36 & +0.07 & -3.52$^{\dagger}$ & -0.47 \\
 & 100 & 27.81 & -0.19 & -2.37$^{\dagger}$ & -0.48$^{\dagger}$ \\
 & 150 & 42.38 & -0.24 & -2.94$^{\dagger}$ & -0.89$^{\dagger}$ \\
 & 200 & 56.90 & -0.29 & -3.46$^{\dagger}$ & -0.74$^{\dagger}$ \\
\midrule
Dismantle & 100 & 0.390 & -1.09$^{\dagger}$ & -7.44$^{\dagger}$ & -0.48$^{\dagger}$ \\
 & 150 & 0.308 & -1.07$^{\dagger}$ & -4.11$^{\dagger}$ & -0.08 \\
 & 200 & 0.275 & -0.65$^{\dagger}$ & -3.35$^{\dagger}$ & -0.09 \\
 & 300 & 0.243 & -0.19 & -2.64$^{\dagger}$ & -0.14 \\
\midrule
MIS (BA) & 100 & 25.69 & -12.39$^{\dagger}$ & -5.93$^{\dagger}$ & +1.00 \\
 & 150 & 39.24 & -15.90$^{\dagger}$ & -8.10$^{\dagger}$ & +0.83 \\
 & 200 & 52.85 & -18.08$^{\dagger}$ & -9.46$^{\dagger}$ & +0.68 \\
\midrule
MIS (SBM) & 100 & 25.89 & -2.07$^{\dagger}$ & -6.38$^{\dagger}$ & +1.33$^{\dagger}$ \\
 & 150 & 32.77 & -2.17$^{\dagger}$ & -2.80$^{\dagger}$ & +1.56$^{\dagger}$ \\
 & 200 & 37.41 & -1.80$^{\dagger}$ & -1.52$^{\dagger}$ & +1.55$^{\dagger}$ \\
 & 300 & 41.62 & -0.88$^{\dagger}$ & -5.61$^{\dagger}$ & +0.90$^{\dagger}$ \\
\bottomrule
\end{tabular}
\end{table}

\paragraph{The gain needs a degree-dependent scale.} At $\gamma=0$ the operator returns $s_i=1$ for every node (the identity), so it reduces to the LN backbone by construction and does not test a constant amplification. The paired improvement of PostDeg over $\gamma=0$ is $+3.6\%/+2.5\%/+5.9\%$ on InfluMax/Dismantle/MIS-SBM, which matches the LN-backbone improvements in Table~\ref{tab:ablation_improvement} to within seed noise, and it confirms that $\gamma=0$ recovers the backbone. Across every task and evaluation size, $\gamma=0$ is below PostDeg ($0$ of $10$ seeds, two-sided Wilcoxon $p\le0.01$). On the heavy-tailed MIS-BA family the sign flip $\gamma=-1/2$ is worse than $\gamma=0$, while on InfluMax and Dismantle it is within about $1\%$ of PostDeg, so the positive exponent matters everywhere and the sign matters mainly on the heavy-tailed family. This sweep varies the exponent, which moves the degree contrast and the mean amplification together. The constant-scale arm (a per-graph uniform amplification) separates the two, and we report it in Appendix~\ref{app:review-controls} (Table~\ref{tab:review-placement}). It matches the backbone on all families except the heavy-tailed MIS-BA one, so the gain needs the degree contrast beyond a uniform amplification.

\paragraph{Direction matters where degree is informative.} Flipping the sign to $\gamma=-\tfrac12$, so the scale grows with degree, removes most of the gain on the heavy-tailed MIS-BA family, falling $12$ to $18\%$ below PostDeg and below the backbone itself. The effect is milder on Dismantle ($-0.2$ to $-1.1\%$) and absent on InfluMax ($-0.3$ to $+0.1\%$, not significant), where the magnitude of the scale matters but its direction does not. The inverse-degree direction matters on MIS-BA and is optional on InfluMax.

\paragraph{The exponent has a small effect of its own.} Because the swept operator is parameter-free, it carries none of the weight spectral normalization the learned variants add, so any change with $\gamma$ is due to the exponent alone. Raising $\gamma$ from $\tfrac12$ to $1$ helps MIS ($+0.9$ to $+1.6\%$, significant on SBM), ties on Dismantle, and slightly lowers InfluMax ($-0.5$ to $-0.9\%$). On the BA family the exponent-only increase of $\gamma=1$ over $\gamma=\tfrac12$ is $+0.7$ to $+1.0\%$ and not significant, well below the $+2.2$ to $+2.5\%$ the learned variant reaches on BA (\S\ref{sec:results-ablation}). Most of the learned variant's BA advantage therefore comes from its added components, with a small part from the exponent, though the exponent still contributes on its own, most clearly on MIS. We keep $\gamma=1/2$ as the no-tuning default, since it is within about $1\%$ of the best swept value on each task while needing no search.

\subsection{Spectral quantities}
\label{app:spectral}

\begin{table}[htbp]
\centering
\caption{Spectral quantities through training: $\lambda_{\max}$ and the spectral gap $\lambda_2$ of the normalized Laplacian (mean $\pm$ std across 10 seeds and 3 layers), and the rightmost column gives the across-graph coefficient of variation of $\lambda_{\max}$ in percent. The variation is small on every task, about $0.8\%$ on the InfluMax BA graphs and $2$--$3\%$ on the SBM tasks, so GraphScalar applies nearly the same multiplier to every graph and adds no useful per-graph signal on top of the shared LayerNorm gain.}
\label{tab:spectral-checkpoints}
\small
\setlength{\tabcolsep}{5pt}
\begin{tabular}{llccc}
\toprule
Task & Epoch & $\lambda_{\max}$ & spectral gap $\lambda_2$ & CV($\lambda_{\max}$) \\
\midrule
InfluMax & 1 & 1.7017\,$\pm$\,0.0137 & 0.3095\,$\pm$\,0.0124 & 0.80\% \\
 & 50 & 1.7044\,$\pm$\,0.0145 & 0.3151\,$\pm$\,0.0153 & 0.85\% \\
 & 100 & 1.6980\,$\pm$\,0.0135 & 0.3173\,$\pm$\,0.0157 & 0.79\% \\
 & 199 & 1.7013\,$\pm$\,0.0138 & 0.3112\,$\pm$\,0.0179 & 0.81\% \\
\midrule
Dismantle & 1 & 1.7085\,$\pm$\,0.0317 & 0.2348\,$\pm$\,0.0143 & 1.85\% \\
 & 50 & 1.7267\,$\pm$\,0.0365 & 0.2125\,$\pm$\,0.0165 & 2.11\% \\
 & 100 & 1.7323\,$\pm$\,0.0180 & 0.2295\,$\pm$\,0.0188 & 1.04\% \\
 & 199 & 1.7462\,$\pm$\,0.0395 & 0.2215\,$\pm$\,0.0267 & 2.26\% \\
\midrule
Epidemic & 1 & 1.7268\,$\pm$\,0.0502 & 0.2250\,$\pm$\,0.0277 & 2.91\% \\
 & 50 & 1.7200\,$\pm$\,0.0548 & 0.2278\,$\pm$\,0.0182 & 3.19\% \\
 & 100 & 1.7215\,$\pm$\,0.0328 & 0.2387\,$\pm$\,0.0193 & 1.90\% \\
 & 199 & 1.7180\,$\pm$\,0.0341 & 0.2347\,$\pm$\,0.0195 & 1.98\% \\
\midrule
MIS & 1 & 1.5534\,$\pm$\,0.0304 & 0.1903\,$\pm$\,0.0184 & 1.96\% \\
 & 50 & 1.5649\,$\pm$\,0.0488 & 0.1813\,$\pm$\,0.0170 & 3.12\% \\
 & 100 & 1.5761\,$\pm$\,0.0293 & 0.1882\,$\pm$\,0.0183 & 1.86\% \\
 & 199 & 1.5862\,$\pm$\,0.0316 & 0.1924\,$\pm$\,0.0143 & 1.99\% \\
\bottomrule
\end{tabular}
\end{table}

\FloatBarrier
\subsection{Effect attenuates with degree heterogeneity}
\label{app:boundary}

\paragraph{Low-heterogeneity check.} Lemma~\ref{lem:near-regular-app} sets the strict regular limit, and Epidemic and DD probe the lower-heterogeneity end around it.

DD and the large-size epidemic case have lower degree heterogeneity than the BA InfluMax graphs. Epidemic SBM contact graphs have skewness $0.45$ and DD protein graphs $0.27$, compared with $2.90$ for the BA InfluMax graphs. The placement rule implies attenuation as the operating point moves from high-heterogeneity toward low-heterogeneity, and Figures~\ref{fig:effect-vs-het} and~\ref{fig:appendix-mis} confirm this.

\paragraph{MIS by graph distribution.}
Stratifying MIS by graph family (BA versus\ SBM) shows a larger PostDeg margin on the BA family than on the more regular SBM family (Table~\ref{tab:mis_by_dist_app} and Figure~\ref{fig:appendix-mis}). Because MIS trains on SBM only, the BA family is cross-distribution (out-of-distribution) evaluation, so this contrast confounds degree heterogeneity with distribution shift.

\begin{table}[htbp]
\centering
\caption{MIS performance by evaluation graph distribution (mean $\pm$ std, 10 seeds). MIS trains on SBM only, so the BA rows are cross-distribution (out-of-distribution) evaluation and the BA-versus-SBM gap confounds degree heterogeneity with distribution shift. Evaluation sizes also differ by family ($n=200$ for BA versus\ $n=300$ for SBM), so we report this as a directional finding, not a matched-size estimate.}
\label{tab:mis_by_dist_app}
\small
\setlength{\tabcolsep}{6pt}
\begin{tabular}{lcccc}
\toprule
Method & SBM ($n{=}300$) & $\Delta\%$ vs LN backbone & BA ($n{=}200$) & $\Delta\%$ vs LN backbone \\
\midrule
PostDeg-L-Adaptive & 41.92\,$\pm$\,0.35 & +5.85 & 54.10\,$\pm$\,0.40 & +10.98 \\
PostDeg-L-FG & 42.02\,$\pm$\,0.35 & +6.09 & 54.33\,$\pm$\,0.31 & +11.44 \\
PostDeg & 41.81\,$\pm$\,0.34 & +5.58 & 53.03\,$\pm$\,0.53 & +8.77 \\
LN backbone & 39.60\,$\pm$\,0.32 & n/a & 48.75\,$\pm$\,0.43 & n/a \\
\bottomrule
\end{tabular}
\end{table}

\begin{figure}[htbp]
  \centering
  \includegraphics[width=0.85\linewidth]{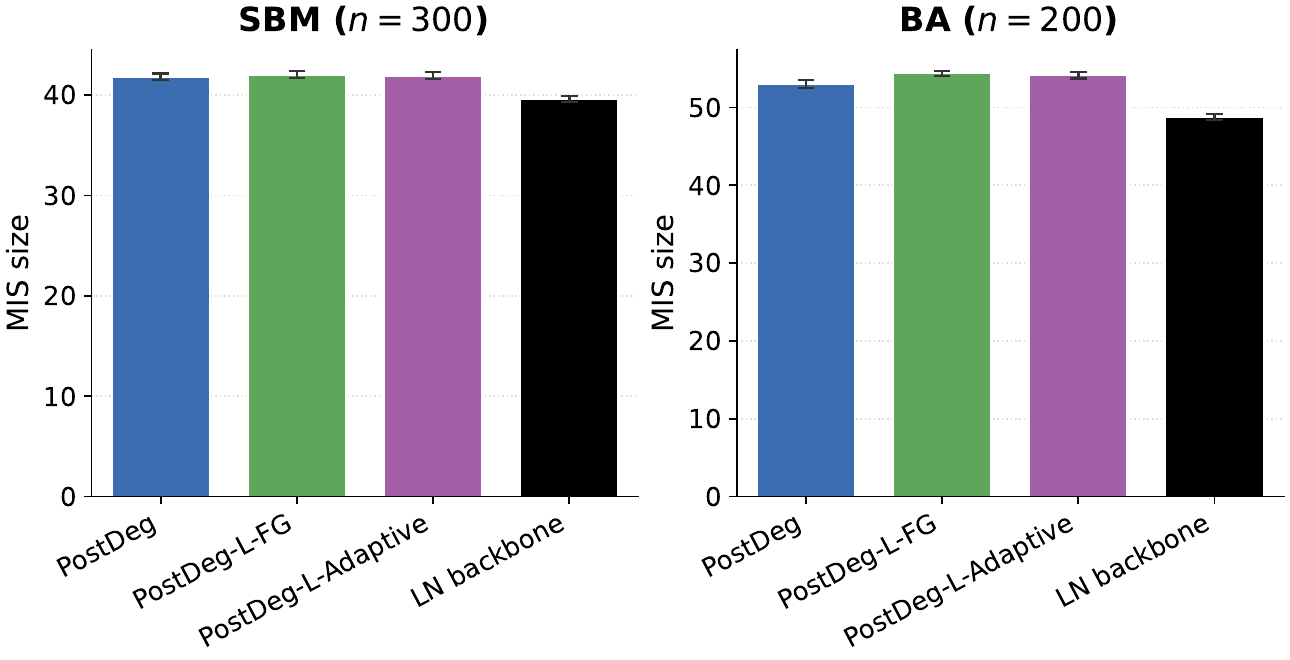}
  \caption{MIS performance by evaluation graph distribution. MIS trains on SBM only, so the BA family is cross-distribution (out-of-distribution) evaluation. The smaller margin on the near-regular SBM family is consistent with Lemma~\ref{lem:near-regular-app} but is confounded with distribution shift.}
  \label{fig:appendix-mis}
\end{figure}

\paragraph{Train-evaluation gap.}
Figure~\ref{fig:appendix-gengap} reports the train-evaluation generalization gap by method, computed as the difference between training-graph reward at convergence and reward at the largest evaluation size. PostDeg has the smallest gap on all tasks, while the node-centering baselines (PairNorm, InstanceNorm, BatchNorm) have the largest.

\begin{figure}[htbp]
  \centering
  \includegraphics[width=\linewidth]{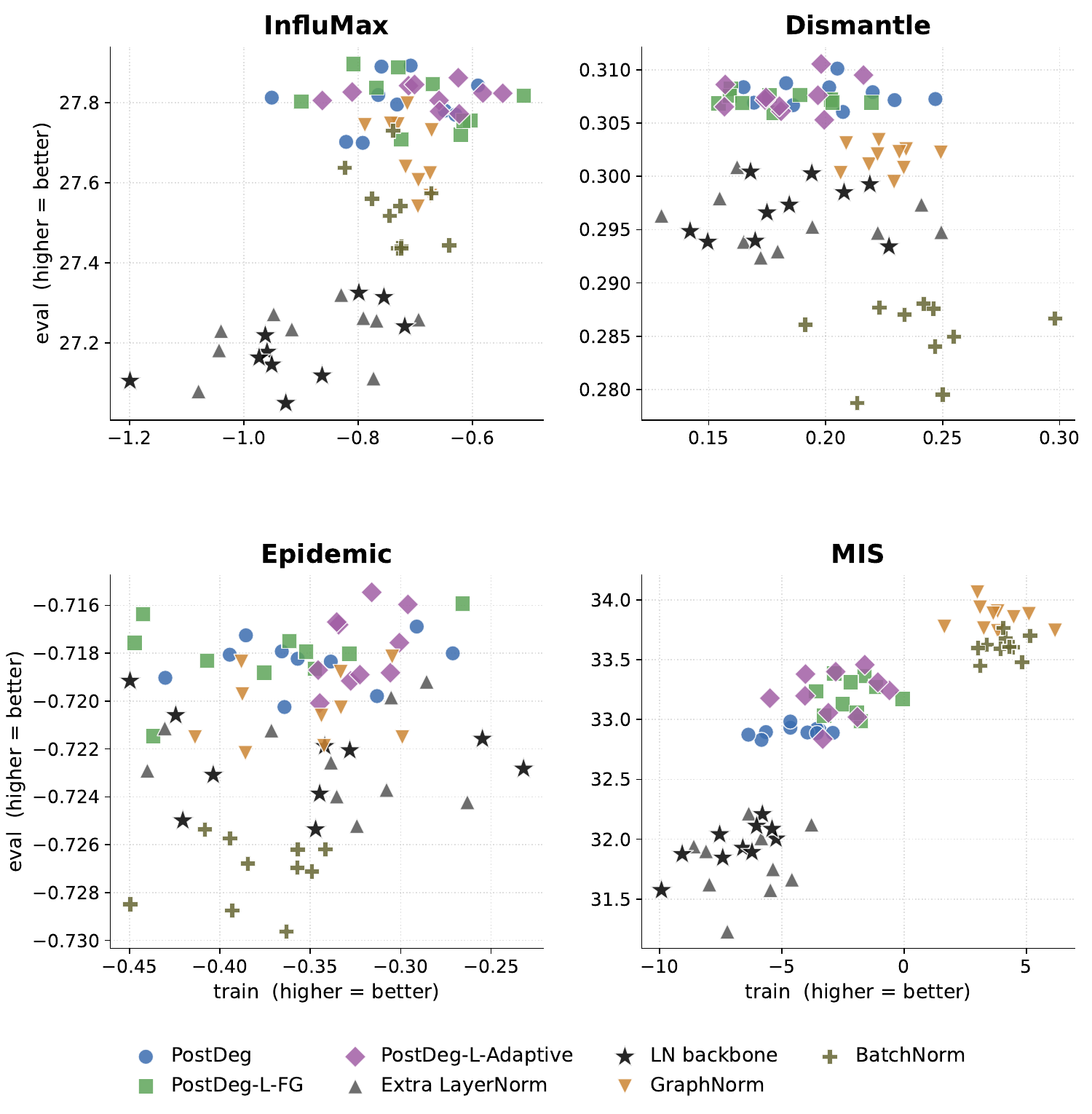}
  \caption{Train-evaluation generalization gap by method. Each point is one seed. Axes are oriented so higher is better on both (Epidemic, lower-is-better, is sign-flipped). PostDeg and its learned variants (coloured, upper region) sit above the LayerNorm backbone and Extra-LayerNorm controls (grey/black, lower region) on every task, i.e.\ a smaller train-to-evaluation drop.}
  \label{fig:appendix-gengap}
\end{figure}

\paragraph{Per-fold DD breakdown.}
Per-fold DD accuracies for each (method, fold, seed) cell are in Table~\ref{tab:per-fold-dd}. The node-centering methods are at the per-fold majority-class rate on each fold, whereas PostDeg, its learned variants, and LayerNorm remain near $80\%$.

\begin{table}[htbp]
\centering
\caption{Per-fold final accuracy on DD (10-fold cross-validation, mean across 10 seeds). Folds are listed in order. The three feature-statistic normalizers (PairNorm, InstanceNorm, BatchNorm) collapse to within seed-noise of the per-fold majority-class rate (each fold's larger class is in the high-$58\%$ range), uniformly across folds.}
\label{tab:per-fold-dd}
\small
\setlength{\tabcolsep}{2.4pt}
\begin{tabular}{lcccccccccccc}
\toprule
Method & f0 & f1 & f2 & f3 & f4 & f5 & f6 & f7 & f8 & f9 & overall & std \\
\midrule
PostDeg & 0.780 & 0.811 & 0.806 & 0.788 & 0.789 & 0.797 & 0.784 & 0.808 & 0.805 & 0.820 & 0.799 & 0.037 \\
PostDeg-L-FG & 0.776 & 0.818 & 0.803 & 0.794 & 0.794 & 0.797 & 0.792 & 0.816 & 0.796 & 0.826 & 0.801 & 0.034 \\
PostDeg-L-Adaptive & 0.775 & 0.822 & 0.803 & 0.792 & 0.798 & 0.801 & 0.796 & 0.812 & 0.805 & 0.823 & 0.803 & 0.036 \\
Extra LayerNorm & 0.780 & 0.822 & 0.811 & 0.796 & 0.792 & 0.803 & 0.792 & 0.814 & 0.804 & 0.821 & 0.803 & 0.035 \\
GraphScalar & 0.786 & 0.822 & 0.815 & 0.797 & 0.792 & 0.803 & 0.792 & 0.827 & 0.812 & 0.826 & 0.807 & 0.035 \\
LN backbone & 0.784 & 0.818 & 0.818 & 0.795 & 0.792 & 0.798 & 0.791 & 0.819 & 0.805 & 0.823 & 0.804 & 0.035 \\
GraphNorm & 0.778 & 0.796 & 0.791 & 0.775 & 0.769 & 0.775 & 0.791 & 0.801 & 0.792 & 0.780 & 0.785 & 0.029 \\
PairNorm & 0.593 & 0.585 & 0.585 & 0.585 & 0.585 & 0.585 & 0.585 & 0.585 & 0.590 & 0.590 & 0.587 & 0.003 \\
InstanceNorm & 0.593 & 0.585 & 0.585 & 0.585 & 0.585 & 0.585 & 0.585 & 0.585 & 0.590 & 0.590 & 0.587 & 0.003 \\
BatchNorm & 0.594 & 0.585 & 0.586 & 0.585 & 0.585 & 0.588 & 0.587 & 0.588 & 0.591 & 0.592 & 0.588 & 0.005 \\
\bottomrule
\end{tabular}
\end{table}

PairNorm, InstanceNorm, and BatchNorm each reach the per-fold majority-class rate within the first epoch on each fold and each seed (median epoch $0$, 90th-percentile epoch $0$).

\paragraph{Result.} Per-task learned-superset scalar-factor regressions return negative slopes ($-0.83$ to $-0.62$) on each node-selection task (Table~\ref{tab:pearson-all}, Figure~\ref{fig:appendix-per-layer-scale}), and the optimizer treats the three GAT layers nearly identically.

\subsection{Transfer and supplementary visualizations}
\label{app:transfer}


\begin{figure}[htbp]
  \centering
  \includegraphics[width=\linewidth]{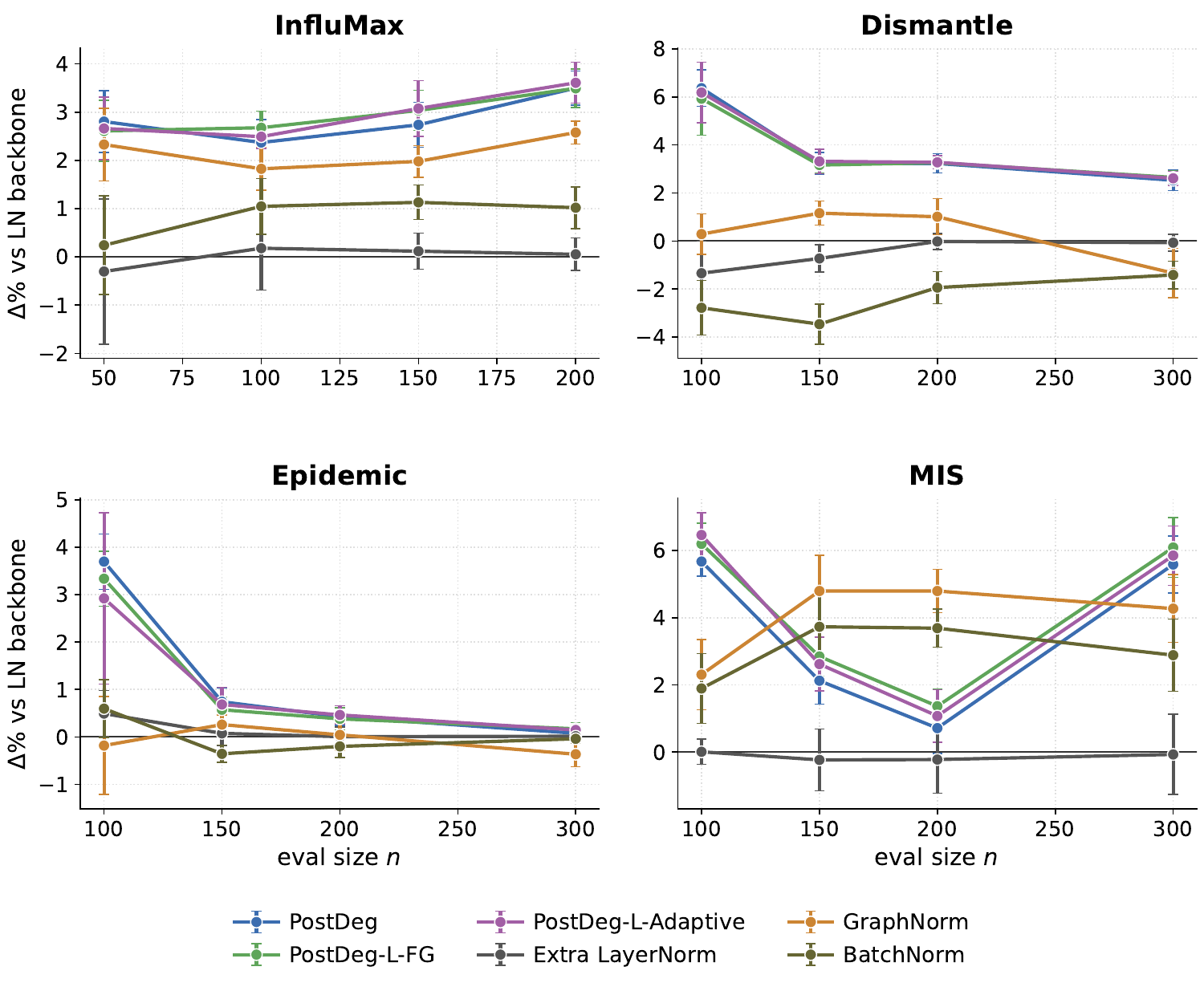}
  \caption{Relative improvement over LN backbone against evaluation graph size, per task and method (mean $\pm$ seed std). The horizontal zero line marks the LN backbone baseline.}
  \label{fig:appendix-rel-vs-size}
\end{figure}

\paragraph{Policy versus\ classical heuristics (paired Wilcoxon).}
Across each (task, evaluation size) cell on the three node-selection tasks that admit a canonical classical heuristic (InfluMax, Dismantle, Epidemic), paired Wilcoxon tests reject equality between the PostDeg policy and the heuristic at $\alpha=0.05$. MIS is reported separately because the released data include a greedy MIS construction (\texttt{greedy\_mean\_size}) that the learned policy does not reach. PostDeg is below greedy by $11$ to $23\%$ (for example $-22.7\%$ at SBM $n{=}300$ and $-11.3\%$ at BA $n{=}200$), and the LN backbone is further below ($-26.8\%$ at SBM $n{=}300$). PostDeg improves over the backbone on MIS but stays well below the greedy construction, and we report it. Simple greedy algorithms matching or beating GNN solvers on MIS is a known result \citep{schuetz2022,angelini2023}. Table~\ref{tab:policy-vs-heuristic} reports the three tasks with a canonical centrality heuristic, and Figure~\ref{fig:appendix-heuristic} plots the same comparison. The InfluMax greedy comparator is also weak, since it uses only $10$ Monte Carlo cascade samples, below the $10^2$--$10^4$ standard for greedy influence maximization. Our claim concerns the normalization slot.

\begin{table}[htbp]
\centering
\caption{Trained policy versus the canonical classical heuristic, paired by 10 seeds. We report the mean and seed-std of the per-seed gain (policy minus heuristic, sign-flipped for lower-is-better tasks) and the paired Wilcoxon $p$-value. Heuristic is greedy spread for InfluMax and the degree heuristic for Dismantle and Epidemic.}
\label{tab:policy-vs-heuristic}
\small
\setlength{\tabcolsep}{4pt}
\begin{tabular}{lllccc}
\toprule
Task & Size & Method & gain (mean$\pm$std) & paired $p$ & heuristic \\
\midrule
InfluMax & n50 & PostDeg & +1.409\,$\pm$\,0.122 & 0.002 & greedy spread \\
InfluMax & n50 & PostDeg-L-Adaptive & +1.357\,$\pm$\,0.118 & 0.002 & greedy spread \\
InfluMax & n50 & LN backbone & +1.013\,$\pm$\,0.114 & 0.002 & greedy spread \\
InfluMax & n100 & PostDeg & +2.956\,$\pm$\,0.179 & 0.002 & greedy spread \\
InfluMax & n100 & PostDeg-L-Adaptive & +3.040\,$\pm$\,0.143 & 0.002 & greedy spread \\
InfluMax & n100 & LN backbone & +2.323\,$\pm$\,0.168 & 0.002 & greedy spread \\
InfluMax & n150 & PostDeg & +4.530\,$\pm$\,0.257 & 0.002 & greedy spread \\
InfluMax & n150 & PostDeg-L-Adaptive & +4.583\,$\pm$\,0.181 & 0.002 & greedy spread \\
InfluMax & n150 & LN backbone & +3.284\,$\pm$\,0.231 & 0.002 & greedy spread \\
InfluMax & n200 & PostDeg & +6.053\,$\pm$\,0.265 & 0.002 & greedy spread \\
InfluMax & n200 & PostDeg-L-Adaptive & +6.124\,$\pm$\,0.251 & 0.002 & greedy spread \\
InfluMax & n200 & LN backbone & +4.109\,$\pm$\,0.410 & 0.002 & greedy spread \\
\midrule
Dismantle & n=100 & PostDeg & +0.031\,$\pm$\,0.006 & 0.002 & degree heuristic \\
Dismantle & n=100 & PostDeg-L-Adaptive & +0.030\,$\pm$\,0.006 & 0.002 & degree heuristic \\
Dismantle & n=100 & LN backbone & +0.007\,$\pm$\,0.009 & 0.064 & degree heuristic \\
Dismantle & n=150 & PostDeg & +0.090\,$\pm$\,0.001 & 0.002 & degree heuristic \\
Dismantle & n=150 & PostDeg-L-Adaptive & +0.090\,$\pm$\,0.001 & 0.002 & degree heuristic \\
Dismantle & n=150 & LN backbone & +0.080\,$\pm$\,0.003 & 0.002 & degree heuristic \\
Dismantle & n=200 & PostDeg & +0.071\,$\pm$\,0.001 & 0.002 & degree heuristic \\
Dismantle & n=200 & PostDeg-L-Adaptive & +0.071\,$\pm$\,0.001 & 0.002 & degree heuristic \\
Dismantle & n=200 & LN backbone & +0.062\,$\pm$\,0.001 & 0.002 & degree heuristic \\
Dismantle & n=300 & PostDeg & +0.043\,$\pm$\,0.001 & 0.002 & degree heuristic \\
Dismantle & n=300 & PostDeg-L-Adaptive & +0.043\,$\pm$\,0.001 & 0.002 & degree heuristic \\
Dismantle & n=300 & LN backbone & +0.037\,$\pm$\,0.001 & 0.002 & degree heuristic \\
\midrule
Epidemic & n=100 & PostDeg & +0.018\,$\pm$\,0.004 & 0.002 & degree heuristic \\
Epidemic & n=100 & PostDeg-L-Adaptive & +0.012\,$\pm$\,0.011 & 0.014 & degree heuristic \\
Epidemic & n=100 & LN backbone & -0.005\,$\pm$\,0.004 & 0.010 & degree heuristic \\
Epidemic & n=150 & PostDeg & +0.051\,$\pm$\,0.002 & 0.002 & degree heuristic \\
Epidemic & n=150 & PostDeg-L-Adaptive & +0.051\,$\pm$\,0.002 & 0.002 & degree heuristic \\
Epidemic & n=150 & LN backbone & +0.047\,$\pm$\,0.001 & 0.002 & degree heuristic \\
Epidemic & n=200 & PostDeg & +0.045\,$\pm$\,0.001 & 0.002 & degree heuristic \\
Epidemic & n=200 & PostDeg-L-Adaptive & +0.045\,$\pm$\,0.001 & 0.002 & degree heuristic \\
Epidemic & n=200 & LN backbone & +0.042\,$\pm$\,0.001 & 0.002 & degree heuristic \\
Epidemic & n=300 & PostDeg & +0.015\,$\pm$\,0.001 & 0.002 & degree heuristic \\
Epidemic & n=300 & PostDeg-L-Adaptive & +0.016\,$\pm$\,0.001 & 0.002 & degree heuristic \\
Epidemic & n=300 & LN backbone & +0.015\,$\pm$\,0.001 & 0.002 & degree heuristic \\
\bottomrule
\end{tabular}
\end{table}

\begin{figure}[htbp]
  \centering
  \includegraphics[width=\linewidth]{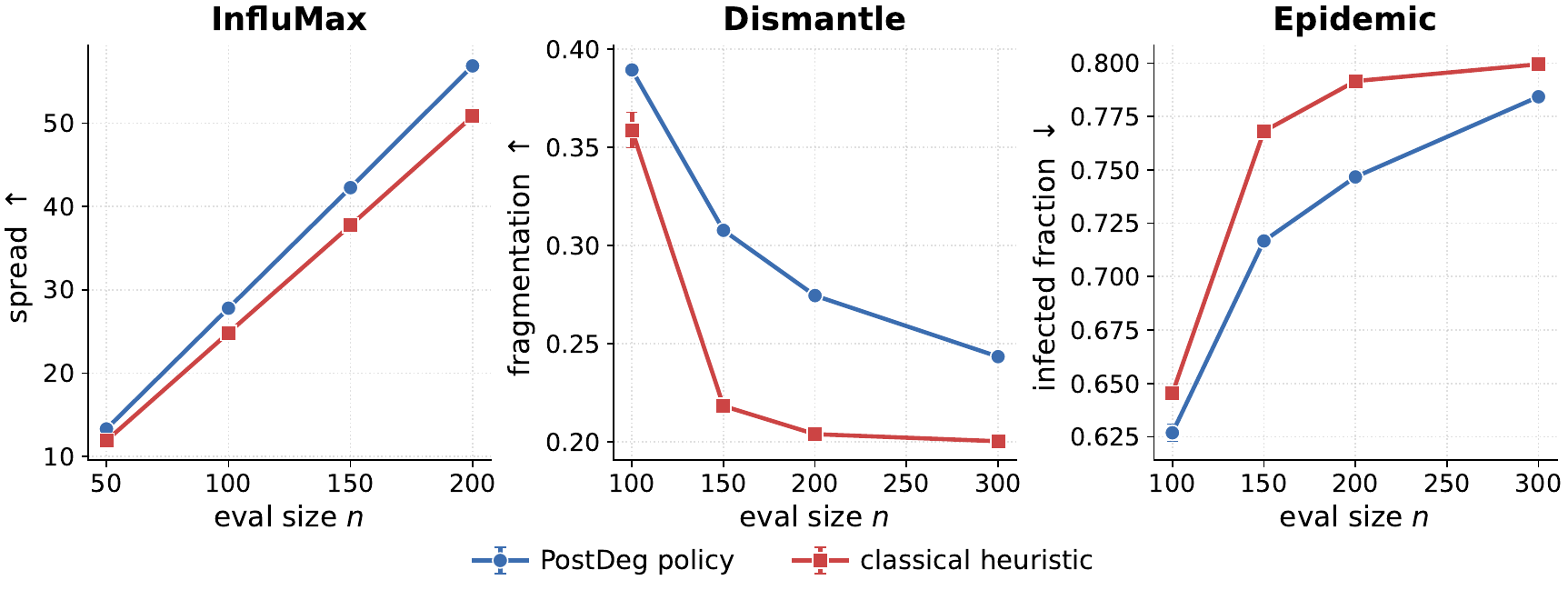}
  \caption{Visual companion to Table~\ref{tab:policy-vs-heuristic}. Trained policies match or surpass the strongest classical centrality heuristic on the three node-selection tasks with a canonical heuristic (InfluMax, Dismantle, Epidemic). On MIS the learned policy stays $11$ to $23\%$ below a greedy construction (see text). PostDeg improves over the backbone there but does not reach greedy.}
  \label{fig:appendix-heuristic}
\end{figure}

\section{Additional placement, mechanism, and reporting controls}
\label{app:review-controls}

This appendix reports the placement and contrast controls summarized in \S\ref{sec:results-main} and \S\ref{sec:mechanism}. Every run uses the fixed GAT backbone, 200 epochs, and 10 seeds, and evaluates over 200 instances per size. Relative changes are paired per seed against the LN backbone. These controls are a separate training campaign from the main grid of Table~\ref{tab:ablation_improvement}, with fresh PostDeg and LN-backbone runs, so paired margins differ slightly between the two tables (InfluMax at $n=200$ is $+3.3\%$ here and $+3.5\%$ in the main grid), within seed noise.

\paragraph{Pre-LayerNorm and constant-scale arms.} We compare four operators in the same post-block slot, namely the LN backbone, PostDeg (the degree scale after LayerNorm), the same degree scale applied before LayerNorm, and a per-graph constant scale applied after LayerNorm. PostDeg improves over the backbone on every seed, by $+2.2$ to $+3.3\%$ on InfluMax, $+2.4$ to $+7.7\%$ on Dismantle, $+0.6$ to $+5.8\%$ on MIS-SBM, and $+5.6$ to $+9.7\%$ on MIS-BA. The pre-LayerNorm arm matches the backbone on all four families, within $\pm0.6\%$ and with no cell significant, which confirms the forward-pass absorption. The constant scale also matches the backbone on InfluMax, Dismantle, and MIS-SBM, within $\pm0.7\%$. On MIS-BA the constant scale gains $+2.7$ to $+3.6\%$ (9 of 10 seeds), about a third to a half of PostDeg's advantage there, so on the most heavy-tailed family part of the effect is uniform amplification and the rest is the degree contrast.

\begin{table}[htbp]
\centering
\caption{Placement and contrast controls. Relative improvement over the LN backbone (\%, paired per seed over 10 seeds, paired-seed wins out of 10 in parentheses). \emph{PostDeg} is the degree scale after LayerNorm, \emph{pre-LN} is the same scale before LayerNorm, and \emph{const} is a per-graph constant scale after LayerNorm. The pre-LN arm matches the backbone on every family, which is the forward-pass absorption. The constant scale also matches the backbone except on the heavy-tailed MIS-BA family, where it recovers about a third to a half of PostDeg's gain, so on that family part of the effect is uniform amplification and the rest is the degree contrast.}
\label{tab:review-placement}
\footnotesize
\setlength{\tabcolsep}{6pt}
\renewcommand{\arraystretch}{1.1}
\begin{tabular}{lccc}
\toprule
Eval size & PostDeg & pre-LN & const \\
\midrule
\multicolumn{4}{l}{\emph{InfluMax (BA)}} \\
$n{=}50$  & $+2.68$ (10) & $+0.20$ (6) & $-0.18$ (6) \\
$n{=}100$ & $+2.23$ (10) & $+0.07$ (5) & $+0.66$ (8) \\
$n{=}150$ & $+2.97$ (10) & $+0.12$ (6) & $+0.61$ (8) \\
$n{=}200$ & $+3.33$ (10) & $+0.10$ (4) & $+0.50$ (7) \\
\midrule
\multicolumn{4}{l}{\emph{Dismantle (SBM)}} \\
$n{=}100$ & $+7.68$ (10) & $+0.57$ (7) & $+0.28$ (6) \\
$n{=}150$ & $+4.22$ (10) & $+0.27$ (5) & $-0.06$ (5) \\
$n{=}200$ & $+3.44$ (10) & $-0.03$ (4) & $+0.05$ (4) \\
$n{=}300$ & $+2.39$ (10) & $-0.05$ (3) & $-0.09$ (4) \\
\midrule
\multicolumn{4}{l}{\emph{MIS (SBM)}} \\
$n{=}100$ & $+5.83$ (10) & $-0.28$ (5) & $+0.30$ (6) \\
$n{=}150$ & $+2.05$ (9)  & $-0.43$ (4) & $+0.26$ (5) \\
$n{=}200$ & $+0.54$ (6)  & $-0.33$ (4) & $+0.10$ (5) \\
$n{=}300$ & $+5.62$ (10) & $-0.07$ (3) & $-0.08$ (6) \\
\midrule
\multicolumn{4}{l}{\emph{MIS (BA)}} \\
$n{=}100$ & $+5.61$ (10) & $-0.22$ (5) & $+2.72$ (9) \\
$n{=}150$ & $+7.91$ (10) & $-0.15$ (5) & $+3.34$ (9) \\
$n{=}200$ & $+9.68$ (10) & $-0.18$ (5) & $+3.62$ (9) \\
\bottomrule
\end{tabular}
\end{table}

\paragraph{Per-layer placement.} We apply the degree scale on all layers, on the first layer only, or on the last layer only. All-layers and first-only are close, and both stay positive over the backbone on every positive cell ($+2.4$ to $+3.4\%$ on InfluMax, $+2.5$ to $+8\%$ on Dismantle, $+0.6$ to $+11\%$ on MIS). Last-only is much weaker and can fall below the backbone, dropping to $-1.3\%$ on Dismantle at $n{=}300$ and to $-2.2$ to $-2.9\%$ on MIS-BA (1 of 10 seeds). The degree scale therefore needs an early layer, and a single late application is not enough.

\begin{table}[htbp]
\centering
\caption{Per-layer placement of the degree scale. Relative change (\%, paired per seed over 10 seeds, wins out of 10 in parentheses) from applying the scale on the first layer only or the last layer only, measured against applying it on all layers (PostDeg). First-only is close to all-layers on InfluMax and Dismantle and exceeds it on MIS, while last-only is consistently weaker and can fall below the backbone. The scale needs an early layer, and a single late application is not enough.}
\label{tab:review-layers}
\footnotesize
\setlength{\tabcolsep}{6pt}
\renewcommand{\arraystretch}{1.1}
\begin{tabular}{lcc}
\toprule
Eval size & first-only & last-only \\
\midrule
\multicolumn{3}{l}{\emph{InfluMax (BA)}} \\
$n{=}50$  & $-0.14$ (5) & $-0.31$ (3) \\
$n{=}100$ & $-0.11$ (4) & $-0.58$ (0) \\
$n{=}150$ & $+0.10$ (5) & $-0.28$ (3) \\
$n{=}200$ & $+0.12$ (6) & $-0.50$ (2) \\
\midrule
\multicolumn{3}{l}{\emph{Dismantle (SBM)}} \\
$n{=}100$ & $+0.37$ (6) & $-7.17$ (0) \\
$n{=}150$ & $-0.32$ (2) & $-5.13$ (0) \\
$n{=}200$ & $-0.14$ (3) & $-3.86$ (0) \\
$n{=}300$ & $-0.03$ (1) & $-3.77$ (0) \\
\midrule
\multicolumn{3}{l}{\emph{MIS (SBM)}} \\
$n{=}100$ & $+1.28$ (10) & $-2.95$ (0) \\
$n{=}150$ & $+2.67$ (10) & $-1.98$ (0) \\
$n{=}200$ & $+3.04$ (10) & $-1.12$ (0) \\
$n{=}300$ & $+1.84$ (10) & $-0.93$ (2) \\
\midrule
\multicolumn{3}{l}{\emph{MIS (BA)}} \\
$n{=}100$ & $+2.19$ (9) & $-7.88$ (0) \\
$n{=}150$ & $+1.94$ (9) & $-9.33$ (0) \\
$n{=}200$ & $+1.50$ (8) & $-10.51$ (0) \\
\bottomrule
\end{tabular}
\end{table}

\paragraph{MIS trained on the heavy-tailed family.} The main MIS runs train on SBM, so BA evaluation mixes heterogeneity with distribution shift (\S\ref{sec:mechanism}). Training MIS on BA removes the shift. PostDeg still gains on BA evaluation, by $+1.7\%/+2.8\%/+3.9\%$ at $n=100/150/200$ on all 10 seeds. This in-distribution gain is smaller than the shifted one, so part of the earlier BA margin is distribution shift and part is heterogeneity.

\begin{table}[htbp]
\centering
\caption{MIS trained on the heavy-tailed BA family. Relative improvement of PostDeg over the LN backbone (\%, paired per seed over 10 seeds, wins out of 10 in parentheses). The main MIS runs train on SBM, so their BA evaluation mixes heterogeneity with distribution shift. Training on BA removes the shift, and PostDeg still improves on every seed, so distribution shift does not fully explain the effect.}
\label{tab:review-misba}
\footnotesize
\setlength{\tabcolsep}{6pt}
\renewcommand{\arraystretch}{1.1}
\begin{tabular}{lc}
\toprule
Eval size & PostDeg $\Delta\%$ \\
\midrule
SBM $n{=}100$ & $+4.54$ (9) \\
SBM $n{=}150$ & $+4.56$ (10) \\
SBM $n{=}200$ & $+4.80$ (10) \\
SBM $n{=}300$ & $+10.22$ (10) \\
\addlinespace[2pt]
BA $n{=}100$ & $+1.68$ (10) \\
BA $n{=}150$ & $+2.77$ (10) \\
BA $n{=}200$ & $+3.94$ (10) \\
\bottomrule
\end{tabular}
\end{table}

\paragraph{MIS against the best competing normalizer.} Table~\ref{tab:ablation_improvement} reports MIS against the LN backbone at the largest evaluation size. Against the strongest competing normalizer the comparison is size-dependent (Table~\ref{tab:mis-by-size}). PostDeg is best at the smallest and largest SBM sizes and at the largest BA size, and GraphNorm, InstanceNorm, and BatchNorm match or exceed it at the middle SBM sizes. We read PostDeg as competitive with these tuned feature-statistic normalizers on MIS at zero added parameters, with its clearest lead on the heavy-tailed family.

\begin{table}[htbp]
\centering
\caption{MIS improvement over the LN backbone by evaluation size and method (\%, 10 seeds), across both graph families. The best method at each size is in bold. PostDeg is best at the smallest and largest SBM sizes and at the largest BA size, while GraphNorm, InstanceNorm, and BatchNorm match or exceed it at the intermediate SBM sizes. The headline $+5.6\%$ in Table~\ref{tab:ablation_improvement} is the SBM $n{=}300$ cell.}
\label{tab:mis-by-size}
\footnotesize
\setlength{\tabcolsep}{6pt}
\renewcommand{\arraystretch}{1.1}
\begin{tabular}{lcccc}
\toprule
Eval size & PostDeg & GraphNorm & InstanceNorm & BatchNorm \\
\midrule
SBM $n{=}100$ & $\mathbf{+5.67}$ & $+2.30$ & $+1.84$ & $+1.89$ \\
SBM $n{=}150$ & $+2.12$ & $\mathbf{+4.79}$ & $+4.32$ & $+3.73$ \\
SBM $n{=}200$ & $+0.71$ & $\mathbf{+4.79}$ & $+4.48$ & $+3.69$ \\
SBM $n{=}300$ & $\mathbf{+5.58}$ & $+4.27$ & $+4.01$ & $+2.88$ \\
\addlinespace[2pt]
BA $n{=}100$ & $+4.86$ & $\mathbf{+6.14}$ & $+5.10$ & $+4.92$ \\
BA $n{=}150$ & $+7.04$ & $\mathbf{+7.11}$ & $+6.01$ & $+5.42$ \\
BA $n{=}200$ & $\mathbf{+8.77}$ & $+7.87$ & $+6.71$ & $+5.86$ \\
\bottomrule
\end{tabular}
\end{table}

\paragraph{Size profile of the task gains.} The main table reports the largest evaluation size. Table~\ref{tab:size-profile} reports the full profile on the two SBM tasks that share a size grid. Both gains are largest at the smallest evaluation size ($n{=}100$, below the $n{=}150$ training size) and shrink as graphs grow. On epidemic containment PostDeg improves on every seed there, by $+3.7\%$, and fades to near zero by $n{=}300$, so its near-zero value at the largest size is a size effect.

\begin{table}[htbp]
\centering
\caption{Improvement of PostDeg over the LN backbone by evaluation size (\%, 10 seeds, wins out of 10 in parentheses) on the two SBM tasks that share a size grid. Both effects are largest at the smallest evaluation size ($n{=}100$, below the $n{=}150$ training size) and shrink as the graph grows. Epidemic is positive and unanimous at $n{=}100$ and fades to near zero by $n{=}300$, so its near-zero value at the largest size reflects evaluation size, and degree still helps on small graphs. For reference, InfluMax stays between $+2.4\%$ and $+3.5\%$ across $n=50$ to $200$, and the MIS profile is in Table~\ref{tab:mis-by-size}.}
\label{tab:size-profile}
\footnotesize
\setlength{\tabcolsep}{8pt}
\renewcommand{\arraystretch}{1.1}
\begin{tabular}{lcccc}
\toprule
Task & $n{=}100$ & $n{=}150$ & $n{=}200$ & $n{=}300$ \\
\midrule
Dismantle & $+6.36$ (10) & $+3.22$ (10) & $+3.23$ (10) & $+2.53$ (10) \\
Epidemic  & $+3.70$ (10) & $+0.74$ (10) & $+0.43$ (10) & $+0.08$ (9) \\
\bottomrule
\end{tabular}
\end{table}

\subsection{Forward-hook diagnostics}
\label{app:review-hooks}

We register forward hooks on the trained models and record, per node, the feature variance at each block LayerNorm input and the representation norm at the score head input, over 20 graphs per (task, mode). We then fit each quantity against node degree in log-log form. For PostDeg the score-head norm has a clear negative slope against degree, near $-0.49$ with correlation near $-0.98$ on all three tasks. For the LN backbone, the constant scale, and the pre-LayerNorm scale the slope is near $0$. The pre-LayerNorm scale is the direct check. Its LayerNorm-input variance has a strong negative degree slope at the first layer (about $-0.7$ to $-1.1$), while its score-head norm stays flat, so LayerNorm divides the pre-normalization scale out before the score head. These diagnostics use a single seed (42). We read the slope as the magnitude signal, and the correlation on near-flat rows carries little information.

\begin{table}[htbp]
\centering
\caption{Forward-hook diagnostics. Log-log slope and Pearson $r$ of the score-head input representation norm against node degree, over 20 graphs per cell at a single seed (42). PostDeg shows a clear inverse-degree slope near $-0.49$ (correlation near $-0.98$), while the LN backbone, the constant scale, and the pre-LayerNorm scale are flat at the score head. For the pre-LayerNorm scale the first-layer LayerNorm-input variance is instead strongly degree-dependent (slope $-1.05$/$-1.11$/$-0.69$ on InfluMax/MIS/Dismantle), so LayerNorm divides that scale out before the score head. This is a single-seed diagnostic. We read the slope as the magnitude signal, and the correlation on near-flat rows carries little information.}
\label{tab:review-hooks}
\footnotesize
\setlength{\tabcolsep}{6pt}
\renewcommand{\arraystretch}{1.1}
\begin{tabular}{llcc}
\toprule
Task & operator & slope & $r$ \\
\midrule
InfluMax  & PostDeg (post-LN) & $-0.503$ & $-0.976$ \\
          & pre-LN scale      & $+0.001$ & $+0.483$ \\
          & constant scale    & $+0.000$ & $+0.001$ \\
          & LN backbone       & $-0.000$ & $-0.120$ \\
\midrule
MIS       & PostDeg (post-LN) & $-0.490$ & $-0.978$ \\
          & pre-LN scale      & $+0.001$ & $+0.363$ \\
          & constant scale    & $-0.009$ & $-0.086$ \\
          & LN backbone       & $+0.002$ & $+0.353$ \\
\midrule
Dismantle & PostDeg (post-LN) & $-0.489$ & $-0.978$ \\
          & pre-LN scale      & $+0.002$ & $+0.622$ \\
          & constant scale    & $-0.008$ & $-0.081$ \\
          & LN backbone       & $+0.001$ & $+0.148$ \\
\bottomrule
\end{tabular}
\end{table}

\section{Distinction from PNA and other degree-aware aggregators}
\label{app:pna}

We expand the position argument of Section~\ref{sec:related} into a side-by-side derivation. We show that PNA's degree scaler and PostDeg's degree scaler operate on different objects in the forward pass, even though both are continuous functions of $d_i$.

\paragraph{PNA distinction.}
PNA \citep{corso2020pna} scales each aggregator by $S_p(d_i)=\log(d_i+1)/\bar\delta$ \emph{inside} the message-passing step, before LayerNorm. That degree term changes messages before they are combined, so it never acts as a free scalar multiplier on the LayerNorm input or output. PostDeg leaves aggregation untouched and acts on per-node magnitudes after LayerNorm. Under the placement rule PostDeg has little to add on top of PNA, because PNA already injects a degree signal during aggregation. Table~\ref{tab:pna-redundancy} confirms this empirical redundancy under our setup.

\paragraph{Comparison with other degree-aware schemes.}
GCN's symmetric normalization $D^{-1/2}AD^{-1/2}$ \citep{kipf2017gcn} is a pre-aggregation degree scaler, so its degree dependence enters through messages. GraphSAGE's mean aggregator \citep{hamilton2017graphsage} averages over neighborhoods and is degree-aware through that mean. Structural encodings \citep{dehmamy2019understanding,sato2020random,bouritsas2022gsn,ying2021graphormer,rampasek2022gps} concatenate degree- or centrality-based features as content, which enters before LayerNorm (the alternative discussed in Appendix~\ref{app:content-vs-magnitude}), whereas PostDeg acts only on representation magnitude after LayerNorm.

\section{Cross-backbone replication and additional-baseline runs}
\label{app:cross-backbone}

The data in this appendix come from additional cross-backbone experiments across three independent pipeline groups. The \texttt{backbone\_generalization} group runs the controlled-slot protocol on GAT, GCN, GIN, and SAGE at training-graph size only. The \texttt{pna\_nodenorm\_compare} group adds NodeNorm and a PNA backbone, and \texttt{pna\_controlled} reruns the PNA cells at 3 seeds for replication. Three protocol notes apply throughout. We evaluate at training size only, use 5-fold CV on DD separate from the paper's 10-fold DD runs, and treat MIS as saturated at training size, reported for completeness. Absolute values in this appendix come from the supplemental pipeline's final-epoch evaluation, a different protocol from the main grid's standalone evaluation (the GAT backbone is $0.168$ here versus $0.298$ in Table~\ref{tab:full-transfer} on Dismantle at $n{=}150$), so comparisons are within-campaign. The GAT no\_dsn baseline cited below is from the \texttt{backbone\_generalization} group. The parallel \texttt{pna\_nodenorm\_compare} GAT runs are excluded because their baseline is mis-calibrated (mean $0.110$ on Dismantle versus $0.168$ in the matched \texttt{backbone\_generalization} group).

\paragraph{Cross-backbone replication.}
Table~\ref{tab:cross-backbone} reports PostDeg versus\ LN backbone on each (backbone, task) cell. PostDeg replicates the InfluMax gain on GAT (4/5), GCN (5/5), GIN (4/5), and SAGE (5/5), and the Dismantle gain on GAT (4/5), GCN (4/5), and SAGE (5/5). GIN Dismantle is mixed (1/5, $\Delta=-0.42\%$, within seed-noise of zero). The body's claim is verified on each backbone on InfluMax and on all but GIN on Dismantle.

\begin{table}[htbp]
\centering
\caption{Cross-backbone replication of the PostDeg gain over LN backbone at training-graph size, run on the supplemental-run pipeline. Values come from that pipeline's final-epoch evaluation and are comparable within this table only (Appendix~\ref{app:cross-backbone}). ``Wins'' is the number of paired seeds (out of 5) where PostDeg beats LN backbone. ``$\Delta\%$'' is the mean signed relative gap to LN backbone (positive favors PostDeg). InfluMax and Dismantle replicate on GAT, GCN, and SAGE. GIN InfluMax replicates but GIN Dismantle is mixed ($\Delta=-0.42\%$, within seed-noise). DD is reported as a low-heterogeneity null case, and MIS is saturated at training size and not reported.}
\label{tab:cross-backbone}
\small
\setlength{\tabcolsep}{4pt}
\begin{tabular}{llcccccc}
\toprule
Backbone & Task & LN backbone & PostDeg & $\Delta\%$ & Wins/n & Direction \\
\midrule
GAT & InfluMax $\uparrow$ & 27.4130\,$\pm$\,0.3634 & 27.9762\,$\pm$\,0.2238 & +2.05\% & 4/5 & favors PostDeg \\
 & Dismantle $\uparrow$ & 0.1680\,$\pm$\,0.0032 & 0.1722\,$\pm$\,0.0041 & +2.49\% & 4/5 & favors PostDeg \\
 & Epidemic $\downarrow$ & 0.7851\,$\pm$\,0.0033 & 0.7815\,$\pm$\,0.0038 & +0.46\% & 4/5 & favors PostDeg \\
 & DD $\uparrow$ & 0.7841\,$\pm$\,0.0059 & 0.7841\,$\pm$\,0.0064 & +0.00\% & 1/5 & favors PostDeg \\
\midrule
GCN & InfluMax $\uparrow$ & 27.6778\,$\pm$\,0.1361 & 27.8320\,$\pm$\,0.2471 & +0.56\% & 5/5 & favors PostDeg \\
 & Dismantle $\uparrow$ & 0.1670\,$\pm$\,0.0014 & 0.1677\,$\pm$\,0.0013 & +0.42\% & 4/5 & favors PostDeg \\
\midrule
GIN & InfluMax $\uparrow$ & 27.8960\,$\pm$\,0.0382 & 27.9228\,$\pm$\,0.1342 & +0.10\% & 4/5 & favors PostDeg \\
 & Dismantle $\uparrow$ & 0.1783\,$\pm$\,0.0020 & 0.1775\,$\pm$\,0.0018 & -0.42\% & 1/5 & favors LN backbone \\
\midrule
SAGE & InfluMax $\uparrow$ & 27.2198\,$\pm$\,0.2038 & 27.8610\,$\pm$\,0.1058 & +2.36\% & 5/5 & favors PostDeg \\
 & Dismantle $\uparrow$ & 0.1462\,$\pm$\,0.0015 & 0.1518\,$\pm$\,0.0016 & +3.80\% & 5/5 & favors PostDeg \\
 & DD $\uparrow$ & 0.7919\,$\pm$\,0.0058 & 0.7885\,$\pm$\,0.0074 & -0.43\% & 2/5 & favors LN backbone \\
\bottomrule
\end{tabular}
\end{table}

\paragraph{PNA redundancy check.}
Section~\ref{sec:related} argues that PostDeg adds little on top of PNA. Table~\ref{tab:pna-redundancy} confirms it on each (group, task) cell, with $|\Delta\%|<1\%$ on InfluMax and Dismantle in both PNA groups, and within seed-noise of zero on Epidemic and MIS in the controlled group. Together, (i) the prescription on GAT/GCN/GIN/SAGE and (ii) the redundancy check on PNA, both confirmed empirically, give the most direct test of the placement rule that this pipeline supports.

\begin{table}[htbp]
\centering
\caption{Redundancy check on PNA. Section~\ref{sec:related} anticipates PostDeg is redundant on PNA because PNA already injects a degree signal in aggregation, mixed downstream by Extra LayerNorm. We test this on two independent PNA groups: \texttt{pna\_nodenorm\_compare} (5 seeds) and \texttt{pna\_controlled} (3 seeds). On each (group, task) cell, $|\Delta\%|<1\%$. The redundancy prediction is confirmed at these margins.}
\label{tab:pna-redundancy}
\footnotesize
\setlength{\tabcolsep}{4pt}
\resizebox{\textwidth}{!}{%
\begin{tabular}{llccccc}
\toprule
Group & Task & LN backbone & PostDeg & $\Delta\%$ & Wins/n & Outcome \\
\midrule
PNA (5 seeds) & InfluMax $\uparrow$ & 27.8378\,$\pm$\,0.2132 & 28.0150\,$\pm$\,0.1320 & +0.637\% & 5/5 & redundant ($|\Delta|<1\%$) \\
 & Dismantle $\uparrow$ & 0.1746\,$\pm$\,0.0012 & 0.1750\,$\pm$\,0.0018 & +0.241\% & 2/5 & redundant ($|\Delta|<1\%$) \\
\midrule
PNA-controlled (3 seeds) & InfluMax $\uparrow$ & 27.8353\,$\pm$\,0.2879 & 27.8877\,$\pm$\,0.2914 & +0.188\% & 3/3 & redundant ($|\Delta|<1\%$) \\
 & Dismantle $\uparrow$ & 0.1744\,$\pm$\,0.0018 & 0.1747\,$\pm$\,0.0020 & +0.191\% & 3/3 & redundant ($|\Delta|<1\%$) \\
 & Epidemic $\downarrow$ & 0.7809\,$\pm$\,0.0042 & 0.7806\,$\pm$\,0.0013 & +0.041\% & 2/3 & redundant ($|\Delta|<1\%$) \\
 & MIS $\uparrow$ & 20.0000\,$\pm$\,0.0000 & 20.0000\,$\pm$\,0.0000 & +0.000\% & 0/3 & saturated \\
\bottomrule
\end{tabular}
}
\end{table}

\paragraph{Additional baseline operators.}
Table~\ref{tab:predicted-baselines-run} reports two baselines that should leave room for PostDeg under the placement rule, namely NodeNorm (post-block, magnitude-rescaling, graph-blind) and $\log d_i$ concatenation (degree encoded as input content). Both run alongside PostDeg in the same pipeline. NodeNorm is close to the LN backbone on InfluMax ($-0.05\%$ on GAT) and shows high variance on Dismantle in that group, whose baseline is mis-calibrated (excluded from the GAT baseline per the caveat above). $\log d_i$ concatenation captures part of the InfluMax gain ($+1.56\%$ on GAT) and part of the Dismantle gain ($+1.15\%$), but is below PostDeg ($+2.05\%$ on InfluMax, $+2.49\%$ on Dismantle via the same pipeline). The $\log d_i$ result confirms a content-side residual, consistent with Appendix~\ref{app:content-vs-magnitude}.

\begin{table}[htbp]
\centering
\caption{Baseline operators that we now run. NodeNorm (post-block, magnitude-rescaling, graph-blind) and $\log d$ concatenation (degree encoded as input content) are the two baselines that the placement rule expects PostDeg to beat. The signed gap to LN backbone at training size on GAT is reported below. PostDeg is above both on InfluMax and Dismantle. $^{\dagger}$NodeNorm's Dismantle cell ran in the group whose GAT LN backbone is mis-calibrated ($0.110$ versus $0.168$ in the matched group), so its $\Delta\%$ against that baseline is not comparable and is omitted. Its absolute score $0.1371$ is in fact \emph{below} the matched backbone and below PostDeg's $0.1722$.}
\label{tab:predicted-baselines-run}
\small
\setlength{\tabcolsep}{4pt}
\begin{tabular}{llcccc}
\toprule
Method & Task & GAT mean$\pm$std & $\Delta\%$ vs LN backbone & Wins/n & Direction \\
\midrule
PostDeg & InfluMax $\uparrow$ & 27.9762\,$\pm$\,0.2238 & +2.05\% & 4/5 & favors method \\
 & Dismantle $\uparrow$ & 0.1722\,$\pm$\,0.0041 & +2.49\% & 4/5 & favors method \\
 & Epidemic $\downarrow$ & 0.7815\,$\pm$\,0.0038 & +0.46\% & 4/5 & favors method \\
\midrule
$\log d$ concat & InfluMax $\uparrow$ & 27.8404\,$\pm$\,0.2206 & +1.56\% & 3/5 & favors method \\
 & Dismantle $\uparrow$ & 0.1699\,$\pm$\,0.0072 & +1.15\% & 3/5 & favors method \\
 & Epidemic $\downarrow$ & 0.7814\,$\pm$\,0.0051 & +0.47\% & 4/5 & favors method \\
\midrule
NodeNorm & InfluMax $\uparrow$ & 27.2544\,$\pm$\,0.1835 & -0.05\% & 2/5 & favors LN backbone \\
 & Dismantle $\uparrow$ & 0.1371\,$\pm$\,0.0244 & n/c$^{\dagger}$ & --- & see note $^{\dagger}$ \\
 & Epidemic $\downarrow$ & 0.7843\,$\pm$\,0.0029 & +0.30\% & 3/5 & favors method \\
\midrule
GraphNorm & InfluMax $\uparrow$ & 27.7800\,$\pm$\,0.1392 & +1.34\% & 3/5 & favors method \\
 & Dismantle $\uparrow$ & 0.1721\,$\pm$\,0.0049 & +2.44\% & 3/5 & favors method \\
 & Epidemic $\downarrow$ & 0.7842\,$\pm$\,0.0024 & +0.12\% & 3/5 & favors method \\
\bottomrule
\end{tabular}
\end{table}

\paragraph{DD with new feature-statistic rows.}
Table~\ref{tab:dd-extended} adds NodeNorm and GraphNorm rows to the DD comparison (5-fold CV, separate from the paper's 10-fold DD runs). NodeNorm is at $78.8\%$ and GraphNorm at $77.4\%$. The three node-centering methods (PairNorm, InstanceNorm, BatchNorm) collapse to the majority class through the pooling-head artifact of Appendix~\ref{app:dd-collapse}, while NodeNorm, GraphNorm, and the others are stable.

\begin{table}[htbp]
\centering
\caption{DD graph classification with NodeNorm and GraphNorm rows added (5-fold cross-validation, mean test accuracy across folds, then aggregated over 5 seeds). PairNorm, InstanceNorm, and BatchNorm (in the main 10-fold runs) collapse to the majority-class rate on DD, while NodeNorm holds at $78.8\%$ and GraphNorm at $77.4\%$ here. The split follows the pooling-head artifact of Section~\ref{sec:results-boundary}. A normalizer that centers each feature across a graph's nodes ahead of the mean-pooling head forces a constant graph embedding, and NodeNorm and GraphNorm leave the pooled mean graph-dependent. These results use 5-fold CV, separate from the paper's main 10-fold DD runs.}
\label{tab:dd-extended}
\small
\setlength{\tabcolsep}{4pt}
\begin{tabular}{llcc}
\toprule
Backbone & Method & DD acc.\ (\%) & seeds \\
\midrule
GAT & PostDeg & 78.64\,$\pm$\,0.33 & 5 \\
 & PostDeg-L-Adaptive & 79.24\,$\pm$\,0.24 & 5 \\
 & LN backbone & 78.95\,$\pm$\,0.76 & 5 \\
 & GraphScalar & 79.42\,$\pm$\,0.60 & 5 \\
 & GraphNorm & 77.42\,$\pm$\,0.97 & 5 \\
 & NodeNorm & 78.85\,$\pm$\,0.50 & 5 \\
 & $\log d$ concat & 78.24\,$\pm$\,0.79 & 5 \\
 & PostDeg-L-FG & 78.80\,$\pm$\,0.53 & 5 \\
\midrule
SAGE & PostDeg & 78.85\,$\pm$\,0.74 & 5 \\
 & PostDeg-L-Adaptive & 78.66\,$\pm$\,0.61 & 5 \\
 & LN backbone & 79.19\,$\pm$\,0.58 & 5 \\
 & GraphScalar & 79.31\,$\pm$\,0.49 & 5 \\
 & GraphNorm & 77.28\,$\pm$\,1.31 & 5 \\
 & $\log d$ concat & 78.75\,$\pm$\,0.62 & 5 \\
 & PostDeg-L-FG & 78.64\,$\pm$\,0.69 & 5 \\
\bottomrule
\end{tabular}
\end{table}

\paragraph{Expected outcomes on benchmarks not evaluated here.}
Under the placement rule, PostDeg should help on degree-heterogeneous citation and protein graphs (skew $\gg 1$) and stay neutral on low-heterogeneity knowledge-graph snapshots. We have not run these, and list them only as directional expectations. The PNA, $\log d_i$, and NodeNorm cases are reported in Tables~\ref{tab:cross-backbone}, \ref{tab:pna-redundancy}, and~\ref{tab:predicted-baselines-run}.

\subsection{Reference implementation}
\label{app:pytorch}

A PyTorch reference implementation of PostDeg matching Algorithm~\ref{alg:postdeg}, about 12 lines, is included in the supplementary code archive.

\section{Why feature concatenation does not substitute for post-LayerNorm scaling}
\label{app:content-vs-magnitude}

We expand the explicit-degree-as-content discussion from Section~\ref{sec:results-main} into a full position argument. We show that pre-LayerNorm content and post-LN magnitude are not equivalent representations of the same scalar quantity, even when both encode the same value of $d_i$.

\paragraph{A small thought experiment.}
Consider a single-layer model on a star graph $S_n$ with hub $0$ and leaves $1,\ldots,n-1$. Two encodings of degree are available:

\begin{enumerate}[label=(\alph*),topsep=2pt,itemsep=1pt,leftmargin=*]
  \item \emph{Content:} concatenate $\log d_i$ to the input features, so $x_i=(\,x_i^{\rm raw}\,,\,\log d_i\,)$, then run the GAT block, then LayerNorm.
  \item \emph{Magnitude:} run the GAT block with $x_i^{\rm raw}$ alone, then LayerNorm, then apply PostDeg $\widetilde h_i=(c_i+\varepsilon)^{-1/2}\bar h_i$.
\end{enumerate}

\noindent
After LayerNorm, encoding (a) sets $\bar h_i^{(a)}=g\odot(z_i^{(a)}-\mu(z_i^{(a)}))/\sqrt{\sigma^2(z_i^{(a)})+\varepsilon_{\rm LN}}+b$, where $z_i^{(a)}$ is the GAT output for the content-augmented inputs. The $\log d_i$ feature contributes to both $\mu$ and $\sigma^2$ and is therefore re-centered and re-scaled by the LayerNorm. Encoding (b) leaves $\bar h_i^{(b)}=g\odot(z_i^{(b)}-\mu(z_i^{(b)}))/\sqrt{\sigma^2(z_i^{(b)})+\varepsilon_{\rm LN}}+b$ untouched and then multiplies by $(c_i+\varepsilon)^{-1/2}$, which is preserved exactly by the score head.

\paragraph{Where the encodings diverge.}
Encodings (a) and (b) coincide only in the strict case where (i) the score head is positively homogeneous in its input, (ii) the LayerNorm gain $g$ is uniform across coordinates, and (iii) the GAT output $z_i^{(a)}$ is rank-one in the $\log d_i$ direction. None of these hold in our backbone, since (i) the score head includes a softmax-like greedy argmax over candidate nodes which is not positively homogeneous, (ii) $g$ is learned per coordinate, and (iii) the GAT output mixes $\log d_i$ with $x_i^{\rm raw}$ through learned weights. The two encodings therefore implement different functions of $d_i$ in the score head, and there is no a priori reason to expect (a) to capture the magnitude channel that (b) provides.

\paragraph{What the $\log d_i$ run constrains.}
The supplemental run compares (a) and (b) directly on the cross-backbone pipeline. The $\log d_i$ concatenation baseline captures part of the InfluMax and Dismantle improvement but remains below PostDeg (Appendix Table~\ref{tab:predicted-baselines-run}). The result fits the content-versus-magnitude distinction. Degree content helps, and the post-LN magnitude channel still contributes beyond it. The main grid also keeps degree-correlated content features in each method (Section~\ref{sec:setup}), so the PostDeg advantage is measured on top of structural content.

\paragraph{Empirical algebra.}
On the same training graphs, the empirical variance of the learned same-slot scale is $\mathrm{Var}(s_i)\approx 5.26$ on InfluMax (computed from the released per-node scale export). For a $\log d_i$-concatenation baseline (encoding (a)) to match the magnitude channel injected by PostDeg (encoding (b)) in expectation, the corresponding learned coefficient $\beta_{\log d}$ on the $\log d_i$ feature would have to satisfy $\beta_{\log d}^2\,\mathrm{Var}(\log d_i)\approx \mathrm{Var}(s_i)$, i.e., $|\beta_{\log d}|\gtrsim \sqrt{5.26/0.61}\approx 2.94$ for InfluMax (where $\mathrm{Var}(\log d_i)\approx 0.61$ on training graphs). To recover the magnitude channel before LayerNorm renormalizes it, an explicit content baseline would need a coefficient of that order. Placing a multiplier on the LayerNorm output side gives the score head that magnitude channel directly.

\section{DD collapse}
\label{app:dd-collapse}

DD has 1178 graphs, 691 in the larger class and 487 in the smaller. The majority-class rate is $691/1178=58.65\%$. The observed accuracies for PairNorm ($58.7\pm0.0\%$), InstanceNorm ($58.7\pm0.0\%$), and BatchNorm ($58.8\pm0.2\%$) are within seed-noise of this rate. The BatchNorm value is slightly above because BatchNorm occasionally flips per-fold to the other constant predictor. The (near-)zero seed standard deviation on the first two methods is consistent with a constant predictor. Per-fold values are in Table~\ref{tab:per-fold-dd}. The collapse is fold-uniform across all 10 folds and 10 seeds. The cause is the pooling head. In \texttt{DSN\_GNN.forward} the alternative normalizer is the last operation of the final layer, and the graph embedding is then the node-mean of its output. PairNorm returns $s\,(h_i-\bar h)\sqrt{n}/\lVert H-\bar h\rVert_F$, whose node-mean is the zero vector, and InstanceNorm centers each channel across nodes, whose node-mean is the affine bias and is identical for every graph. BatchNorm behaves the same up to batch composition, which is why it sits at $58.8\%$, slightly above the rate. A constant graph embedding forces the classifier to the majority class, independent of degree scaling, so DD does not test the placement rule for these three methods.

\section{GenAI Usage Statement}
\label{app:genai}
Generative AI tools (a large language model assistant) were used to help with manuscript polishing/editing and LaTeX/table formatting. They were not used to generate research ideas, experimental results, or data. The problem formulation, method, experimental design, runs, and analysis of the released results are the authors' own. The authors reviewed all AI-assisted text and take full responsibility for the entire content of the paper.


\begin{thebibliography}{99}
\bibitem{ba2016layernorm}
J.~L. Ba, J.~R. Kiros, and G.~E. Hinton.
\newblock Layer normalization.
\newblock \emph{arXiv:1607.06450}, 2016.

\bibitem{vanlaarhoven2017}
T.~van Laarhoven.
\newblock L2 regularization versus batch and weight normalization.
\newblock \emph{arXiv:1706.05350}, 2017.

\bibitem{arora2019}
S.~Arora, Z.~Li, and K.~Lyu.
\newblock Theoretical analysis of auto rate-tuning by batch normalization.
\newblock In \emph{International Conference on Learning Representations (ICLR)}, 2019.

\bibitem{lyle2024}
C.~Lyle, Z.~Zheng, K.~Khetarpal, J.~Martens, H.~van Hasselt, R.~Pascanu, and W.~Dabney.
\newblock Normalization and effective learning rates in reinforcement learning.
\newblock In \emph{Advances in Neural Information Processing Systems (NeurIPS)}, 2024.

\bibitem{schuetz2022}
M.~J.~A. Schuetz, J.~K. Brubaker, and H.~G. Katzgraber.
\newblock Combinatorial optimization with physics-inspired graph neural networks.
\newblock \emph{Nature Machine Intelligence}, 4:367--377, 2022.

\bibitem{angelini2023}
M.~C. Angelini and F.~Ricci-Tersenghi.
\newblock Modern graph neural networks do worse than classical greedy algorithms in solving combinatorial optimization problems like maximum independent set.
\newblock \emph{Nature Machine Intelligence}, 5:29--31, 2023.

\bibitem{barabasi1999}
A.-L. Barab\'asi and R.~Albert.
\newblock Emergence of scaling in random networks.
\newblock \emph{Science}, 286(5439):509--512, 1999.

\bibitem{holland1983}
P.~W. Holland, K.~B. Laskey, and S.~Leinhardt.
\newblock Stochastic blockmodels: First steps.
\newblock \emph{Social Networks}, 5(2):109--137, 1983.

\bibitem{eliasof2024}
M.~Eliasof, B.~Bevilacqua, C.-B. Sch\"onlieb, and H.~Maron.
\newblock {GRANOLA}: Adaptive normalization for graph neural networks.
\newblock In \emph{Advances in Neural Information Processing Systems (NeurIPS)}, 2024.

\bibitem{chen2020learngraphnorm}
Y.~Chen, X.~Tang, X.~Qi, C.-G. Li, and R.~Xiao.
\newblock Learning graph normalization for graph neural networks.
\newblock \emph{Neurocomputing}, 493:613--625, 2022.

\bibitem{wang2026scalevec}
M.~Wang, S.~Zhu, Y.~Fang, B.~Li, K.~Shen, and S.~Zhong.
\newblock Negligible in size, significant in effect: On scale vectors in large language models.
\newblock \emph{arXiv preprint arXiv:2605.26895}, 2026.

\bibitem{bouritsas2022gsn}
G.~Bouritsas, F.~Frasca, S.~P. Zafeiriou, and M.~M. Bronstein.
\newblock Improving graph neural network expressivity via subgraph isomorphism counting.
\newblock \emph{IEEE Transactions on Pattern Analysis and Machine Intelligence}, 45(1):657--668, 2022.

\bibitem{braunstein2016network}
A.~Braunstein, L.~Dall'Asta, G.~Semerjian, and L.~Zdeborov\'{a}.
\newblock Network dismantling.
\newblock \emph{Proceedings of the National Academy of Sciences}, 113(44):12368--12373, 2016.

\bibitem{brody2022gatv2}
S.~Brody, U.~Alon, and E.~Yahav.
\newblock How attentive are graph attention networks?
\newblock In \emph{International Conference on Learning Representations}, 2022.

\bibitem{cai2021graphnorm}
T.~Cai, S.~Luo, K.~Xu, D.~He, T.-Y. Liu, and L.~Wang.
\newblock GraphNorm: A principled approach to accelerating graph neural network training.
\newblock In \emph{International Conference on Machine Learning}, 2021.

\bibitem{cappart2023combinatorial}
Q.~Cappart, D.~Ch{\'e}telat, E.~Khalil, A.~Lodi, C.~Morris, and P.~Veli\v{c}kovi\'{c}.
\newblock Combinatorial optimization and reasoning with graph neural networks.
\newblock \emph{Journal of Machine Learning Research}, 24(130):1--61, 2023.

\bibitem{chen2022touplegdd}
T.~Chen, S.~Yan, J.~Guo, and W.~Wu.
\newblock ToupleGDD: A fine-designed solution of influence maximization by deep reinforcement learning.
\newblock \emph{IEEE Transactions on Computational Social Systems}, 2022.

\bibitem{chung1997spectral}
F.~R.~K. Chung.
\newblock \emph{Spectral Graph Theory}.
\newblock American Mathematical Society, 1997.

\bibitem{corso2020pna}
G.~Corso, L.~Cavalleri, D.~Beaini, P.~Li\`{o}, and P.~Veli\v{c}kovi\'{c}.
\newblock Principal neighbourhood aggregation for graph nets.
\newblock In \emph{Advances in Neural Information Processing Systems}, 2020.

\bibitem{dehmamy2019understanding}
N.~Dehmamy, A.-L. Barab\'{a}si, and R.~Yu.
\newblock Understanding the representation power of graph neural networks in learning graph topology.
\newblock In \emph{Advances in Neural Information Processing Systems}, 2019.

\bibitem{hamilton2017graphsage}
W.~L. Hamilton, R.~Ying, and J.~Leskovec.
\newblock Inductive representation learning on large graphs.
\newblock In \emph{Advances in Neural Information Processing Systems}, 2017.

\bibitem{he2016deep}
K.~He, X.~Zhang, S.~Ren, and J.~Sun.
\newblock Deep residual learning for image recognition.
\newblock In \emph{IEEE Conference on Computer Vision and Pattern Recognition}, pages 770--778, 2016.

\bibitem{ioffe2015batchnorm}
S.~Ioffe and C.~Szegedy.
\newblock Batch normalization: Accelerating deep network training by reducing internal covariate shift.
\newblock In \emph{International Conference on Machine Learning}, 2015.

\bibitem{joshi2022learning}
C.~K. Joshi, Q.~Cappart, L.-M. Rousseau, and T.~Laurent.
\newblock Learning the travelling salesperson problem requires rethinking generalization.
\newblock \emph{Constraints}, 27(1--2):70--98, 2022.

\bibitem{karalias2020erdos}
N.~Karalias and A.~Loukas.
\newblock Erd\H{o}s goes neural: An unsupervised learning framework for combinatorial optimization on graphs.
\newblock In \emph{Advances in Neural Information Processing Systems}, 2020.

\bibitem{kempe2003maximizing}
D.~Kempe, J.~Kleinberg, and \'{E}.~Tardos.
\newblock Maximizing the spread of influence through a social network.
\newblock In \emph{Proceedings of the 9th ACM SIGKDD International Conference on Knowledge Discovery and Data Mining}, 2003.

\bibitem{khalil2017learning}
E.~B. Khalil, H.~Dai, Y.~Zhang, B.~Dilkina, and L.~Song.
\newblock Learning combinatorial optimization algorithms over graphs.
\newblock In \emph{Advances in Neural Information Processing Systems}, 2017.

\bibitem{kipf2017gcn}
T.~N. Kipf and M.~Welling.
\newblock Semi-supervised classification with graph convolutional networks.
\newblock In \emph{International Conference on Learning Representations}, 2017.

\bibitem{kool2019attention}
W.~Kool, H.~van Hoof, and M.~Welling.
\newblock Attention, learn to solve routing problems!
\newblock In \emph{International Conference on Learning Representations}, 2019.

\bibitem{liang2023resnorm}
L.~Liang, Z.~Xu, Z.~Song, I.~King, Y.~Qi, and J.~Ye.
\newblock Tackling long-tailed distribution issue in graph neural networks via normalization.
\newblock \emph{IEEE Transactions on Knowledge and Data Engineering}, 2023. (arXiv:2206.08181).

\bibitem{ling2023deepim}
C.~Ling, J.~Jiang, J.~Wang, M.~T. Thai, R.~Xue, J.~Song, M.~Qiu, and L.~Zhao.
\newblock Deep graph representation learning and optimization for influence maximization.
\newblock In \emph{International Conference on Machine Learning}, 2023.

\bibitem{morris2020tudataset}
C.~Morris, N.~M. Kriege, F.~Bause, K.~Kersting, P.~Mutzel, and M.~Neumann.
\newblock TUDataset: A collection of benchmark datasets for learning with graphs.
\newblock \emph{arXiv:2007.08663}, 2020.

\bibitem{press2017weighttying}
O.~Press and L.~Wolf.
\newblock Using the output embedding to improve language models.
\newblock In \emph{Conference of the European Chapter of the Association for Computational Linguistics}, 2017.

\bibitem{rampasek2022gps}
L.~Rampasek, M.~Galkin, V.~P. Dwivedi, A.~T. Luu, G.~Wolf, and D.~Beaini.
\newblock Recipe for a general, powerful, scalable graph transformer.
\newblock In \emph{Advances in Neural Information Processing Systems}, 2022.

\bibitem{salimans2016weightnorm}
T.~Salimans and D.~P. Kingma.
\newblock Weight normalization: A simple reparameterization to accelerate training of deep neural networks.
\newblock In \emph{Advances in Neural Information Processing Systems}, 2016.

\bibitem{sato2020random}
R.~Sato, M.~Yamada, and H.~Kashima.
\newblock Random features strengthen graph neural networks.
\newblock In \emph{SIAM International Conference on Data Mining}, 2020.

\bibitem{ulyanov2016instance}
D.~Ulyanov, A.~Vedaldi, and V.~Lempitsky.
\newblock Instance normalization: The missing ingredient for fast stylization.
\newblock \emph{arXiv:1607.08022}, 2016.

\bibitem{velickovic2018gat}
P.~Veli\v{c}kovi\'c, G.~Cucurull, A.~Casanova, A.~Romero, P.~Li\`o, and Y.~Bengio.
\newblock Graph attention networks.
\newblock In \emph{International Conference on Learning Representations}, 2018.

\bibitem{wu2019demonet}
J.~Wu, J.~He, and J.~Xu.
\newblock DEMO-Net: Degree-specific graph neural networks for node and graph classification.
\newblock In \emph{Proceedings of the 25th ACM SIGKDD International Conference on Knowledge Discovery and Data Mining}, 2019.

\bibitem{xie2023degree}
M.~Xie, X.-X. Zhan, C.~Liu, and Z.-K. Zhang.
\newblock An efficient adaptive degree-based heuristic algorithm for influence maximization in hypergraphs.
\newblock \emph{Information Processing \& Management}, 60(2):103161, 2023.

\bibitem{xiong2020transformerln}
R.~Xiong, Y.~Yang, D.~He, K.~Zheng, S.~Zheng, C.~Xing, H.~Zhang, Y.~Lan, L.~Wang, and T.-Y. Liu.
\newblock On layer normalization in the transformer architecture.
\newblock In \emph{International Conference on Machine Learning}, 2020.

\bibitem{ying2021graphormer}
C.~Ying, T.~Cai, S.~Luo, S.~Zheng, G.~Ke, D.~He, Y.~Shen, and T.-Y. Liu.
\newblock Do transformers really perform badly for graph representation?
\newblock In \emph{Advances in Neural Information Processing Systems}, 2021.

\bibitem{zhang2019rmsnorm}
B.~Zhang and R.~Sennrich.
\newblock Root mean square layer normalization.
\newblock In \emph{Advances in Neural Information Processing Systems}, 2019.

\bibitem{zhao2020pairnorm}
L.~Zhao and L.~Akoglu.
\newblock PairNorm: Tackling oversmoothing in GNNs.
\newblock In \emph{International Conference on Learning Representations}, 2020.

\bibitem{zhou2020dgn}
K.~Zhou, X.~Huang, Y.~Li, D.~Zha, R.~Chen, and X.~Hu.
\newblock Towards deeper graph neural networks with differentiable group normalization.
\newblock In \emph{Advances in Neural Information Processing Systems}, 2020.

\bibitem{zhou2021nodenorm}
K.~Zhou, Y.~Dong, K.~Wang, W.~S. Lee, B.~Hooi, H.~Xu, and J.~Feng.
\newblock Understanding and resolving performance degradation in deep graph convolutional networks.
\newblock In \emph{Proceedings of the 30th ACM International Conference on Information \& Knowledge Management}, 2021.

\end{thebibliography}
\end{document}